\def\red#1{\textcolor{red}{#1}}
\def\beq{\begin{equation} }\def\eeq{\end{equation} }\def\1{\mathbf{1}}
\numberwithin{equation}{section}
\newtheorem{lemma}{Lemma}
\newtheorem{theorem}{Theorem}
\newtheorem{proposition}{Proposition}
\newtheorem{corollary}[theorem]{Corollary}
\newtheorem{remark}{Remark}
\newcommand{\cO}{\mathcal{O}}
\newcommand{\EE}{\mathbb{E}}
\newcommand{\lrs}{\texttt{eigencurve} }
\newcommand{\Lrs}{\texttt{Eigencurve} }
\newcommand{\LRS}{\texttt{Eigencurve} }
\newcommand{\lrss}{\texttt{eigencurve}}
\newcommand{\coss}{cosine decay }
\newcommand{\Coss}{Cosine Decay }
\newcommand{\cosss}{cosine decay}
\newcommand{\ti}[1]{\tilde{#1}}
\def\diag{\mathrm{diag}}
\newcommand{\norm}[1]{\left\|#1\right\|}
\def\tr{\mathrm{tr}}
\DeclarePairedDelimiter\floor{\lfloor}{\rfloor}
\title{Eigencurve: Optimal Learning Rate Schedule for SGD on Quadratic Objectives with Skewed Hessian Spectrums}
\author{%
  Rui Pan$^{1}$\thanks{Equal contribution.}\protect\phantom{\footnotesize 1},
  \ 
  Haishan Ye$^{2}$ \footnotemark[1]\protect\phantom{\footnotesize 1}\thanks{Corresponding author is Haishan Ye.}\protect\phantom{\footnotesize 1},
  \ 
  Tong Zhang$^{1}$
  \footnotemark[1]\protect\phantom{\footnotesize 1}\thanks{Jointly with Google Research.}
  \\
  $^{1}$The Hong Kong University of Science and Technology
  \\
  $^{2}$Xi'an Jiaotong University \\
  \small
  \texttt{rpan@connect.ust.hk},
  \texttt{yehaishan@xjtu.edu.cn}
}
\begin{document}
\maketitle

\begin{abstract}
 Learning rate schedulers have been widely adopted in training deep neural networks. Despite their practical importance, there is a discrepancy between its practice and its theoretical analysis. For instance, it is not known what schedules of SGD achieve best convergence, even for simple problems such as optimizing quadratic objectives. In this paper, we propose Eigencurve, the first family of learning rate schedules that can achieve minimax optimal convergence rates (up to a constant) for SGD on quadratic objectives when the eigenvalue distribution of the underlying Hessian matrix is skewed. The condition is quite common in practice. Experimental results show that Eigencurve can significantly outperform step decay in image classification tasks on CIFAR-10, especially when the number of epochs is small. Moreover, the theory inspires two simple learning rate schedulers for practical applications that can approximate eigencurve.
 For some problems, the optimal shape of the proposed schedulers resembles that of cosine decay, which sheds light to the success of cosine decay for such situations. For other situations, the proposed schedulers are superior to cosine decay.
\end{abstract}

\section{Introduction}

Many machine learning models can be represented as the following optimization problem:
\begin{align}
	\min_w f(w) \triangleq \frac{1}{n} \sum_{i=1}^n f_i(w),
\end{align} 
such as logistic regression, deep neural networks.
To solve the above problem, stochastic gradient descent (SGD)~\citep{robbins1951stochastic} has been widely adopted due to its computation efficiency in large-scale learning problems~\citep{bottou2008tradeoff}, especially for training deep neural networks.

Given the popularity of SGD in this field, different learning rate schedules have been proposed to further improve its convergence rates.
Among them, the most famous and widely used ones are inverse time decay, step decay~\citep{goffin1977convergence}, and cosine scheduler~\citep{LoshchilovH17}.
The learning rates generated by the inverse time decay scheduler depends on the
current iteration number inversely. Such a scheduling strategy comes from the
theory of SGD on strongly convex functions, and is extended to non-convex objectives like neural networks while still achieving good performance.  Step decay scheduler keeps the learning rate piecewise constant and decreases it by a factor after a given amount of epochs. It is theoretically proved in \citet{ge2019step} that when the objective is quadratic, step decay scheduler outperforms inverse time decay. Empirical results are also provided in the same work to demonstrate the better convergence property of step decay in training neural networks when compared with inverse time decay. However, even step decay is proved to be near optimal on quadratic objectives, it is \emph{not} truly optimal. There still exists a $\log T$ gap away from the minimax optimal convergence rate, which turns out to be non-trivial in a wide range of settings and may greatly impact step decay's empirical performance.
Cosine decay scheduler~\citep{LoshchilovH17} generates cosine-like learning rates in the range $[0,T]$, with $T$ being the maximum iteration. It is a heuristic scheduling strategy which relies on the observation that good performance in practice can be achieved via slowly decreasing the learning rate in the beginning and ``refining'' the solution in the end with a very small learning rate. Its convergence property on smooth non-convex functions has been shown in~\citet{li2021second}, but the provided bound is still not tight enough to explain its success in practice.

Except \coss scheduler, all aforementioned learning rate schedulers have (or will have) a tight convergence bound on quadratic objectives. In fact, studying their convergence property on quadratic objective functions is quite important for understanding their behaviors in general non-convex problems. Recent studies in Neural Tangent Kernel (NTK)~\citep{arora2019exact, jacot2020neural} suggest that when neural networks are sufficiently wide, the gradient descent dynamic of neural networks can be
approximated by NTK. In particular, when the loss function is least-square loss, neural network's inference is equivalent to kernel ridge regression with respect to the NTK in expectation.
In other words, for regression tasks, the non-convex objective in neural networks resembles quadratic objectives when the network is wide enough.

The existence of $\log T$ gap in step decay's convergence upper bound, which will be proven to be tight in a wide range of settings, implies that there is still room for improvement in theory. Meanwhile, the existence of \coss scheduler, which has no strong theoretical convergence guarantees but possesses good empirical performance in certain tasks, suggests that its convergence rate may depend on some specific properties of the objective determined by the network and dataset in practice. Hence it is natural to ask what those key properties may be, and whether it is possible to find theoretically-optimal schedulers whose empirical performance are comparable to \coss if those properties are available. 
In this paper, we offer an answer to these questions. We first derive a novel eigenvalue-distribution-based learning rate
scheduler called \lrs for quadratic functions. Combining with eigenvalue
distributions of different types of networks, new neural-network-based learning
rate schedulers can be generated based on our proposed paradigm, which achieve
better convergence properties than step decay in~\citet{ge2019step}.  Specifically, \lrs closes the $\log T$ gap in step decay and reaches minimax optimal convergence rates if the Hessian spectrum is skewed. We summarize the main contributions of this paper as follows.

\begin{enumerate}
	\item To the best of our knowledge, this is the first work that incorporates the eigenvalue distribution of objective function's Hessian matrix into designing learning rate schedulers.
	Accordingly, based on the eigenvalue distribution of the Hessian,  we propose a novel eigenvalue distribution based learning rate scheduler named \lrss.
	\item Theoretically, \lrs can achieve optimal convergence rate (up to a constant) for SGD on quadratic objectives when the eigenvalue distribution of the Hessian is skewed. Furthermore, even when the Hessian is not skewed, \lrs can still achieve no worse convergence rate than the step decay schedule in \citet{ge2019step}, whose convergence rate are proven to be sub-optimal in a wide range of settings.
	\item Empirically, on image classification tasks, \lrs achieves optimal convergence rate for several models on CIFAR-10 and ImageNet if the loss can be approximated by quadratic objectives. Moreover, it obtains much better performance than step decay on CIFAR-10, especially when the number of epochs is small.
	\item Intuitively, our learning rate scheduler sheds light on the theoretical property of cosine decay and provides a perspective of understanding the reason why it can achieve good performance on image classification tasks. The same idea has been used to inspire and discover several simple families of schedules that works in practice.

\end{enumerate}

\paragraph{Problem Setup}

For the theoretical analysis and the aim to derive our eigenvalue-dependent learning rate schedulers, we mainly focus on the quadratic function, that is, 
\begin{align}
	\label{eq:prob}
	 \min_w f(w) \triangleq \EE_\xi\left[ f(w, \xi)\right], \mbox{ where } f(w, \xi) = \frac{1}{2} w^\top H(\xi) w - b(\xi)^\top w,
\end{align}
where $\xi$ denotes the data sample. Hence, the Hessian of $f(w)$ is
\begin{equation}
	H = \EE_\xi\left[H(\xi)\right].
\end{equation}
Letting us denote $b = \EE_\xi[b(\xi)]$, we can obtain the optima of problem \eqref{eq:prob}
\begin{align}
\label{eq:optima}
	w_* = H^{-1} b.
\end{align}
Given an initial iterate $w_0$ and the learning rate sequence $\{\eta_t\}$, the stochastic gradient update is
\begin{align}
	\label{eq:proc}
	w_{t+1} = w_t - \eta_t \nabla f(w_t,\xi) =w_t - \eta_t (H(\xi)w_t - b(\xi)).
\end{align}
We denote that
\begin{equation}
    \label{eq:nt_labmda}
	n_t = Hw_t - b - \left(H(\xi)w_t - b(\xi)\right), \quad \mu \triangleq \lambda_{\min}(H), \quad L \triangleq \lambda_{\max}(H), \quad\mbox{and},\quad \kappa \triangleq L/\mu.
\end{equation}
In this paper, we assume that
\begin{align}
	\label{eq:var_ass}
	\EE_{\xi} \left[n_t n_t^\top \right] \preceq \sigma^2 H.
\end{align}
The reason for this assumption is presented in Appendix~\ref{appendix:why_use_var_ass}.

\paragraph{Related Work}



\begin{wraptable}{r}{8cm}
  \scriptsize
  \caption{Convergence rate of SGD with common schedulers on quadratic objectives.}
  \label{tab:scheduler_intro}
  \begin{center}
    \begin{tabular}{cc}
      \toprule
      Scheduler
      & \makecell{Convergence rate of SGD in\\quadratic objectives}
      \\ \midrule
      Constant
      & Not guaranteed to converge

      \\ \midrule
      Inverse Time Decay
      & $\Theta\left(\frac{d \sigma^2}{T} \cdot \red{\kappa}\right)$

      \\ \midrule
      Step Decay
      & \makecell{
        $\Theta\left(\frac{d \sigma^2}{T} \cdot \red{\log T}\right)$
        \\
        (\citet{ge2019step, wu2021last}; \\
        This work - Theorem~\ref{thm:step_lower_bound})
      }

      \\ \midrule
      \LRS
      & \makecell{
        $\cO\left(\frac{d \sigma^2}{T} \right)$ with skewed Hessian spectrums,
        \\
        $\cO\left(\frac{d \sigma^2}{T} \cdot \red{\log \kappa} \right)$ in worst case
        \\
        (This work - Theorem~\ref{thm:main_1}, Corollary~\ref{cor:optimal}, ~\ref{cor:main})
      }

      \\
      \bottomrule
    \end{tabular}
  \end{center}
\end{wraptable}

In convergence analysis, one key property that separates SGD from vanilla gradient descent is that in SGD, noise in gradients dominates. In gradient descent (GD), constant learning rate can achieve linear convergence $\cO(c^T)$ with $0 < c < 1$ for strongly convex objectives, i.e. obtaining $f(w^{(t)}) - f(w^*) \le \epsilon$ in $\cO(\log(\frac{1}{\epsilon}))$ iterations. However, in SGD, $f(w^{(t)})$ cannot even be guaranteed to converge to $f(w^*)$ due to the existence of gradient noise \citep{bottou2018optimization}. Intuitively, this noise leads to a variance proportional to the learning rate size, so constant learning rate will always introduce a $\Omega(\eta_t) = \Omega(\eta_0)$ gap when compared with the convergence rate of GD. Fortunately, inverse time decay scheduler solves the problem by decaying the learning rate inversely proportional to the iteration number $t$, which achieves $\cO(\frac{1}{T})$ convergence rate for strongly convex objectives, specifically, $\cO(\frac{d \sigma^2}{T} \cdot \kappa)$. However, this is sub-optimal since the minimax optimal rate for SGD is $\cO(\frac{d \sigma^2}{T})$ \citep{ge2019step, jain2018accelerating}.  Moreover, in practice, $\kappa$ can be very big for large neural networks, which makes inverse time decay scheduler undesirable for those models. This is when step decay~\citep{goffin1977convergence} comes to play. Empirically, it is widely adopted in tasks such as image classification and serves as a baseline for a lot of models. Theoretically, it has been proven that step decay can achieve nearly optimal convergence rate $\cO(\frac{d \sigma^2}{T} \cdot \log T)$ for strongly convex least square regression~\citep{ge2019step}. A tighter set of instance-dependent bounds in a recent work~\citep{wu2021last}, which is carried out independently from ours, also proves its near optimality. Nevertheless, step decay is not always the best choice for image classification tasks. In practice, cosine decay \citep{LoshchilovH17} can achieve comparable or even better performance, but the reason behind this superior performance is still unknown in theory~\citep{gotmare2018closer}. All the aforementioned results are summarized in Table~\ref{tab:scheduler_intro}, along with our results in this paper. It is worth mentioning that the minimax optimal rate $\cO(\frac{d \sigma^2}{T})$ can be achieved by iterate averaging methods~\citep{jain2018accelerating, bach2013non, defossez2015averaged, frostig2015competing, jain2016parallelizing, neu2018iterate}, but it is not a common practice to use them in training deep neural networks, so only the final iterate~\citep{shamir2012averaging} behavior of SGD is analyzed in this paper, i.e. the point right after the last iteration.

\textbf{Paper organization:} Section~\ref{sec:motiv} describes the motivation of our \lrs scheduler. Section~\ref{sec:theory} presents the exact form and convergence rate of the proposed \lrs scheduler, along with the lower bound of step decay. Section~\ref{sec:experiments} shows the experimental results. Section~\ref{sec:discussion} discusses the discovery and limitation of \lrs and Section~\ref{sec:conclusion} gives our conclusion.


%

\section{Motivation}
\label{sec:motiv}

In this section, we will give the main motivation and intuition of our \lrs learning rate scheduler.
We first give the scheduling strategy to achieve the optimal convergence rate in the case that the Hessian is diagonal.
Then, we show that the inverse time learning rate is sub-optimal in most cases.
Comparing these two scheduling methods brings up the reason why  we should design   eigenvalue distribution dependent learning rate scheduler.

Letting $H$ be a diagonal matrix $\diag(\lambda_1, \lambda_2, \dots, \lambda_d)$ and reformulating Eqn.~\eqref{eq:proc}, we have
\begin{align*}
	w_{t+1} - w_* 
	=& 
	w_t - w_* - \eta_t (H(\xi)w_t - b(\xi))
	\\
	=& 	w_t - w_* - \eta_t (Hw_t - b) + \eta_t \left(Hw_t - b -(H(\xi)w_t - b(\xi))\right)
	\\
	=& w_t - w_* - \eta_t (Hw_t - b - (Hw_* - b)) + \eta_t \left(Hw_t - b -(H(\xi)w_t - b(\xi))\right)
	\\
	=& \left(I - \eta_t H\right)(w_t - w_*) + \eta_t n_t.
\end{align*}

It follows,
\begin{equation}
	\label{eq:proc_j}
	\begin{aligned}
		\EE\left[\lambda_j\left(w_{t+1,j} - w_{*,j}\right)^2\right]
		\overset{\EE\left[n_t\right] = 0}{=}& \lambda_j \left\{(1 - \eta_t \lambda_j)^2
		\EE\left[(w_{t,j} - w_{*,j})^2\right]
		+ \eta_t^2 \EE \norm{[n_t]_j}^2\right\}
		\\
		\overset{\eqref{eq:var_ass}}{\le}& (1 - \eta_t \lambda_j)^2 \cdot \lambda_j
		\EE \left[(w_{t,j} - w_{*,j})^2\right] + \eta_t^2 \lambda_j^2 \sigma^2.
	\end{aligned}
\end{equation}

Since $H$ is diagonal, we can set step size scheduling for each dimension separately.
Letting us choose step size $\eta_t$ coordinately with the  step size $\eta_{t,j} = \frac{1}{\lambda_j (t+1)}$ being optimal for the $j$-th coordinate,
then we have the following proposition.
\begin{proposition}
	\label{prop:sgd_opt_coord}
	Assume that $H$ is diagonal matrix with eigenvalues $\lambda_1\ge \lambda_2\ge\dots,\lambda_d\ge 0$ and Eqn.~\eqref{eq:var_ass} holds. If we set step size $\eta_{t,j} = \frac{1}{\lambda_j (t+1)}$, it holds that
	\begin{equation}
		\label{eq:diag}
		2 \EE\left[f(w_{t+1}) - f(w_*)\right]
		=
		\EE \left[\sum_{j=1}^d\lambda_j\left(w_{t+1,j} - w_{*,j}\right)^2\right] 
		\le
		\frac{\sum_{j=1}^d\lambda_j (w_{1,j} - w_{*,j})^2}{(t+1)^2} +  \frac{t}{(t+1)^2} \cdot d\sigma^2. 
	\end{equation} 
\end{proposition}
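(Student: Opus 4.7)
The plan is to exploit the fact that when $H$ is diagonal the SGD dynamics decouple across coordinates, so that the bound follows by telescoping the one-step recursion already derived in Eqn.~\eqref{eq:proc_j}. The first equality $2\EE[f(w_{t+1})-f(w_*)] = \EE\bigl[\sum_j \lambda_j(w_{t+1,j}-w_{*,j})^2\bigr]$ is immediate from the quadratic identity $f(w)-f(w_*) = \tfrac12(w-w_*)^\top H(w-w_*)$ combined with the diagonality of $H$; all the real work lies in bounding the right-hand side.

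For that, I would substitute the chosen schedule $\eta_{t,j} = \frac{1}{\lambda_j(t+1)}$ into the per-coordinate recursion \eqref{eq:proc_j}. Writing $a_{t,j} := \lambda_j\,\EE[(w_{t,j}-w_{*,j})^2]$, the factor $(1-\eta_{t,j}\lambda_j)^2$ simplifies to $\bigl(t/(t+1)\bigr)^2$ and $\eta_{t,j}^2\lambda_j^2$ becomes $1/(t+1)^2$, so the recursion collapses to
\begin{align*}
a_{t+1,j} \;\le\; \left(\frac{t}{t+1}\right)^2 a_{t,j} + \frac{\sigma^2}{(t+1)^2}.
\end{align*}

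The main step is then to telescope. Multiplying by $(t+1)^2$ turns this into $(t+1)^2 a_{t+1,j} \le t^2 a_{t,j} + \sigma^2$, so the sequence $b_{t,j} := t^2 a_{t,j}$ satisfies $b_{t+1,j} \le b_{t,j} + \sigma^2$. Iterating from the initial index $t=1$ gives $(t+1)^2 a_{t+1,j} \le a_{1,j} + t\sigma^2 = \lambda_j(w_{1,j}-w_{*,j})^2 + t\sigma^2$, and summing over $j=1,\dots,d$ reproduces exactly the claimed bound.

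There is no serious obstacle here: the chosen schedule is designed precisely so that multiplying the recursion by $(t+1)^2$ makes it telescope cleanly. The only conceptual point worth flagging is that a coordinate-dependent step size is admissible only because $H$ is diagonal, so the per-coordinate updates evolve independently; this is the ideal behaviour that the rest of the paper will try to recover in the non-diagonal setting via the \lrs construction, where one can no longer simply pick the per-eigendirection optimum independently.
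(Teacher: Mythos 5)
Your proposal is correct and is essentially the paper's own argument: the paper unrolls the recursion \eqref{eq:proc_j} and evaluates the telescoping products $\prod_{i=k+1}^{t}\bigl(i/(i+1)\bigr)^2=(k+1)^2/(t+1)^2$ explicitly, which is exactly the same cancellation you package as the additive recursion $b_{t+1,j}\le b_{t,j}+\sigma^2$ after multiplying by $(t+1)^2$. The leading equality is likewise handled as you describe, via $f(w)-f(w_*)=\tfrac12(w-w_*)^\top H(w-w_*)$.
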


The leading equality here is proved in Appendix~\ref{appendix:proof_lem_inv_p}, with the followed inequality proved in Appendix~\ref{appendix:section_motiv_proof}. From Eqn.~\eqref{eq:diag}, we can observe that choosing proper step sizes coordinately can achieve the optimal convergence rate~\citep{ge2019step, jain2018accelerating}.
Instead, if the widely used inverse time scheduler $\eta_t = 1/(L+\mu t)$ is chosen, we can show that only a sub-optimal convergence rate can be obtained, especially when $\lambda_j$'s vary from each other.

\begin{proposition}
	\label{prop:sgd_tinv}
	If we set the inverse time step size $\eta_t = \frac{1}{(L+\mu t)}$, then we have
	\begin{equation}
		\begin{aligned}
		&2 \EE\left[f(w_{t+1}) - f(w_*)\right]
		=
		\EE \left[\sum_{j=1}^d\lambda_j\left(w_{t+1,j} - w_{*,j}\right)^2\right] 
		\\
		\le&
		\left(\frac{L+\mu}{L+\mu t}\right)^2 \left(\sum_{j=1}^d\lambda_j(w_{1,j} - w_{*,j})^2\right)
		+
		\sum_{j=1}^d\left(\frac{\lambda_j^2}{2\lambda_j - \mu} \cdot \frac{1}{L + \mu t}\cdot \sigma^2
		+ 
		\frac{\lambda_j^2\sigma^2}{(L+\mu t)^2}\right).	
		\end{aligned}
	\end{equation}
\end{proposition}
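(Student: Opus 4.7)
The plan is to derive the bound coordinate-by-coordinate from the recursion (2.5) that has already been established, then sum over $j$. Since $H$ is diagonal and the noise decouples, each coordinate $j$ evolves according to the scalar recursion
\begin{equation*}
\lambda_j \EE[(w_{t+1,j}-w_{*,j})^2] \le (1-\eta_t\lambda_j)^2 \cdot \lambda_j \EE[(w_{t,j}-w_{*,j})^2] + \eta_t^2 \lambda_j^2 \sigma^2,
\end{equation*}
and with $\eta_t = 1/(L+\mu t)$ one verifies $0 \le 1-\eta_t\lambda_j \le 1$ because $\mu \le \lambda_j \le L \le L+\mu t$, so signs cause no trouble when squaring inequalities.

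The first step is to separate the bias and variance contributions by unrolling the recursion, writing
\begin{equation*}
\lambda_j \EE[(w_{t+1,j}-w_{*,j})^2] \le \lambda_j(w_{1,j}-w_{*,j})^2 \prod_{s=1}^{t}(1-\eta_s\lambda_j)^2 + \lambda_j^2\sigma^2 \sum_{s=1}^{t} \eta_s^2 \prod_{r=s+1}^{t}(1-\eta_r\lambda_j)^2 .
\end{equation*}
For the bias term, I would use $\lambda_j \ge \mu$ to get $1-\eta_s\lambda_j \le 1-\eta_s\mu = (L+\mu(s-1))/(L+\mu s)$, then telescope the resulting product to obtain $L/(L+\mu t) \le (L+\mu)/(L+\mu t)$; squaring gives the first bracket in the proposition. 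The first claimed equality (connecting $2\EE[f(w_{t+1})-f(w_*)]$ to the weighted squared error) is just the standard identity for a quadratic with Hessian $H$ evaluated at its minimum, which is established in the referenced appendix.

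For the variance term, I would split off the $s=t$ summand, which contributes $\eta_t^2\lambda_j^2\sigma^2 = \lambda_j^2\sigma^2/(L+\mu t)^2$ and matches the second fraction inside the sum. For the remaining sum over $s=1,\dots,t-1$, the key is to bound the product $\prod_{r=s+1}^{t}(1-\eta_r\lambda_j)^2$. Using $1-x \le e^{-x}$ and the integral lower bound $\sum_{r=s+1}^{t} 1/(L+\mu r) \ge \tfrac{1}{\mu}\ln\tfrac{L+\mu(t+1)}{L+\mu(s+1)}$, the product is at most $\bigl((L+\mu(s+1))/(L+\mu(t+1))\bigr)^{2\lambda_j/\mu}$. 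Substituting this in and bounding the resulting sum by an integral of $(L+\mu u)^{2\lambda_j/\mu - 2}$ (whose antiderivative produces the factor $1/(2\lambda_j-\mu)$, noting that $2\lambda_j-\mu \ge \mu > 0$) yields a bound of the form $\lambda_j^2\sigma^2/\bigl((2\lambda_j-\mu)(L+\mu t)\bigr)$ up to harmless constants arising from comparing $L+\mu(s+1)$ to $L+\mu s$.

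The main obstacle is the last sum-to-integral step: the naive comparison produces a multiplicative constant from the ratio $(L+\mu(s+1))/(L+\mu s)$ raised to the power $2\lambda_j/\mu$, which can be large. To obtain the clean constant in the stated bound I would argue more carefully, either by reindexing the sum $s \mapsto s+1$ to match the integral exactly, or by a direct induction that posits $\lambda_j\EE[(w_{t+1,j}-w_{*,j})^2] \le A_j/(L+\mu t)$ and solves the one-step inequality $A_j[\tau^3 - (\tau-\lambda_j)^2(\tau+\mu)] \ge \lambda_j^2\sigma^2\tau(\tau+\mu)$ (with $\tau = L+\mu t$); expanding the left-hand side yields a leading coefficient $(2\lambda_j-\mu)$, forcing $A_j = \lambda_j^2\sigma^2/(2\lambda_j-\mu)$ and absorbing lower-order terms into the additional $\lambda_j^2\sigma^2/(L+\mu t)^2$ slack already present in the statement. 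Finally, summing the three pieces over $j=1,\dots,d$ gives the claimed bound.
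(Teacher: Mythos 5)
Your overall route is the same as the paper's: unroll the scalar recursion \eqref{eq:proc_j} coordinate-by-coordinate, bound the bias product by $\bigl(\tfrac{L+\mu}{L+\mu t}\bigr)^2$, and turn the variance sum into an integral of $(L+\mu u)^{2\lambda_j/\mu-2}$ whose antiderivative yields the $\tfrac{1}{2\lambda_j-\mu}$ factor. Two points of comparison. First, your bias argument is a pleasant elementary variant: you telescope $\prod_{s=1}^t(1-\eta_s\mu)=\tfrac{L}{L+\mu t}$ directly, whereas the paper goes through $1-x\le e^{-x}$ plus an integral lower bound on $\sum_s\eta_s$; both land on the same bracket. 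Second, and more interestingly, the ``main obstacle'' you flag in the variance term is real, and the paper's own proof does not actually clear it: the correct unrolling has the product $\prod_{i=k+1}^t(1-\eta_i\lambda_j)^2$, but the paper's displayed chain replaces it by $\exp\bigl(-2\lambda_j\sum_{i=k}^t\eta_i\bigr)$, which is \emph{smaller} than the honest bound $\exp\bigl(-2\lambda_j\sum_{i=k+1}^t\eta_i\bigr)$ by the factor $\exp(2\eta_k\lambda_j)$ — i.e.\ the paper silently shifts the index to make the numerator $(L+\mu k)^{2\lambda_j/\mu}$ match the $\eta_k^2=(L+\mu k)^{-2}$ factor. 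An honest version of that argument carries an extra universal constant $\exp(2\eta_k\lambda_j)\le e^2$ (since $\eta_k\lambda_j\le\lambda_j/L\le1$), which is harmless for the $\cO(\cdot)$ use of this proposition in Eqn.~\eqref{eq:rate_opt} but does not reproduce the clean constants as literally stated.

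Of your two proposed repairs, the reindexing $s\mapsto s+1$ does not resolve the mismatch — after the shift you face $(L+\mu s')^{2\lambda_j/\mu}/(L+\mu(s'-1))^2$, which is the same off-by-one in the other factor — so the workable options are either to accept the extra $e^2$ (matching what the paper's argument actually delivers) or to run your induction. The induction computation you display is correct: expanding $\tau^3-(\tau-\lambda_j)^2(\tau+\mu)$ gives leading coefficient $(2\lambda_j-\mu)\tau^2$, forcing $A_j=\lambda_j^2\sigma^2/(2\lambda_j-\mu)$ up to an $\cO(1/\tau)$ deficit that a second-order term of the form $B_j/(L+\mu t)^2$ can absorb. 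To call this complete you would still need to carry the bias through the induction hypothesis and verify that $B_j=\lambda_j^2\sigma^2$ suffices for the deficit, but there is no conceptual gap — if anything your write-up is more careful on this point than the paper's.
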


\begin{remark}
	Eqn.~\eqref{eq:diag} shows that if one can choose step size coordinate-wise with step size $\eta_{t,j} = \frac{1}{\lambda_j (t+1)}$, then SGD can achieve a convergence rate
	\begin{align}
		\label{eq:rate_coord}
		\EE\left[f(w_{T+1}) - f(w_*)\right]
		\le \cO\left(\frac{d}{T}\cdot\sigma^2 \right).
	\end{align}
	which matches the lower bound~\citep{ge2019step, jain2018accelerating}. 
	In contrast, replacing $L = \lambda_1$ and $\mu = \lambda_d$ in Proposition~\ref{prop:sgd_tinv}, we can obtain that the convergence rate of SGD being 
	\begin{align}
		\label{eq:rate_opt}
		\EE\left[f(w_{T+1}) - f(w_*)\right]
		\le
		\cO\left(\frac{1}{T}\sum_{j=1}^d \frac{\lambda_j}{\lambda_d} \cdot\sigma^2\right).
	\end{align}
	Since it holds that $\lambda_j\ge \lambda_d$, the convergence rate in Eqn.~\eqref{eq:rate_coord} is better than the one in Eqn.~\eqref{eq:rate_opt}, especially when the eigenvalues of the Hessian ($H$ matrix) decay rapidly.
	In fact, the upper bound in Eqn.~\eqref{eq:rate_opt} is tight for the inverse time decay scheduling, as proven in \citet{ge2019step}.
\end{remark}

\paragraph{Main Intuition}
	The diagonal case $H = \diag(\lambda_1, \lambda_2, \dots, \lambda_d)$ provides an important intuition for designing eigenvalue dependent learning rate scheduling.
	In fact, for general non-diagonal $H$, letting $H = U\Lambda U^\top$ be the spectral decomposition of the Hessian and setting $w' = U^\top w$, then the Hessian becomes a diagonal matrix from perspective of updating $w'$, with
	the variance of the stochastic gradient being unchanged since $U$ is a unitary matrix. This is also the core idea of Newton's method and many second-order methods~\citep{huang2020span}. However, given our focus in this paper being learning rate schedulers only, we move the relevant discussion of their relationship to Appendix~\ref{appendix:newton_relation}.
	
	Proposition~\ref{prop:sgd_opt_coord} and \ref{prop:sgd_tinv} imply that a good learning rate scheduler should decrease the error of each coordinate. 
	The inverse time decay scheduler is only optimal for the coordinate related to the smallest eigenvalue. 
	That's the reason why it is sub-optimal overall.
	Thus, we should reduce the learning rate gradually such that we can run a optimal learning rate associated to $\lambda_j$ to sufficiently drop the error of $j$-th coordinate.
	\textit{Furthermore, given the total iteration $T$ and the eigenvalue distribution of the Hessian, we should allocate the running time for each optimal learning rate associated to $\lambda_j$.}

%
%
%
%
%
%

\section{Eigenvalue Dependent Step Scheduling}
\label{sec:theory}

\begin{wrapfigure}{r}{0.4\textwidth}
  \centering
  \includegraphics[scale=0.22]{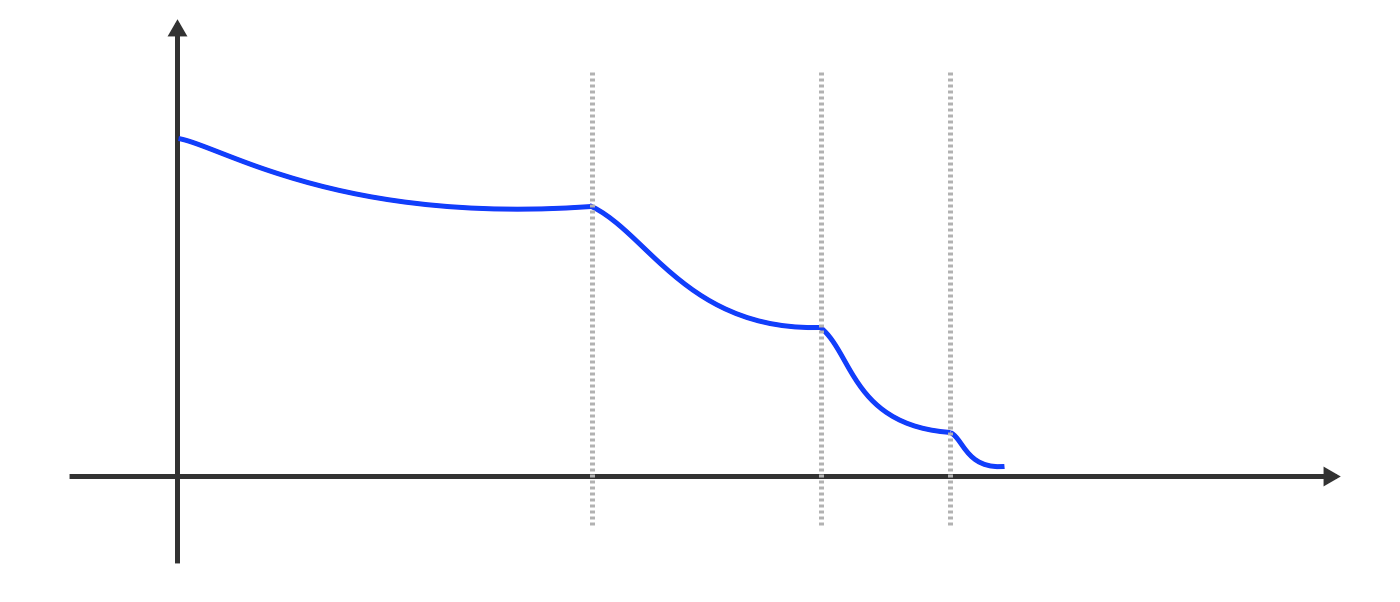}
  \caption{\Lrs: piecewise inverse time decay scheduling.}
  \label{fig:eigencurve_illust}
\end{wrapfigure}

Just as discussed in Section~\ref{sec:motiv},  to obtain better convergence rate for SGD, we should consider Hessian's eigenvalue distribution and schedule the learning rate based on the distribution.
In this section, we propose a novel learning rate scheduler for this task, which can be regarded as piecewise inverse time decay (see Figure~\ref{fig:eigencurve_illust}).
The method is very simple, we group eigenvalues according to their value and denote $s_i$ to  be the number of eigenvalues lie in the range $\mathcal{R}_i = [\mu \cdot 2^i, \mu \cdot 2^{i+1})$,  that is, 
\begin{align}
    \label{eq:def_si}
	s_i = \# \lambda_j \in [\mu \cdot 2^i, \mu \cdot 2^{i+1}).
\end{align}
Then, there are at most $I_{\max} = \log_2\kappa$ such ranges. 
By the inverse time decay theory, the optimal learning rate associated to eigenvalues in the range $\mathcal{R}_i$ should be 
\begin{align}
	\eta_t = \cO\left(\frac{1}{2^{i-1}\mu\cdot(t - t_i)}\right), \mbox{ with } 	0=t_0<t_1<t_2<\dots<t_{I_{\max}} = T.
\end{align}
Our scheduling strategy is to run the optimal learning rate for eigenvalues in each $\mathcal{R}_i$ for a period to sufficiently decrease the error associated to eigenvalues in $\mathcal{R}_i$.

To make the step size sequence $\{\eta_t\}_{t=1}^T$ monotonely decreasing, we define the step sizes as
\begin{align}
	\label{eq:eta_t}
	\eta_t = \frac{1}{L + \mu \sum_{j=1}^{i-1} \Delta_j 2^{j-1} + 2^{i-1} \mu (t - t_{i-1})} \quad\mbox{ if } \quad t \in [t_{i-1}, t_i)
\end{align}
where 
\begin{align}
    \label{eq:delta_and_t}
	0=t_0<t_1<t_2<\dots<t_{I_{\max}} = T, \;\Delta_i = t_i - t_{i-1},  \mbox{ and } I_{\max} = \log_2 \kappa.
\end{align}

To make the total error, that is the sum of error associated with $\mathcal{R}_i$, to be small, we should allocate $\Delta_i$ according to $s_{i-1}$'s.
Intuitively, a large portion of eigenvalues lying in the range $\mathcal{R}_i$ should allocate a large portion of iterations.
Specifically, we propose to allocate $\Delta_i$ as follows:
\begin{align}
	\label{eq:delta}
	\Delta_i = \frac{\sqrt{s_{i-1}}}{\sum_{i'=0}^{I_{\max}-1} \sqrt{s_{i'}}} \cdot T, \mbox{ with } s_i = \# \lambda_j \in [\mu \cdot 2^i, \mu \cdot 2^{i+1}).
\end{align}
In the rest of this section, we will show that the step size scheduling according to Eqn.~\eqref{eq:eta_t} and \eqref{eq:delta} can achieve better convergence rate than the one in \citet{ge2019step} when $s_i$ is non-uniformly distributed. 
In fact, a better $\Delta_i$ allocation can be calculated using numerical optimization.

\subsection{Theoretical Analysis}
\begin{restatable}[]{lemma}{lemmainvp}
	\label{lem:inv_p}
	Let objective function $f(x)$ be quadratic and Assumption~\eqref{eq:var_ass} hold. Running SGD for $T$-steps starting from $w_0$ and a learning rate sequence $\{\eta_t\}_{t=1}^T$ defined in Eqn.~\eqref{eq:eta_t}, the final iterate $w_{T+1}$ satisfies
	\begin{equation}
		\label{eq:var1}
		\begin{aligned}
		\EE\left[f(w_{T+1}) - f(w_*)\right]
		\le&
		(f(w_0) - f(w_*)) \cdot \frac{\kappa^2}{\Delta_1^2}
		+
		\frac{15}{2} \cdot \sigma^2\mu \sum_{\ti{i}=0}^{I_{\max}-1} \frac{2^{\ti{i}+1} s_{\ti{i}}}{L + \mu \sum_{j=1}^{\ti{i}+1} \Delta_j 2^{j-1} }.
	\end{aligned}
	\end{equation}
\end{restatable}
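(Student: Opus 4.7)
The plan is to exploit the quadratic structure by diagonalising $H = U\Lambda U^\top$. Since the schedule is isotropic and the noise assumption $\EE[n_t n_t^\top]\preceq \sigma^2 H$ is rotation-invariant, I may assume without loss of generality that $H = \diag(\lambda_1,\dots,\lambda_d)$, so that the SGD recursion decouples across coordinates. This reduces the problem to iterating the scalar recursion in Eqn.~\eqref{eq:proc_j}:
\begin{equation*}
\EE[(w_{t+1,j}-w_{*,j})^2] \le (1-\eta_t\lambda_j)^2\, \EE[(w_{t,j}-w_{*,j})^2] + \eta_t^2\,\lambda_j\,\sigma^2,
\end{equation*}
and then summing $\tfrac12 \lambda_j\cdot(\cdot)$ over $j$, using $2(f(w)-f(w_*)) = \sum_j \lambda_j(w_j-w_{*,j})^2$.

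First, I would unroll this scalar recursion into the standard bias/variance split
\begin{equation*}
\lambda_j\EE[(w_{T+1,j}-w_{*,j})^2] \le \lambda_j (w_{0,j}-w_{*,j})^2 \prod_{t=1}^{T}(1-\eta_t\lambda_j)^2 \;+\; \sigma^2 \lambda_j^2 \sum_{t=1}^{T}\eta_t^2\!\!\prod_{s=t+1}^{T}(1-\eta_s\lambda_j)^2.
\end{equation*}
For the bias, observe that during the very first stage ($t\in[0,\Delta_1)$) the schedule is exactly $\eta_t=1/(L+\mu t)$, so for any $\lambda_j\ge\mu$ the standard estimate $\prod_{t=0}^{\Delta_1-1}(1-\lambda_j/(L+\mu t))^2 \le \exp\!\big({-}2(\lambda_j/\mu)\log\tfrac{L+\mu\Delta_1}{L}\big)$ yields a factor of order $(L/(L+\mu\Delta_1))^{2\lambda_j/\mu}\le \kappa^2/\Delta_1^2$ (using $\lambda_j\ge\mu$ and $\mu\Delta_1$ comparable to $T/\log\kappa$). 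Contracting further across later stages can only improve this, which is why the bias term is bounded by $(f(w_0)-f(w_*))\cdot\kappa^2/\Delta_1^2$.

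The main work is the variance. I would handle each eigenvalue $\lambda_j\in\mathcal R_{\ti i} = [\mu 2^{\ti i},\mu 2^{\ti i+1})$ by partitioning the time axis into stages. The key identity is the per-stage telescope
\begin{equation*}
\sum_{t=t_{i-1}}^{t_i-1} \eta_t^2 \prod_{s=t+1}^{t_i-1}(1-\eta_s\lambda_j)^2 \;\lesssim\; \frac{\eta_{t_i}}{\lambda_j},
\end{equation*}
which follows from $1-(1-\eta\lambda)^2\ge \eta\lambda$ whenever $\eta\lambda\le 1$ (true for all later stages $i\ge\ti i+1$, since then $\eta_t\lambda_j\le 1$). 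Stages before the matching one ($i\le\ti i$) contribute less because their tail contraction $\prod_{s>t}(1-\eta_s\lambda_j)^2$ decays polynomially, by the same calculation as for the bias; one checks that after the matching stage $\ti i+1$ the residual dominates, giving a per-eigenvalue contribution of at most $\sigma^2\lambda_j \cdot \eta_{t_{\ti i+1}}$. Since $\lambda_j\le 2^{\ti i+1}\mu$ and $\eta_{t_{\ti i+1}} = 1/(L+\mu\sum_{j=1}^{\ti i+1}\Delta_j 2^{j-1})$, summing over the $s_{\ti i}$ eigenvalues in $\mathcal R_{\ti i}$ reproduces the stated sum $\sigma^2\mu\sum_{\ti i}\frac{2^{\ti i+1}s_{\ti i}}{L+\mu\sum_{j=1}^{\ti i+1}\Delta_j 2^{j-1}}$.

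The hard part will be tracking constants cleanly across stage boundaries: within a stage the recursion is classical inverse-time decay, but handoffs between stages (where $\eta_t\lambda_j$ jumps from $\Theta(1)$ to $\Theta(1/2)$) lose multiplicative factors that must be absorbed into the overall $15/2$ coefficient. I would isolate a single ``stage-telescope'' lemma that bounds $\sum_{t\in[t_{i-1},t_i)}\eta_t^2\prod_{s>t}(1-\eta_s\lambda_j)^2$ uniformly in $\lambda_j$ and $i$, and then apply it once per stage and sum; choosing the correct base $2$ for the geometric ranges $\mathcal R_{\ti i}$ ensures that at most two stages (the one straddling $\lambda_j$ and the next) contribute non-negligibly to any given eigenvalue, which is what keeps the constant absolute rather than logarithmic.
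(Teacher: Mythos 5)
Your proposal is correct in outline and follows essentially the same route as the paper: diagonalize and split into per-coordinate bias and variance, bound the bias by restricting to the first inverse-time-decay stage to get $\kappa^2/\Delta_1^2$, and bound each eigenvalue's variance by $O(\sigma^2\lambda_j\,\eta_{t_{\tilde i+1}})$ at its matching range via a stage-wise telescoping sum with tail-contraction damping of earlier stages, followed by the recursion $v_{t+1,j}\le\max(v_{t,j},\eta_t/\lambda_j)$ for later stages. The ``hard part'' you defer (summing the damped per-stage contributions without accumulating a $\log\kappa$ factor) is exactly where the paper's effort goes, resolved there by an exact telescoping identity over the weighted $\alpha_i$ terms rather than by bounding each stage separately.
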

Since the bias term is a high order term, the variance term in Eqn.~\eqref{eq:var1} dominates the error for $w_{T+1}$.
For simplicity, instead of using numerical methods to find the optimal $\{\Delta_i\}$, we propose to use $\Delta_i$ defined in Eqn.~\eqref{eq:delta}.
The value of $\Delta_i$ is linear to square root of the number of eigenvalues lying in the range $[\mu \cdot 2^{i-1}, \mu \cdot 2^{i})$.
Using such $\Delta_i$, \lrs has the following convergence property.
\begin{restatable}[]{theorem}{theoremmain}
	\label{thm:main_1}
	Let objective function $f(x)$ be quadratic and Assumption~\eqref{eq:var_ass} hold. Running SGD for $T$-steps starting from $w_0$, a learning rate sequence $\{\eta_t\}_{t=1}^T$ defined in Eqn.~\eqref{eq:eta_t} and $\Delta_i$ defined in Eqn.~\eqref{eq:delta}, 
	the final iterate $w_{T+1}$ satisfies
	\begin{align*}
		\EE\left[f(w_{T+1}) - f(w_*)\right]
		\le&
		(f(w_0) - f(w_*)) \cdot \frac{\kappa^2 \cdot \left(\sum_{i=0}^{I_{\max}-1} \sqrt{s_{i}}\right)^2}{s_0 T^2}
		+
		\frac{15 \left(\sum_{i=0}^{I_{\max}-1} \sqrt{s_i}\right)^2}{T} \cdot \sigma^2.
	\end{align*}
\end{restatable}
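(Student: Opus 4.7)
The plan is to derive Theorem~\ref{thm:main_1} as a direct consequence of Lemma~\ref{lem:inv_p} by substituting the specific allocation $\Delta_i = \sqrt{s_{i-1}}\,T / S$ where $S \triangleq \sum_{i=0}^{I_{\max}-1}\sqrt{s_i}$, and then carefully lower-bounding the denominator appearing in the variance term. The bias term is immediate: with $\Delta_1 = \sqrt{s_0}\,T/S$, the factor $\kappa^2/\Delta_1^2$ becomes exactly $\kappa^2 S^2/(s_0 T^2)$, matching the first summand in the claimed bound. So the only real work is in handling the variance sum.

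For the variance term I would proceed as follows. First, drop $L$ from the denominator (it is nonnegative), getting
\begin{equation*}
\frac{2^{\ti{i}+1} s_{\ti{i}}}{L + \mu\sum_{j=1}^{\ti{i}+1}\Delta_j 2^{j-1}} \;\le\; \frac{2^{\ti{i}+1} s_{\ti{i}}}{\mu\sum_{j=1}^{\ti{i}+1}\Delta_j 2^{j-1}} .
\end{equation*}
Next, the key observation: the sum $\sum_{j=1}^{\ti{i}+1}\Delta_j 2^{j-1}$ can be lower-bounded by its largest single term, namely $j = \ti{i}+1$, which contributes $\Delta_{\ti{i}+1}\cdot 2^{\ti{i}} = \sqrt{s_{\ti{i}}}\,T\cdot 2^{\ti{i}}/S$ under the chosen allocation. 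Plugging this in gives the per-index bound
\begin{equation*}
\frac{2^{\ti{i}+1} s_{\ti{i}}}{\mu\sum_{j=1}^{\ti{i}+1}\Delta_j 2^{j-1}} \;\le\; \frac{2^{\ti{i}+1} s_{\ti{i}}\, S}{\mu\, 2^{\ti{i}}\sqrt{s_{\ti{i}}}\, T} \;=\; \frac{2\sqrt{s_{\ti{i}}}\,S}{\mu T} .
\end{equation*}
Summing over $\ti{i}=0,\dots,I_{\max}-1$ telescopes the square-root factors into another copy of $S$, yielding $2S^2/(\mu T)$. Multiplying by the prefactor $\tfrac{15}{2}\sigma^2\mu$ from Lemma~\ref{lem:inv_p} cancels the $\mu$ and produces $15 S^2\sigma^2/T$, exactly the variance term claimed.

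Combining the two pieces gives the theorem. The main potential obstacle is verifying that lower-bounding the denominator by its largest term is tight enough not to lose any factor beyond constants; this is why the allocation $\Delta_i \propto \sqrt{s_{i-1}}$ is the right choice, since $\Delta_j 2^{j-1}$ grows geometrically in $j$ (the prefactors $2^{j-1}$ dominate), so the last term indeed captures the whole sum up to a constant. With this in hand no further manipulation is needed, and the result follows by a single line of substitution plus the bound above.
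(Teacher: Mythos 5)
Your proposal is correct and matches the paper's own proof essentially step for step: the paper likewise lower-bounds the denominator $L + \mu\sum_{j=1}^{\ti{i}+1}\Delta_j 2^{j-1}$ by the single term $\mu 2^{\ti{i}}\Delta_{\ti{i}+1}$, substitutes the allocation from Eqn.~\eqref{eq:delta} to get $2\left(\sum_i\sqrt{s_i}\right)^2/T$, and plugs $\Delta_1 = \sqrt{s_0}\,T/\sum_i\sqrt{s_i}$ into the bias factor of Lemma~\ref{lem:inv_p}. No gaps; the argument is the same.
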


Please refer to Appendix~\ref{appendix:important_props_and_lemmas}, \ref{appendix:preliminaries} and \ref{appendix:theorem_proof} for the full proof of Lemma~\ref{lem:inv_p} and Theorem~\ref{thm:main_1}. The variance term $\frac{15 \left(\sum_{i=0}^{I_{\max}-1} \sqrt{s_i}\right)^2}{T} \cdot \sigma^2$ in above theorem shows that when $s_i$'s vary largely from each other, then the variance  can be close to $\cO\left(\frac{d}{T}\cdot\sigma^2\right)$ which matches the lower bound \citep{ge2019step}.
For example, letting $I_{\max} = 100$,  $s_0 = 0.99 d$ and $s_i = \frac{0.01}{99} d$, we can obtain that
\begin{align*}
	\frac{\left(\sum_{i=0}^{99} \sqrt{s_i}\right)^2}{T} \cdot \sigma^2 
	=
	(\sqrt{0.99}+99\times\sqrt{0.01/99})^2 \cdot\frac{d}{T}\cdot\sigma^2
	< \frac{4 d}{T}\cdot\sigma^2.
\end{align*}

\begin{wrapfigure}{r}{0.4\textwidth}
  \centering
  \includegraphics[scale=0.3]{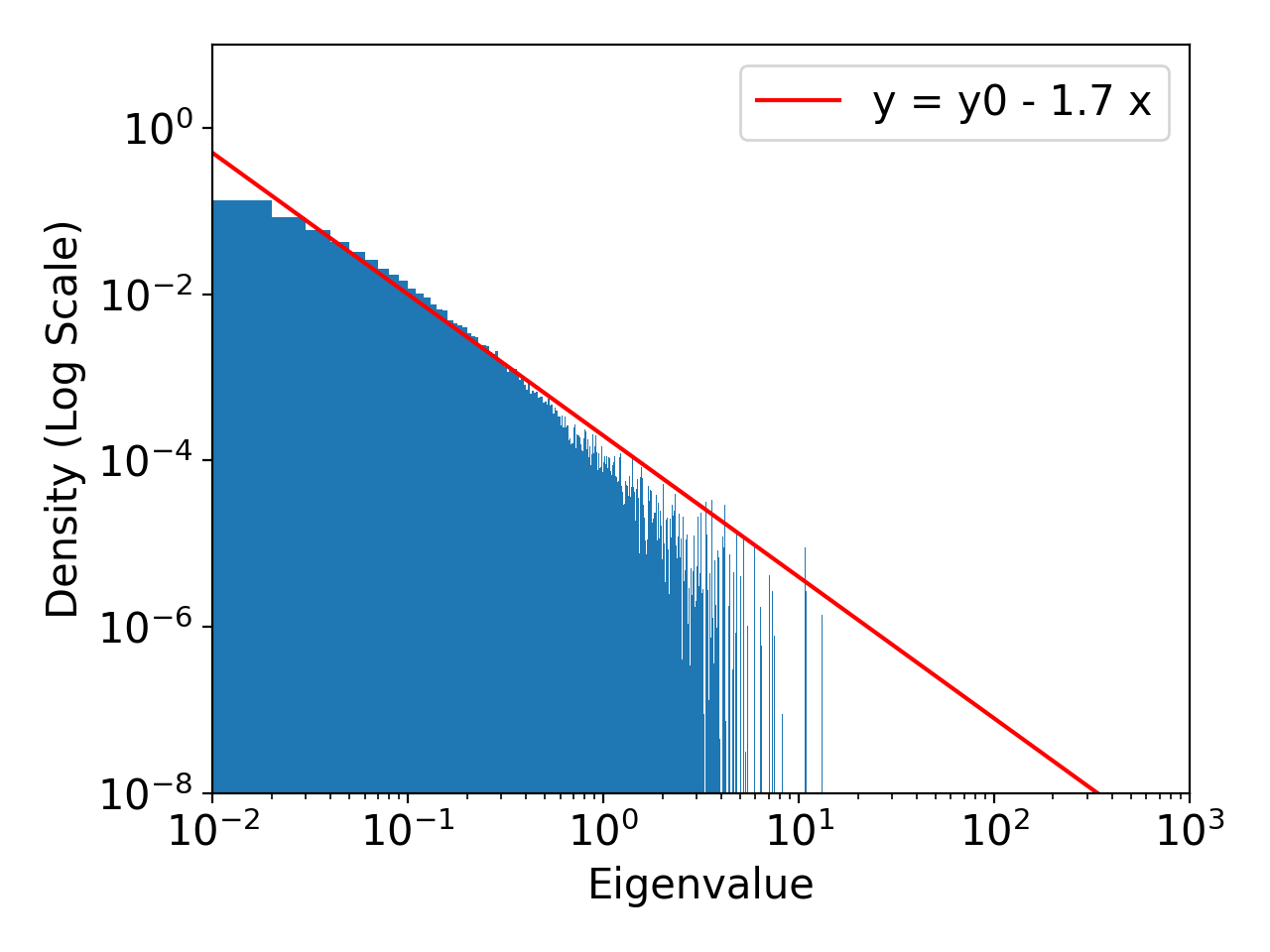}
  \caption{Power law observed in ResNet-18 on ImageNet, both eigenvalue (x-axis) and density (y-axis) are plotted in log scale.}
  \label{fig:power_law}
\end{wrapfigure}

We can observe that if the variance of  $s_i$'s is large, the variance term in Theorem~\ref{thm:main_1} can be close to $d\sigma^2/T$. More generally, as rigorously stated in Corollary~\ref{cor:optimal}, \lrs achieves minimax optimal convergence rate if the Hessian spectrum satisfies an extra assumption of ``power law'': the density of eigenvalue $\lambda$ is exponentially decaying with increasing value of $\lambda$ in log scale, i.e. $\ln(\lambda)$. This assumption comes from the observation of estimated Hessian spectrums in practice (see Figure~\ref{fig:power_law}), which will be further illustrated in Section~\ref{sec:hess_skew_spectrum}.

\begin{restatable}[]{corollary}{corollaryoptimal}
  \label{cor:optimal}
  Given the same setting as in Theorem~\ref{thm:main_1}, when Hessian $H$'s eigenvalue distribution $p(\lambda)$ satisfies ``power law'', i.e.
\begin{equation}
\begin{aligned}
    \label{eq:power_law}
    p(\lambda) = \frac{1}{Z} \cdot \exp(-\alpha(\ln (\lambda) - \ln(\mu))) = \frac{1}{Z} \cdot \left(\frac{\mu}{\lambda}\right)^{\alpha}
\end{aligned}
\end{equation}
for some $\alpha > 1$, where $Z = \int_{\mu}^L (\mu / \lambda)^\alpha d \lambda$, there exists a constant $C(\alpha)$ which only depends on $\alpha$, such that the final iterate $w_{T+1}$ satisfies
	\begin{align*}
		\EE\left[f(w_{T+1}) - f(w_*)\right]
		\le&
		\left((f(w_0) - f(w_*)) \cdot \frac{\kappa^2}{T^2}
		+
		\frac{d\sigma^2}{T}\right) \cdot C(\alpha).
	\end{align*}
\end{restatable}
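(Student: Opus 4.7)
The plan is to reduce Corollary~\ref{cor:optimal} to a direct calculation on top of Theorem~\ref{thm:main_1}. The only quantity we need to control is $\sum_{i=0}^{I_{\max}-1}\sqrt{s_i}$, both relative to $\sqrt{s_0}$ (for the bias term) and relative to $\sqrt{d}$ (for the variance term). Once those two constants are pinned down in terms of $\alpha$ alone, substitution into Theorem~\ref{thm:main_1} finishes the proof.

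First I would convert the power-law density into a closed-form expression for $s_i$. Since $s_i$ counts eigenvalues in $\mathcal{R}_i=[\mu\cdot 2^i,\mu\cdot 2^{i+1})$, the assumption yields
\begin{equation*}
s_i \;=\; d\int_{\mu 2^i}^{\mu 2^{i+1}} p(\lambda)\,d\lambda \;=\; \frac{d}{Z}\int_{\mu 2^i}^{\mu 2^{i+1}} \left(\frac{\mu}{\lambda}\right)^{\alpha} d\lambda.
\end{equation*}
A substitution $u=\lambda/\mu$ and the fact that $\alpha>1$ reduce this to $s_i = C_0(\alpha)\cdot d\cdot 2^{-(\alpha-1)i}$, where $C_0(\alpha)=\frac{\mu(1-2^{1-\alpha})}{Z(\alpha-1)}$ depends only on $\alpha$ (with $Z$ itself a function of $\alpha$, $\mu$, $L$). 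The central observation is that $s_i$ decays \emph{geometrically} in $i$ with ratio $2^{-(\alpha-1)}<1$, so $\sqrt{s_i}=\sqrt{s_0}\cdot 2^{-(\alpha-1)i/2}$ is also geometrically decaying.

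Next I would sum the geometric series. Letting $r(\alpha)=2^{-(\alpha-1)/2}<1$, we get
\begin{equation*}
\sum_{i=0}^{I_{\max}-1}\sqrt{s_i} \;\le\; \sqrt{s_0}\sum_{i=0}^{\infty} r(\alpha)^i \;=\; \frac{\sqrt{s_0}}{1-r(\alpha)}.
\end{equation*}
Squaring gives $\left(\sum_{i=0}^{I_{\max}-1}\sqrt{s_i}\right)^2\le s_0/(1-r(\alpha))^2$, so the ratio $(\sum\sqrt{s_i})^2/s_0$ that appears in the bias term of Theorem~\ref{thm:main_1} is at most a constant depending only on $\alpha$. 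For the variance term I would additionally use $s_0\le d$, which gives $(\sum\sqrt{s_i})^2\le d/(1-r(\alpha))^2$. Setting $C(\alpha)=15/(1-r(\alpha))^2$ makes both terms match the form stated in the corollary.

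Finally I would plug these two inequalities into Theorem~\ref{thm:main_1}: the bias term becomes $(f(w_0)-f(w_*))\cdot\kappa^2/T^2$ times a constant in $\alpha$, and the variance term becomes $d\sigma^2/T$ times a constant in $\alpha$, so taking the maximum of the two constants yields the single $C(\alpha)$ in the statement. There is no real obstacle in the argument; the only subtlety worth checking carefully is that the computation of $s_i$ and the geometric sum both genuinely require $\alpha>1$ (this is what keeps $r(\alpha)<1$ and makes $Z$ and $C_0(\alpha)$ finite independent of $I_{\max}$ as $\kappa\to\infty$), which is precisely the hypothesis placed on $\alpha$ in the corollary.
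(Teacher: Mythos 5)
Your proposal is correct and follows essentially the same route as the paper's proof: integrate the power-law density over each dyadic range to get $s_i \propto d\cdot 2^{-(\alpha-1)i}$, bound $\sum_i\sqrt{s_i}$ by the infinite geometric series with ratio $2^{-(\alpha-1)/2}$, and control the bias ratio against $s_0$ and the variance ratio against $d$, arriving at the same constant $C(\alpha)=15\left(1-2^{(1-\alpha)/2}\right)^{-2}$. The only (inessential) difference is that the paper separately dispatches the degenerate case $\kappa<2$ where there is a single, possibly truncated interval, while you implicitly assume the first interval is complete so that $s_0$ matches the closed form exactly.
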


Please refer to Appendix~\ref{appendix:proof_optimal} for the proof. As for the worst-case guarantee, it is easy to check that only when $s_i$'s are equal to each other, that is, $s_i = d/I_{\max} = d/\log_2(\kappa)$, the variance term reaches its maximum.
\begin{corollary}
	\label{cor:main}
	Given the same setting as in Theorem~\ref{thm:main_1}, when $s_i = d/\log_2(\kappa)$ for all $0\le i \le I_{\max}-1$, the variance term in Theorem~\ref{thm:main_1} reaches its maximum and $w_{T+1}$ satisfies 
	\begin{align*}
		\EE\left[f(w_{T+1}) - f(w_*)\right]
		\le
		(f(w_0) - f(w_*)) \cdot \frac{\kappa^2 \log^2 \kappa}{T^2}
		+
		\frac{15 d \cdot\log \kappa}{T} \sigma^2.
	\end{align*}
\end{corollary}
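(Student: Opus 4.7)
The plan is to specialize Theorem~\ref{thm:main_1} to the uniform case $s_i = d/\log_2\kappa$ and, in parallel, verify the optimality claim asserted in the statement—namely that this choice really is the worst case for the variance term among all admissible eigenvalue groupings. Since Theorem~\ref{thm:main_1} already produces the explicit upper bound
\[
(f(w_0) - f(w_*)) \cdot \frac{\kappa^2 \left(\sum_{i=0}^{I_{\max}-1}\sqrt{s_i}\right)^2}{s_0\, T^2} + \frac{15 \left(\sum_{i=0}^{I_{\max}-1}\sqrt{s_i}\right)^2}{T}\sigma^2,
\]
the entire argument reduces to controlling the single quantity $S \triangleq \sum_{i=0}^{I_{\max}-1}\sqrt{s_i}$ (and, for the bias term, the ratio $S^2/s_0$).

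First I would show that $S$ is maximized, subject to the constraint $\sum_i s_i = d$ (the $s_i$'s partition the $d$ eigenvalues by the grouping in Eqn.~\eqref{eq:def_si}) and $s_i \ge 0$, precisely when all $s_i$ are equal. This follows from a one-line application of Cauchy--Schwarz (equivalently, concavity of $\sqrt{\cdot}$ and Jensen):
\[
S \;=\; \sum_{i=0}^{I_{\max}-1}\sqrt{s_i} \;\le\; \sqrt{I_{\max}}\cdot\sqrt{\sum_{i=0}^{I_{\max}-1} s_i} \;\le\; \sqrt{I_{\max}\cdot d},
\]
with equality attained when $s_i = d/I_{\max} = d/\log_2\kappa$. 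This justifies the ``variance term reaches its maximum'' claim in the corollary.

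Next I would plug this uniform choice into the bound of Theorem~\ref{thm:main_1}. Under $s_i = d/\log_2\kappa$, we have $S^2 = I_{\max}\cdot d = d\log_2\kappa$, and $s_0 = d/\log_2\kappa$, so $S^2/s_0 = I_{\max}^2 = \log_2^2\kappa$. Substituting these two identities gives
\[
\EE\bigl[f(w_{T+1}) - f(w_*)\bigr] \;\le\; (f(w_0) - f(w_*)) \cdot \frac{\kappa^2 \log^2\kappa}{T^2} \;+\; \frac{15\, d\log\kappa}{T}\,\sigma^2,
\]
which is exactly the corollary's conclusion.

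The argument has no real obstacle: it is a direct substitution combined with a Cauchy--Schwarz optimization over the simplex $\{s_i \ge 0, \sum s_i = d\}$. The only point that deserves care is confirming that we may legitimately treat $s_0$ as free to equal $d/\log_2\kappa$ under the uniform case (so the bias prefactor is indeed $\log_2^2\kappa$, not larger) and that the $\log$ bases match in the final statement, as $\log_2\kappa$ and $\log\kappa$ differ only by a universal constant absorbed into the implicit constants.
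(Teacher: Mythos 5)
Your proposal is correct and follows essentially the same route as the paper: the paper proves this corollary by direct substitution of $s_i = d/\log_2\kappa$ into Theorem~\ref{thm:main_1}, with the maximality of the variance term under equal $s_i$'s left as an ``easy to check'' observation, which your Cauchy--Schwarz step supplies explicitly. The substitution identities $\left(\sum_i \sqrt{s_i}\right)^2 = d\log_2\kappa$ and $\left(\sum_i \sqrt{s_i}\right)^2/s_0 = \log_2^2\kappa$ are exactly what is needed, and the base-of-logarithm remark is correctly handled.
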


\begin{remark}
	When $s_i$'s vary from each other, our eigenvalue dependent learning rate scheduler can achieve faster convergence rate than eigenvalue independent scheduler such as step decay which suffers from an extra  $log(T)$ term \citep{ge2019step}. 
  Only when $s_i$'s are equal to each other, Corollary~\ref{cor:main} shows
  that the bound of variance matches to lower bound
  up to $\log\kappa$ which is same to the one in Proposition 3 of
  \citet{ge2019step}.
\end{remark}

Furthermore, we show that this $\log T$ gap between step decay and \lrs certainly exists for problem instances of skewed Hessian spectrums. For simplicity, we only discuss the case where $H$ is diagonal.

\begin{restatable}[]{theorem}{theoremsteplowerbound}
  \label{thm:step_lower_bound}
  Let objective function $f(x)$ be quadratic. We run SGD for $T$-steps starting from $w_0$ and a step decay learning rate sequence $\{\eta_t\}_{t=1}^T$ defined in Algorithm 1 of~\citet{ge2019step} with $\eta_1 \le 1/L$. \textbf{As long as} \textbf{(1)} $H$ is diagonal, \textbf{(2)} The equality in Assumption~\eqref{eq:var_ass} holds, i.e. $\EE_{\xi} \left[n_t n_t^\top \right] = \sigma^2 H$ and \textbf{(3)} $\lambda_j \left(w_{0,j} - w_{*,j}\right)^2 \neq 0$ for $\forall j = 1, 2, \dots, d$, the final iterate $w_{T+1}$ satisfies,
  \begin{align*}
    \EE\left[f(w_{T+1}) - f(w_*)\right] = \Omega\left(\frac{d \sigma^2}{T} \cdot \log T\right)
  \end{align*}
\end{restatable}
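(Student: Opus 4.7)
The plan is to exploit the diagonal structure of $H$ to decouple SGD into $d$ independent one-dimensional recursions, lower bound the final second moment in each scalar recursion under step decay, and sum over coordinates. Because the inequality in Assumption~\eqref{eq:var_ass} is tight and $H$ is diagonal, the noise coordinates satisfy $\EE[n_{t,j}^2]=\sigma^2\lambda_j$ and are uncorrelated across $j$, so coordinate $j$ obeys
\[
M_{t+1,j}=(1-\eta_t\lambda_j)^2 M_{t,j}+\eta_t^2\sigma^2\lambda_j,\qquad M_{t,j}:=\EE[(w_{t,j}-w_{*,j})^2].
\]
Unrolling gives $M_{T+1,j}=B_j+S_j$ with bias part $B_j=(w_{0,j}-w_{*,j})^2\prod_{t=0}^T(1-\eta_t\lambda_j)^2$ and variance part $S_j=\sigma^2\lambda_j\sum_{t=0}^T\eta_t^2\prod_{s=t+1}^T(1-\eta_s\lambda_j)^2$, and the excess loss is $\tfrac12\sum_j\lambda_j(B_j+S_j)$. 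It therefore suffices to show that for every coordinate $j$, $\lambda_j(B_j+S_j)=\Omega(\sigma^2\log T/T)$.

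The heart of the argument is a uniform per-coordinate lower bound that uses only the variance term. Step decay as in Algorithm~1 of \citet{ge2019step} partitions training into $I=\Theta(\log T)$ stages of length $K=T/I$ with learning rates $\eta_i=\eta_1 2^{-(i-1)}$ and $\eta_1\le 1/L$. For each $\lambda_j$ I would introduce the \emph{latest saturating stage} $i^{*}(j):=\max\{i:K\eta_i\lambda_j\ge \tfrac12\}$. Retaining only the stage-$i^{*}$ contribution in the sum defining $S_j$ gives
\[
S_j\;\ge\;\sigma^2\lambda_j\,\eta_{i^{*}}^2\Bigl(\sum_{t=0}^{K-1}(1-\eta_{i^{*}}\lambda_j)^{2t}\Bigr)\prod_{i>i^{*}}(1-\eta_i\lambda_j)^{2K}.
\]
The inner geometric sum is $\Omega(1/(\eta_{i^{*}}\lambda_j))$ because $K\eta_{i^{*}}\lambda_j\ge \tfrac12$, while the later-stage attenuation product is bounded below by an absolute constant, since $\sum_{i>i^{*}}K\eta_i\lambda_j\le 2K\eta_{i^{*}+1}\lambda_j<1$ by the geometric halving of the step sizes and the maximality of $i^{*}$. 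Combining yields $\lambda_j S_j=\Omega(\sigma^2\lambda_j\eta_{i^{*}})=\Omega(\sigma^2/K)=\Omega(\sigma^2\log T/T)$. For the flattest coordinates, with $K\eta_1\lambda_j<\tfrac12$ so that no $i^{*}(j)$ exists, the same attenuation computation gives $\prod_i(1-\eta_i\lambda_j)^{2K}=\Theta(1)$, so $\lambda_j B_j=\Omega(\lambda_j(w_{0,j}-w_{*,j})^2)$, which is strictly positive by condition~(3) and hence dominates $\sigma^2\log T/T$ for all $T$ beyond a fixed, $j$-dependent threshold that can be absorbed into the $\Omega(\cdot)$.

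The main obstacle I anticipate is controlling the tail product $\prod_{i>i^{*}}(1-\eta_i\lambda_j)^{2K}$ without losing the $\log T$ factor the conclusion promises. A naive bound such as $(1-\eta_{i^{*}+1}\lambda_j)^{2K(I-i^{*})}$ puts an extra $I=\Theta(\log T)$ into the exponent and annihilates precisely the logarithm we are trying to certify; the careful threshold $K\eta_{i^{*}}\lambda_j\ge \tfrac12$ combined with the geometric decay of $\eta_i$ turns the exponent $2\sum_{i>i^{*}}K\eta_i\lambda_j$ into a convergent geometric sum bounded by an absolute constant uniformly in $j$, which is what permits the per-coordinate bound to be summed over all $d$ coordinates to yield the full $\Omega(d\sigma^2\log T/T)$.
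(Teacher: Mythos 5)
Your proposal is correct and follows essentially the same route as the paper's proof: the same per-coordinate bias--variance decomposition, the same identification of the critical stage where $K\eta_i\lambda_j=\Theta(1)$ (your ``latest saturating stage'' versus the paper's ``first interval with $2^{\ell}\ge 8\lambda_j\eta_1 T/\log T$'' are adjacent definitions of the same object), the same constant-factor control of the later-stage attenuation product, and the same bias-term fallback for eigenvalues too small to saturate, with the instance-dependent threshold on $T$ absorbed into the asymptotic statement exactly as the paper does. The only caveat is cosmetic: with your threshold $K\eta_{i^{*}}\lambda_j\ge\tfrac12$ the naive bound $\prod(1-a_i)\ge 1-\sum a_i$ on the tail product is vacuous, so you should either shrink the threshold (the paper uses $1/8$) or pass through $1-x\ge e^{-2x}$ for $x\le\tfrac12$ to turn your bounded exponent sum into the claimed absolute-constant lower bound.
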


The proof is provided in  Appendix~\ref{appendix:proof_step_lower_bound}. Removing this extra $\log T$ term may not seem to be a big deal in theory, but experimental results suggest the opposite.

%
%
%
%
%

\section{Experiments}
\label{sec:experiments}

To demonstrate \lrs's practical value, empirical experiments are conducted on the task of image classification \footnote{Code: \texttt{\url{https://github.com/opensource12345678/why_cosine_works/tree/main}}}. Two well-known dataset are used: CIFAR-10~\citep{krizhevsky2009learning} and ImageNet~\citep{deng2009imagenet}. For full experimental results on more datasets, please refer to Appendix~\ref{appendix:more_exp_results}.

\subsection{Hessian Spectrum's Skewness in Practice}
\label{sec:hess_skew_spectrum}

According to estimated\footnote{Please refer to Appendix~\ref{appendix:pyhess} for details of the estimation and preprocessing procedure.} eigenvalue distributions of Hessian on CIFAR-10 and ImageNet, as shown in Figure~\ref{fig:other_eigenvalue_distrib}, it can be observed that all of them are highly skewed and share a similar tendency: \textit{A large portion of small eigenvalues and a tiny portion of large eigenvalues}. This phenomenon has  also been observed and explained by other researchers in the past\citep{sagun2017eigenvalues, arjevani2020analytic}. On top of that, when we plot both eigenvalues and density in log scale, the ``power law'' arises. Therefore, if the loss surface can be approximated by quadratic objectives, then \lrs has already achieved optimal convergence rate for those practical settings. The exact values of the extra constant terms are presented in Appendix~\ref{appendix:skewness_constant}.


\begin{figure}[h]
    \centering
    \includegraphics[scale=0.28]{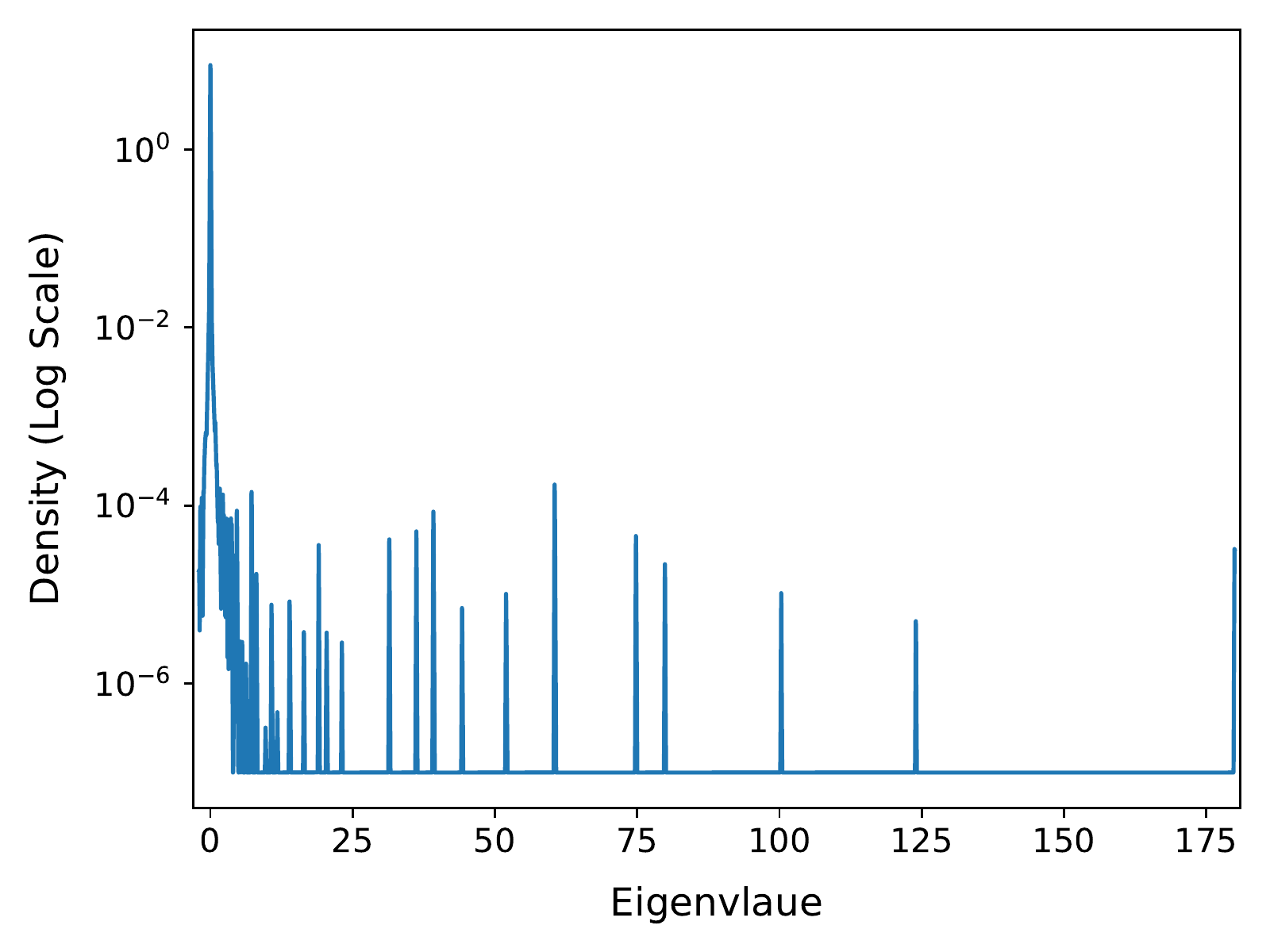}
    \includegraphics[scale=0.28]{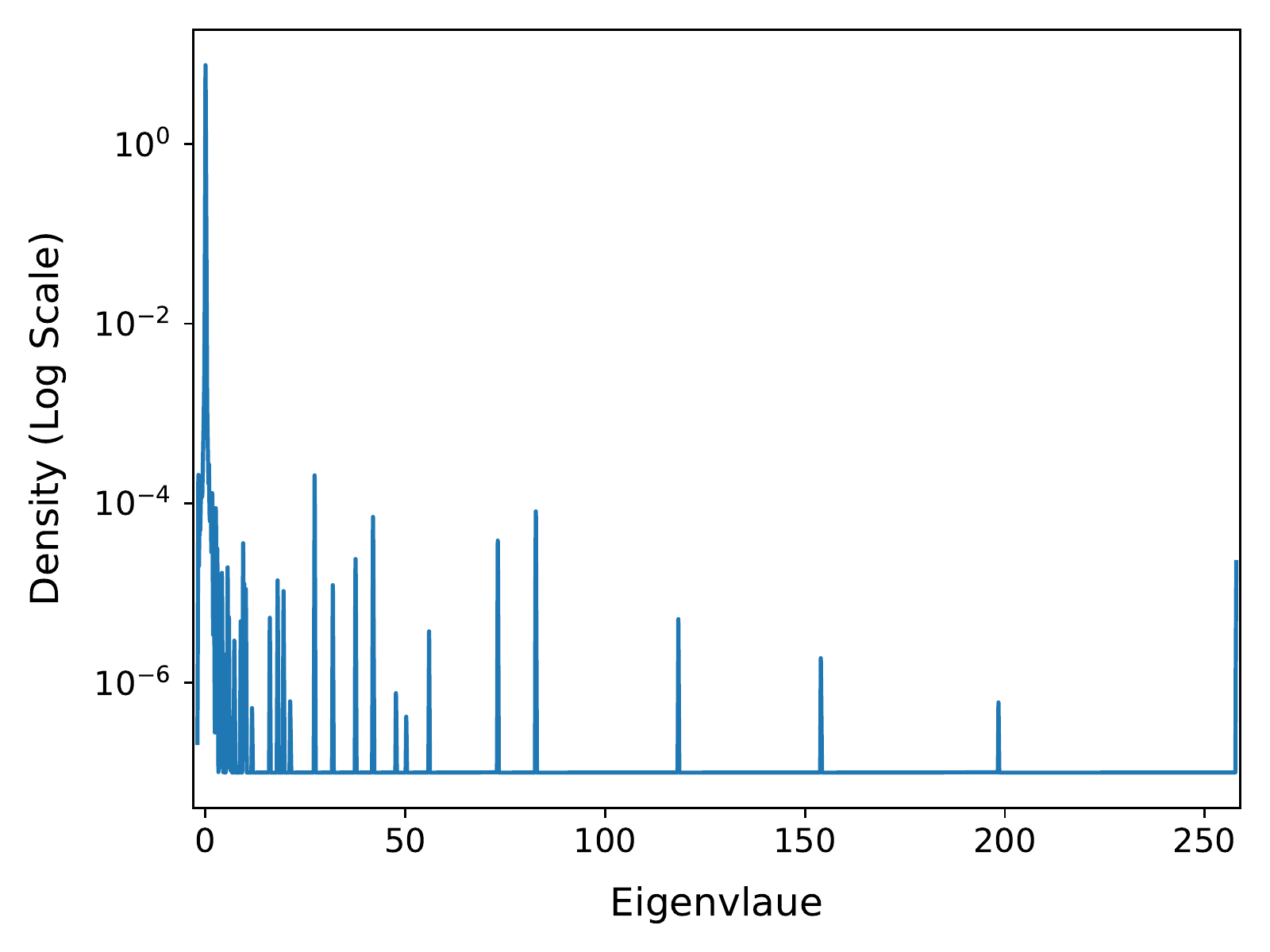}
    \includegraphics[scale=0.28]{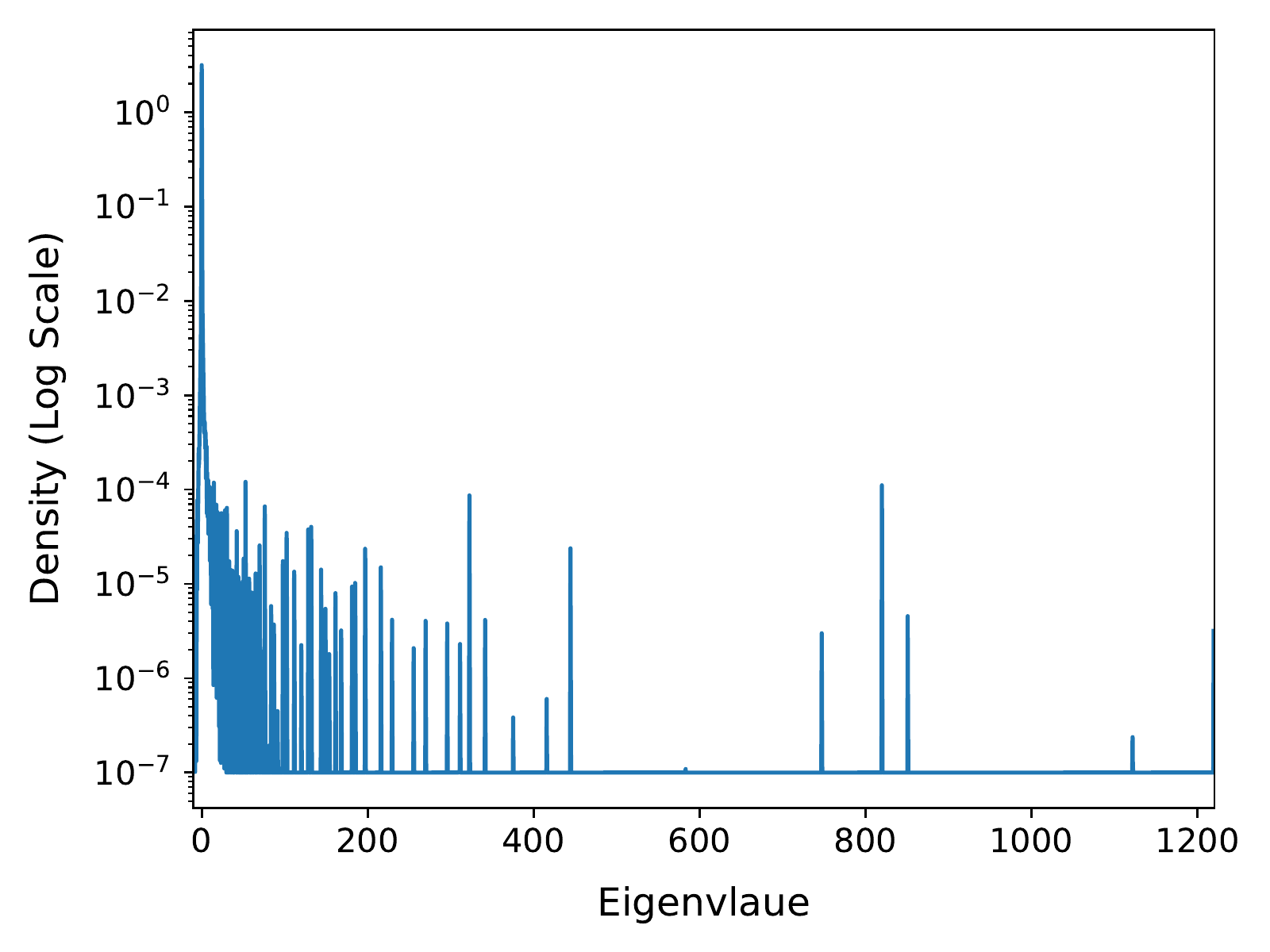}
    \\
    \includegraphics[scale=0.283]{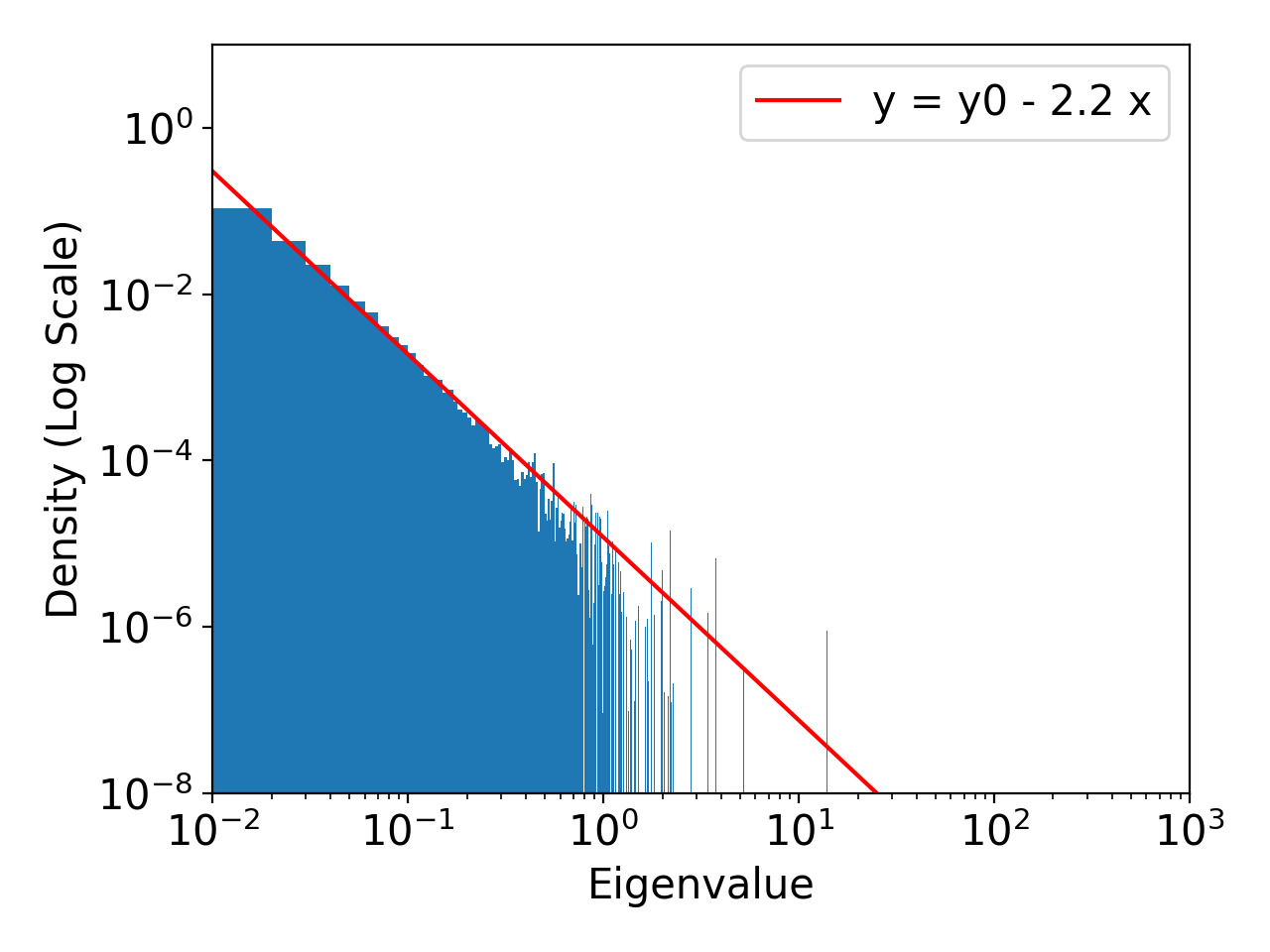}
    \includegraphics[scale=0.283]{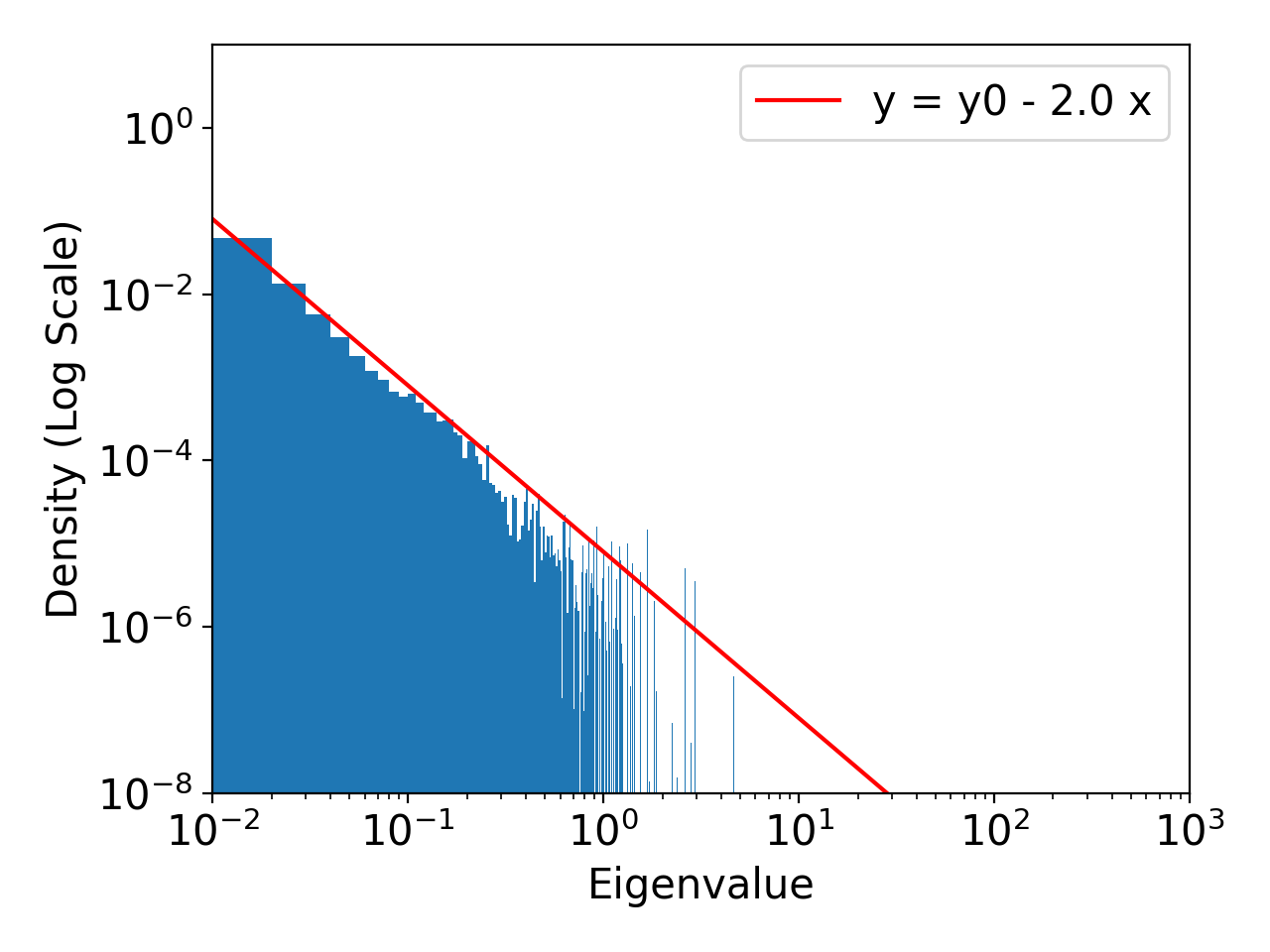}
    \includegraphics[scale=0.283]{supp-img/alpha-abs_resnet18_imagenet.png}
    \caption{The estimated eigenvalue distribution of Hessian for ResNet-18 on CIFAR-10, GoogLeNet on CIFAR-10 and ResNet-18 on ImageNet respectively. Notice that the density here is all shown in log scale. First row: original scale for eigenvalues. Second row: log scale for preprocessed eigenvalues.}
    \label{fig:other_eigenvalue_distrib}
\end{figure}

\subsection{Image Classification on CIFAR-10 with \LRS Scheduling}
\label{sec:cifar-10_exp}





This optimality in theory induces \lrs's superior performance in practice, which is demonstrated in Table~\ref{tab:cifar-10_main} and Figure~\ref{fig:cifar-10_main}. The full set of figures are available in Appendix~\ref{appendix:cifar-10_full_fig}. All models are trained with stochastic gradient descent (SGD), no momentum, batch size $128$ and weight decay $wd = 0.0005$. For full details of the experiment setup, please refer to Appendix~\ref{appendix:cifar_setup}.

\begin{table}[h!]
  \caption{CIFAR-10: training losses and test accuracy of different schedules. Step Decay denotes the scheduler proposed in~\citet{ge2019step} and General Step Decay means the same type of scheduler with searched interval numbers and decay rates. ``*'' before a number means at least one occurrence of loss explosion among all 5 trial experiments.}
  \label{tab:cifar-10_main}
  \scriptsize
  \begin{center}
    \begin{tabular}{cccccccc}
      \toprule
      \#Epoch
      & Schedule
      & \multicolumn{2}{c}{ResNet-18}
      & \multicolumn{2}{c}{GoogLeNet}
      & \multicolumn{2}{c}{VGG16}
      \\
      &
      & \makecell{Loss} & \makecell{Acc(\%)}
      & \makecell{Loss} & \makecell{Acc(\%)}
      & \makecell{Loss} & \makecell{Acc(\%)}
      \\
      \midrule

      \multirow{4}{*}{=10}
      & Inverse Time Decay
      & 1.58$\pm$0.02
      & 79.45$\pm$1.00
      & 2.61$\pm$0.00
      & 86.54$\pm$0.94
      & 2.26$\pm$0.00
      & 84.47$\pm$0.74
      \\
      & Step Decay
      & 1.82$\pm$0.04
      & 73.77$\pm$1.48
      & 2.59$\pm$0.02
      & 87.04$\pm$0.48
      & 2.42$\pm$0.45
      & 82.98$\pm$0.27

			\\
      & General Step Decay
      & 1.52$\pm$0.02
      & 81.99$\pm$0.35
      & 1.93$\pm$0.03
      & 88.32$\pm$1.32
      & 2.14$\pm$0.42
      & 86.79$\pm$0.36

			\\
      & \Coss
      & 1.42$\pm$0.01
      & 84.23$\pm$0.07
      & 1.94$\pm$0.00
      & 90.56$\pm$0.31
      & 2.03$\pm$0.00
      & 87.99$\pm$0.13

			\\
      & \LRS
      & \textbf{1.36}$\pm$\textbf{0.01}
      & \textbf{85.62}$\pm$\textbf{0.28}
      & \textbf{1.33}$\pm$\textbf{0.00}
      & \textbf{90.65}$\pm$\textbf{0.15}
      & \textbf{1.87}$\pm$\textbf{0.00}
      & \textbf{88.73}$\pm$\textbf{0.11}
      \\
      \midrule

      \multirow{4}{*}{=100}
      & Inverse Time Decay
      & 0.73$\pm$0.00
      & 90.82$\pm$0.43
      & 0.62$\pm$0.02
      & 92.05$\pm$0.69
      & 1.32$\pm$0.62
      & *76.24$\pm$13.77
      \\
      & Step Decay
      & 0.26$\pm$0.01
      & 91.39$\pm$1.03
      & 0.28$\pm$0.00
      & 92.83$\pm$0.15
      & 0.59$\pm$0.00
      & 91.37$\pm$0.20

      \\
      & General Step Decay
      & 0.17$\pm$0.00
      & 93.97$\pm$0.21
      & 0.13$\pm$0.00
      & 94.18$\pm$0.18
      & 0.20$\pm$0.00
      & *92.36$\pm$0.46

			\\
      & \Coss
      & 0.17$\pm$0.00
      & 94.04$\pm$0.21
      & 0.12$\pm$0.00
      & 94.62$\pm$0.11
      & 0.20$\pm$0.00
      & \textbf{93.17}$\pm$\textbf{0.05} 

			\\
      & \LRS
      & \textbf{0.14}$\pm$\textbf{0.00} 
      & \textbf{94.05}$\pm$\textbf{0.18}
      & \textbf{0.12}$\pm$\textbf{0.00}
      & \textbf{94.75}$\pm$\textbf{0.15}
      & \textbf{0.18}$\pm$\textbf{0.00}
      & 92.88$\pm$0.24
      \\
      \bottomrule
    \end{tabular}
  \end{center}
\end{table}
\begin{figure}[h!]
    \centering
    \includegraphics[scale=0.42]{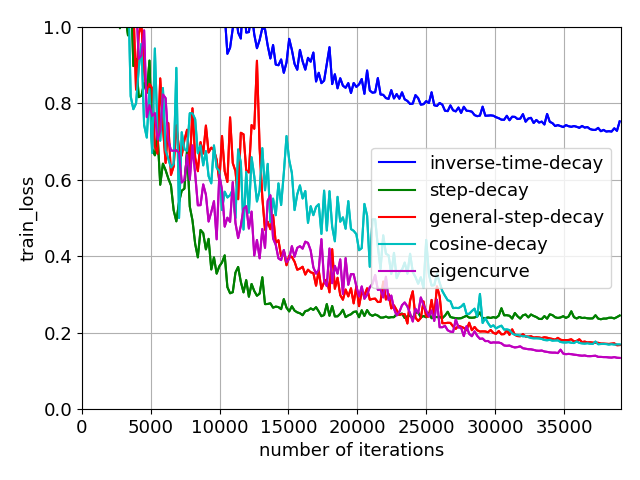}
    \includegraphics[scale=0.42]{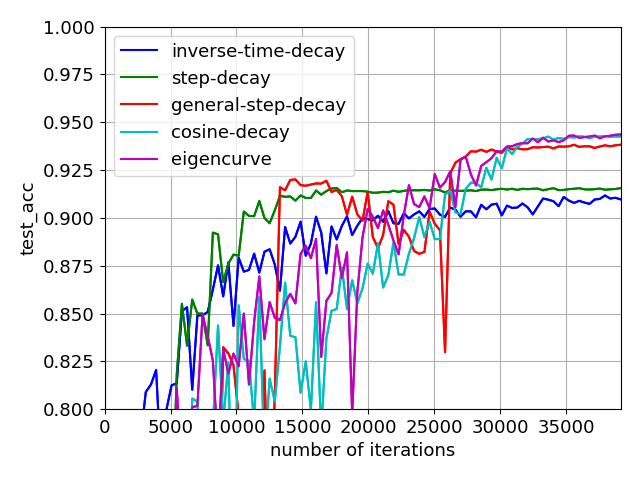}
    \caption{Example: CIFAR-10 results for ResNet-18, with \#Epoch = 100. Left: training losses. Right: test accuracy. For full figures of this experiment, please refer to Appendix~\ref{appendix:cifar-10_full_fig}.}
    \label{fig:cifar-10_main}
\end{figure}

\subsection{Inspired Practical Schedules with Simple Forms} 
\label{sec:eigencurve_derivatives}

By simplifying the form of \lrs and capturing some of its key properties, two simple and practical schedules are proposed: Elastic Step Decay and Cosine-power Decay, whose empirical performance are better than or at least comparable to cosine decay. Due to page limit, we leave all the experimental results in Appendix~\ref{appendix:esd}, ~\ref{appendix:esd_language_modeling}, ~\ref{appendix:cosine-power}.
\begin{align}
  \text{Elastic Step Decay: }
  &\eta_t = \eta_0 / 2^k
  \mbox{ , if  } t \in \left[(1-r^k) T, (1-r^{k+1})T\right)
  \\
  \text{Cosine-power Decay: }
  &\eta_t = \eta_{\min} + (\eta_0 -
  \eta_{\min}) \left[ \frac{1}{2} (1 + \cos(\frac{t}{t_{\max}} \pi)) \right]^{\alpha}
\end{align}

\section{Discussion}
\label{sec:discussion}

\paragraph{Cosine Decay and Eigencurve} For ResNet-$18$ on CIFAR-10 dataset, \lrs scheduler presents an extremely similar learning rate curve to cosine decay, especially when the number of training epochs is set to $100$, as shown in  Figure~\ref{fig:cifar-10_100_curve}. This directly links \coss to our theory: the empirically superior performance of \coss is very likely to stem from the utilization of the ``skewness'' among Hessian matrix's eigenvalues. For other situations, especially when the number of iterations is small, as shown in Table~\ref{tab:cifar-10_main}, \lrs presents a better performance than \coss.

\begin{figure}[h!]
    \centering
    \includegraphics[scale=0.28]{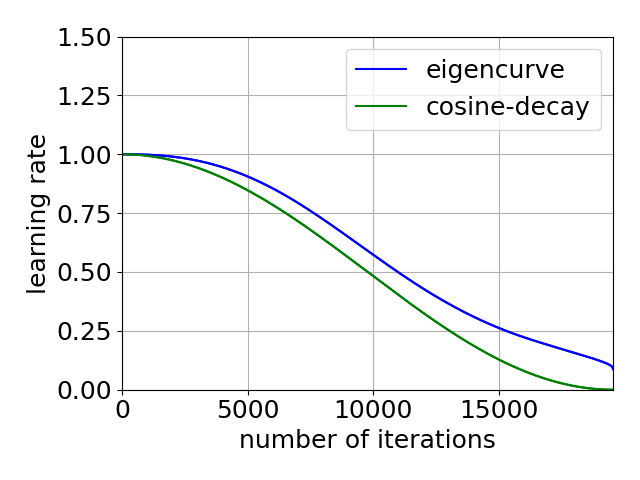}
    \includegraphics[scale=0.28]{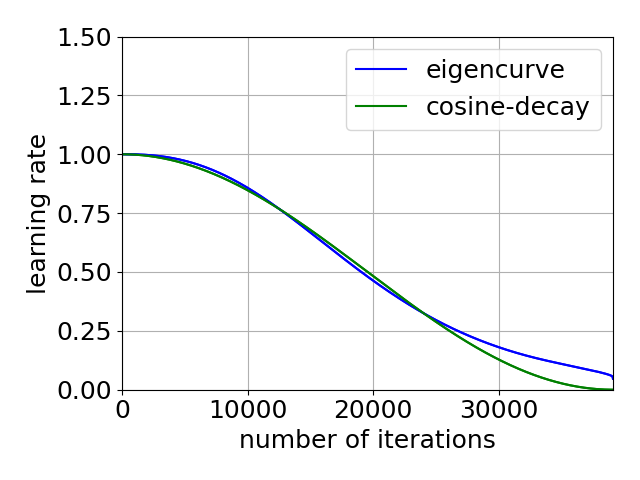}
    \includegraphics[scale=0.28]{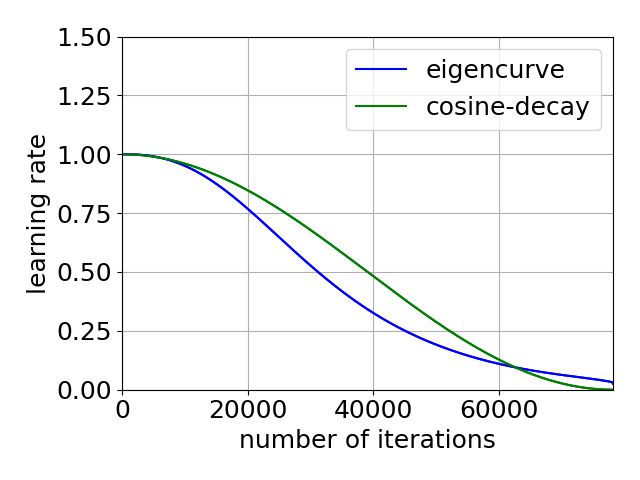}
    \caption{\Lrs's learning rate curve generated by the estimated eigenvalue distribution for ResNet-18 on CIFAR-18 after training 50/100/200 epochs. The cosine decay's learning rate curve (green) is also provided for comparison.}
    \label{fig:cifar-10_100_curve}
\end{figure}

\paragraph{Sensitiveness to Hessian's Eigenvalue Distributions}
One limitation of \lrs is that it requires a precomputed eigenvalue distribution of objective functions’s Hessian matrix, which can be time-consuming for large models. This issue can be overcome by reusing the estimated eigenvalue distribution from similar settings. Further experiments on CIFAR-10 suggest the effectiveness of this approach. Please refer to Appendix~\ref{appendix:reuse} for more details. This evidence suggests that \texttt{eigencurve}'s performance is not very sensitive to estimated eigenvalue distributions.

\paragraph{Relationship with Numerically Near-optimal Schedulers}
In~\citet{zhang2019algorithmic}, a dynamic programming algorithm was proposed to find almost optimal schedulers if the exact loss of the quadratic objective is accessible. While it is certainly the case, \lrs still possesses several additional advantages over this type of approaches. First, \lrs can be used to find simple-formed schedulers. Compared with schedulers numerically computed by dynamic programming, \lrs provides an analytic framework, so it is able to bypass the Hessian spectrum estimation process if some useful assumptions of the Hessian spectrum can be obtained, such as "power law". Second, \lrs has a clear theoretical convergence guarantee. Dynamic programming can find almost optimal schedulers, but the convergence property of the computed scheduler is still unclear. Our work fills this gap.

\section{Conclusion}
\label{sec:conclusion}

In this paper, a novel learning rate schedule named \lrs is proposed, which utilizes the ``skewness'' of objective's Hessian matrix's eigenvalue distribution and reaches minimax optimal convergence rates for SGD on quadratic objectives with skewed Hessian spectrums. This condition of skewed Hessian spectrums is observed and indeed satisfied in practical settings of image classification. Theoretically, \lrs achieves no worse convergence guarantee than step decay for quadratic functions and reaches minimax optimal convergence rate (up to a constant) with skewed Hessian spectrums, e.g. under ``power law''. Empirically, experimental results on CIFAR-10 show that \lrs significantly outperforms step decay, especially when the number of epochs is small. The idea of \lrs offers a possible explanation for cosine decay's effectiveness in practice and inspires two practical families of schedules with simple forms.


\section*{Acknowledgement}

This work is supported by GRF 16201320. Rui Pan acknowledges support from the Hong Kong PhD Fellowship Scheme (HKPFS). The work of Haishan Ye was supported in part by National Natural Science Foundation of China under Grant No. 12101491.

\bibliography{iclr2022_conference}
\bibliographystyle{iclr2022_conference}

\appendix

\newpage
\section{More Experimental Results}
\label{appendix:more_exp_results}

\subsection{Ridge Regression}

We compare different types of schedulings on ridge regression
\begin{align*}
    f(w) = \frac{1}{n} ||Xw - Y||_2^2 + \alpha ||w||_2^2.
\end{align*}

This experiment is only an empirical proof of our theory. In fact, the optima of ridge regression has a closed form and can be directly computed with
\begin{align*}
    w_* = \left(X^\top X + n \alpha I\right)^{-1} X^T Y.
\end{align*}

Thus the optimal training loss $f(w_*)$ can be calculated accordingly. In all experiments, we use the loss gap $f(w^T) - f(w_*)$ as our performance metric.

Experiments are conducted on a4a datasets ~\citep{chang2011libsvm, Dua2017UCI} (\texttt{\url{https://www.csie.ntu.edu.tw/\~cjlin/libsvmtools/datasets/binary.html\#a4a/}}), which contains $4,781$ samples and $123$ features. This dataset is chosen majorly because it has a moderate number of samples and features, which enables us to compute the exact Hessian matrix $H = 2(X^\top X / n + \alpha I)$ and its corresponding eigenvalue distribution in acceptable time and space consumption. 

In all of our experiments, we set $\alpha=10^{-3}$. The model is optimized via SGD without momentum, with batch size 1, initial learning rate $\eta_0 \in \{$$0.1$, $0.06$, $0.03$, $0.02$, $0.01$, $0.006$, $0.003$, $0.002$, $0.001$, $0.0006$, $0.0003$, $0.0002$, $0.0001\}$ and learning rate of last iteration $\eta_{\min} \in \{$$0.1$, $0.01$, $0.001$, $0.0001$, $0.00001$, $0$, $\text{``UNRESTRICTED''}\}$. Here ``UNRESTRICTED'' denotes the case where $\eta_{\min}$ is not set, which is useful for \lrss, who can decide the learning rate curve without setting $\eta_{\min}$. Given $\eta_0$ and $\eta_{\min}$, we adjust all schedulers as follows. For inverse time decay $\eta_t = \eta_0 / (1 + \gamma \eta_0 t)$ and exponential decay $\eta_t = \gamma^t \eta_0$, the hyperparameter $\gamma$ is computed accordingly based on $\eta_0$ and $\eta_{\min}$. For \cosss, $\eta_0$ and $\eta_{\min}$ is directly used, with no restart adopted. For \lrss, the learning rate curve is linearly scaled to match the given $\eta_{\min}$.

In addition, for \lrss, we use the eigenvalue distribution of the Hessian matrix, which is directly computed via eigenvalue decomposition, as shown in Figure~\ref{fig:a4a_ridge_eigenvalue_distrib}.

\begin{figure}[h]
    \centering
    \includegraphics[scale=0.4]{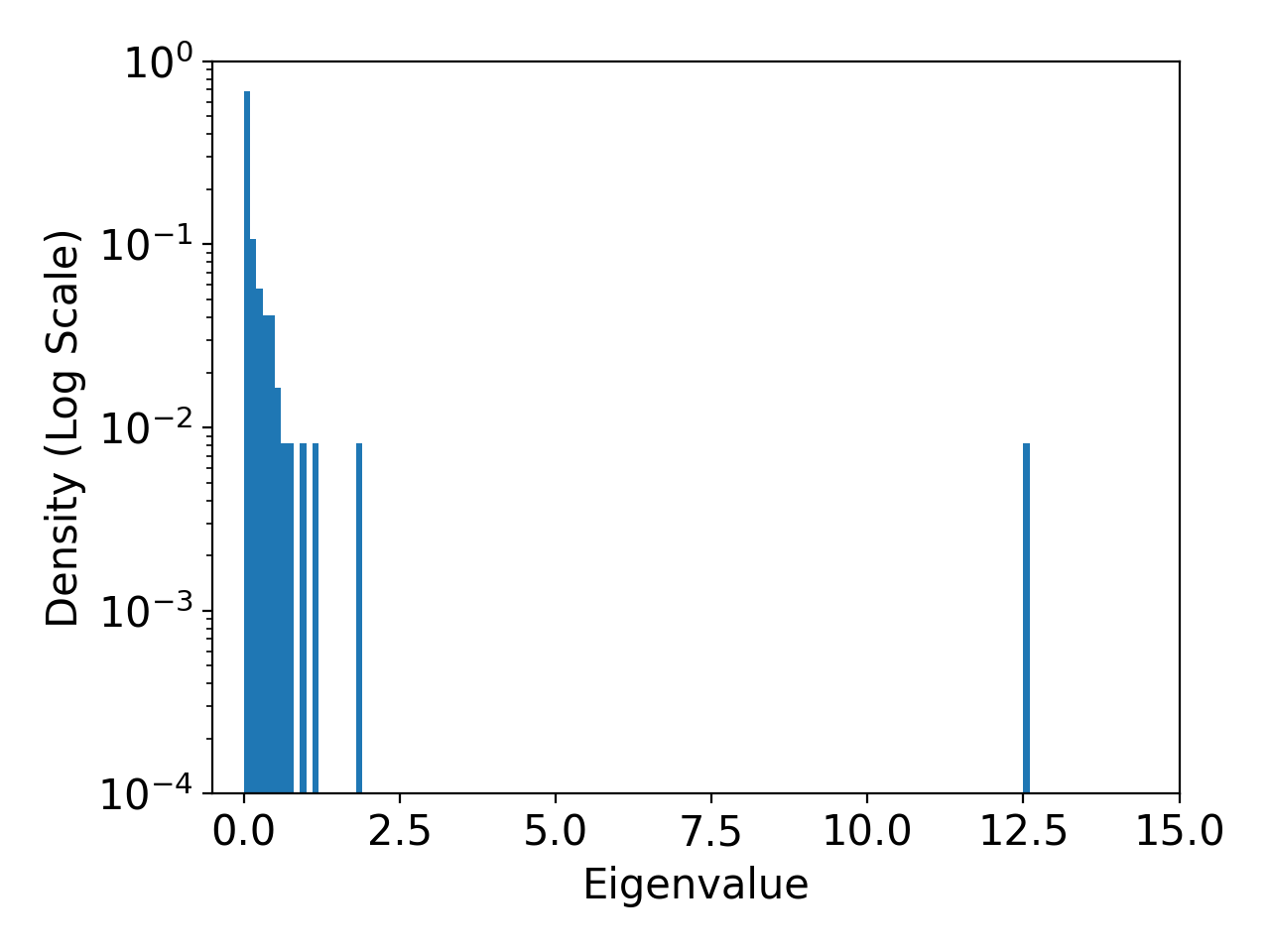}
    \includegraphics[scale=0.4]{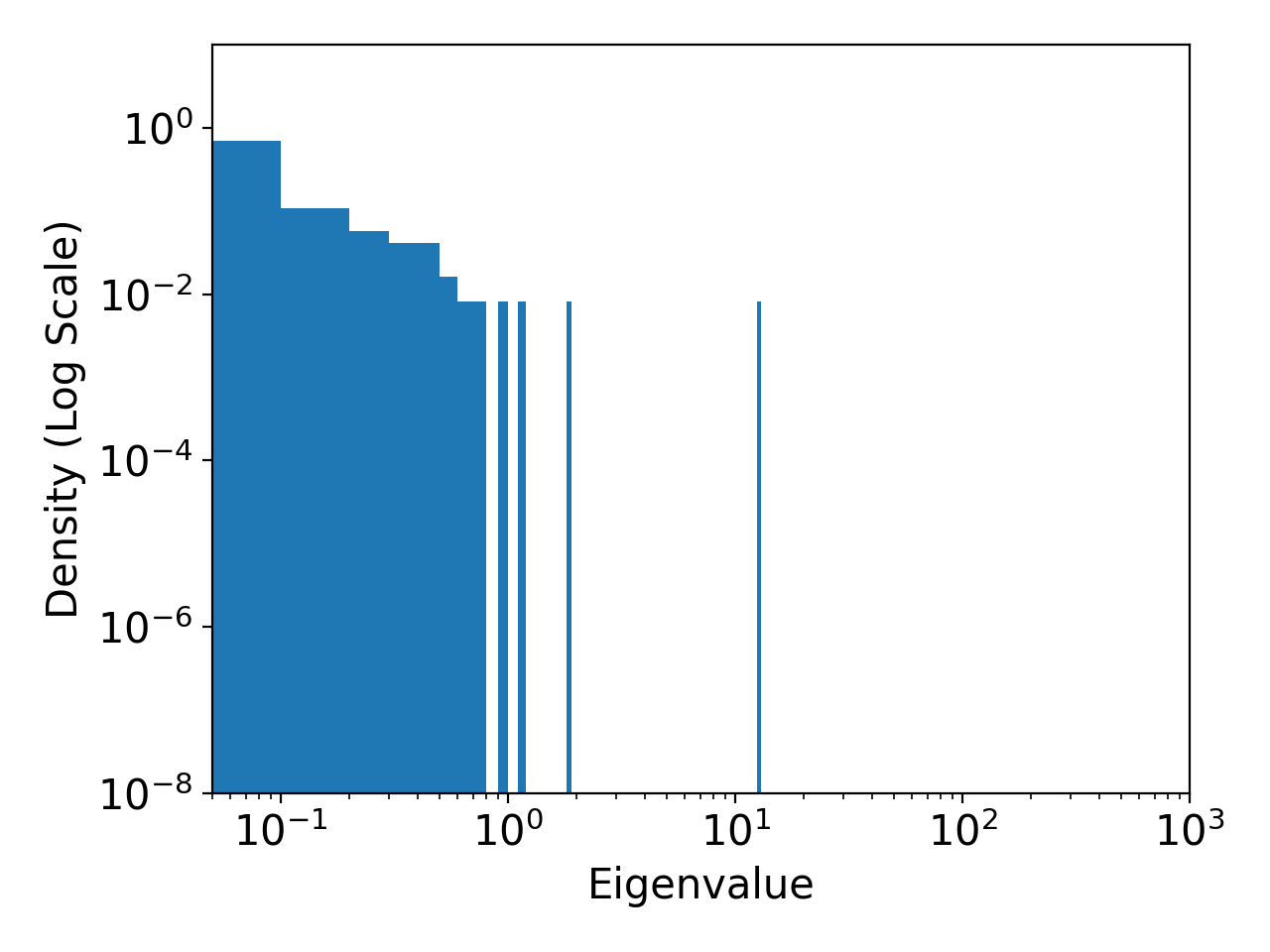}
    \caption{The eigenvalue distribution of Hessian for ridge regression on a4a. Left: original scale for eigenvalues. Right: log scale for eigenvalues. Notice that the density here is shown in log scale.}
    \label{fig:a4a_ridge_eigenvalue_distrib}
\end{figure}

All experimental results demonstrate that \lrs can obtain similar or better training losses when compared with other schedulers, as shown in Table~\ref{tab:a4a_ridge_results}.

\begin{table}[h!]
  \small
  \caption{Ridge regression: training loss gaps of different schedules over 5 trials.}
  \label{tab:a4a_ridge_results}
  \begin{center}
    \begin{tabular}{ccc}
      \toprule
      & \multicolumn{2}{c}{Training loss - optimal training loss: $f(w^T) - f(w_*)$}
      \\
      \midrule
      Schedule
      & \#Epoch = 1
      & \#Epoch = 5
      \\
      \midrule

      Constant
      & 0.014963$\pm$0.001369
      & 0.004787$\pm$0.000175
      \\
      Inverse Time Decay
      & 0.007284$\pm$0.000190
      & 0.002098$\pm$0.000160
      \\
      Exponetial Decay
      & 0.008351$\pm$0.000360
      & 0.000931$\pm$0.000100
      \\
      \Coss
      & 0.007767$\pm$0.000006
      & 0.001167$\pm$0.000142
      \\
      \LRS
      & \textbf{0.006977$\pm$0.000197}
      & \textbf{0.000676$\pm$0.000069}
      \\
      \midrule

      Schedule
      & \#Epoch = 25
      & \#Epoch = 250
      \\
      \midrule
      
      Constant
      & 0.001351$\pm$0.000179
      & 0.000122$\pm$0.000009
      \\
      Inverse Time Decay
      & 0.000637$\pm$0.000143
      & 0.000011$\pm$0.000001
      \\
      Exponetial Decay
      & 0.000048$\pm$0.000007
      & \textbf{0.000000$\pm$0.000000}
      \\
      \Coss
      & 0.000054$\pm$0.000005
      & \textbf{0.000000$\pm$0.000000}
      \\
      \LRS
      & \textbf{0.000045$\pm$0.000008}
      & \textbf{0.000000$\pm$0.000000}
      \\
      \bottomrule
    \end{tabular}
  \end{center}
\end{table}


\pagebreak
\subsection{Exact Value of the Extra Term on CIFAR-10 Experiments}
\label{appendix:skewness_constant}

In Section~\ref{sec:hess_skew_spectrum}, we have already given the qualitative evidence that shows \lrs's optimality for practical settings on CIFAR-10. Here we strengthen this argument by providing the quantitative evidence as well. The exact value of the extra term is presented in Table~\ref{tab:scheduler_compare}, where we assume CIFAR-10 has batch size $128$, number of epochs $200$ and weight decay $5\ \times 10^{-4}$, while ImageNet has batch size $256$, number of epochs $90$ and weight decay $10^{-4}$.

\begin{table}[h!]
  \scriptsize
   \caption{Convergence rate of SGD with common schedulers given the estimated eigenvalue distribution of Hessian, assuming the objective is quadratic.}
  \label{tab:scheduler_compare}
  \begin{center}
    \begin{tabular}{cccccc}
      \toprule
      & & \multicolumn{4}{c}{Value of the \red{extra term}}
      \\ \midrule
      Scheduler
      & \makecell{Convergence\\rate of SGD in\\quadratic\\functions}
      & \makecell{CIFAR-10\\+ ResNet18}
      & \makecell{CIFAR-10\\+ GoogLeNet}
      & \makecell{CIFAR-10\\+ VGG16}
      & \makecell{ImageNet\\+ ResNet18}
      \\ \midrule
      \makecell{Inverse Time \\Decay}
      & $\Theta\left(\frac{d \sigma^2}{T} \cdot \red{\kappa}\right)$
      & $3.39 \times 10^5$
      & $4.92 \times 10^5$
      & $6.50 \times 10^5$
      & $6.80 \times 10^6$

      \\ \midrule
      Step Decay
      & \makecell{
        $\Theta\left(\frac{d \sigma^2}{T} \cdot \red{\log T}\right)$
      }
      & $16.25$
      & $16.25$
      & $16.25$
      & $18.78$

      \\ \midrule
      \LRS
      & \makecell{
        $\cO\left(\frac{d \sigma^2}{T} \cdot \red{\frac{\left(\sum_{i=0}^{I_{\max} - 1} \sqrt{s_i}\right)^2}{d}}\right)$ \\ \\
        $\mbox{ where } I_{\max} = \log_2 \kappa,$ \\
        $s_i = \# \lambda_j \in [\mu \cdot 2^i, \mu \cdot 2^{i+1})$
      }
      & $\bm{8.15}$
      & $\bm{5.97}$
      & $\bm{7.12}$
      & $\bm{12.61}$
      \\ \midrule
      Minimax optimal rate
      & \makecell{
        $\Omega\left(\frac{d \sigma^2}{T}\right)$
      }
      & 1
      & 1
      & 1
      & 1
      \\
      \bottomrule
    \end{tabular}
  \end{center}
\end{table}

It is worth noticing that the extra term's value of \lrs is independent from the number of iterations $T$, since the value $(\sum_{i=0}^{I_{\max} - 1} \sqrt{s_i})^2/d$ only depends on the Hessian spectrum. So basically \lrs has already achieved the minimax optimal rate (up to a constant) for models and datasets listed in Table~\ref{tab:scheduler_compare}, if the loss landscape around the optima can be approximated by quadratic functions. For full details of the estimation process, please refer to Appendix~\ref{appendix:cifar_setup}.

\subsection{Reusing \lrs for Different Models on CIFAR-10}
\label{appendix:reuse}

For image classification tasks on CIFAR-10, we check the performance of reusing ResNet-18's eigenvalue distribution for other models. As shown in Table~\ref{tab:cifar-10_reuse}, experimental results demonstrate that Hessian's eigenvalue distribution of Resnet-18 on CIFAR-10 can be applied to GoogLeNet/VGG16 and still achieves good peformance. Here the experiment settings are exactly the same as Section~\ref{sec:cifar-10_exp} in main paper.

\begin{table}[h!]
  \caption{CIFAR-10: training losses and test accuracy of different schedules over 5 trials. Here all \lrs schedules are generated based on ResNet-18's Hessian spectrums. ``*'' before a number means at least one occurrence of loss explosion among all 5 trial experiments.}
  \label{tab:cifar-10_reuse}
  \small
  \begin{center}
    \begin{tabular}{cccccc}
      \toprule
      \#Epoch
      & Schedule
      & \multicolumn{2}{c}{GoogLeNet}
      & \multicolumn{2}{c}{VGG16}
      \\
      &
      & \makecell{Loss} & \makecell{Acc(\%)}
      & \makecell{Loss} & \makecell{Acc(\%)}
      \\
      \midrule

      \multirow{5}{*}{=10}
      & Inverse Time Decay
      & 2.61$\pm$0.00
      & 86.54$\pm$0.94
      & 2.26$\pm$0.00
      & 84.47$\pm$0.74
      \\
      & Step Decay
      & 2.59$\pm$0.02
      & 87.04$\pm$0.48
      & 2.42$\pm$0.45
      & 82.98$\pm$0.27

			\\
      & General Step Decay
      & 1.93$\pm$0.03
      & 88.32$\pm$1.32
      & 2.14$\pm$0.42
      & 86.79$\pm$0.36

			\\
      & \Coss
      & 1.94$\pm$0.00
      & 90.56$\pm$0.31
      & 2.03$\pm$0.00
      & 87.99$\pm$0.13

			\\
      & \makecell{\LRS (transferred)}
      & \textbf{1.65$\pm$0.00}
      & \textbf{91.17$\pm$0.20}
      & \textbf{1.89$\pm$0.00}
      & \textbf{88.17$\pm$0.32}
      \\
      \midrule

      \multirow{4}{*}{=100}
      & Inverse Time Decay
      & 0.62$\pm$0.02
      & 92.05$\pm$0.69
      & 1.32$\pm$0.62
      & *76.24$\pm$13.77
      \\
      & Step Decay
      & 0.28$\pm$0.00
      & 92.83$\pm$0.15
      & 0.59$\pm$0.00
      & 91.37$\pm$0.20

      \\
      & General Step Decay
      & 0.13$\pm$0.00
      & 94.18$\pm$0.18
      & 0.20$\pm$0.00
      & *92.36$\pm$0.46

			\\
      & \Coss
      & 0.12$\pm$0.00
      & 94.62$\pm$0.11
      & 0.20$\pm$0.00
      & \textbf{93.17}$\pm$\textbf{0.05}

			\\
      & \makecell{\LRS (transferred)}
      & \textbf{0.11$\pm$0.00}
      & \textbf{94.81$\pm$0.19}
      & \textbf{0.20$\pm$0.00}
      & \textbf{93.17$\pm$0.09}
      \\
      \bottomrule
    \end{tabular}
  \end{center}
\end{table}

\subsection{Comparison with Exponential Moving Average on CIFAR-10}

Besides learning rate schedules, Exponential Moving Averaging (EMA) method
\begin{align*}
    \overline{w}_t = \alpha \sum_{k=0}^t (1 - \alpha)^{t-k} w_k
    \quad \Longleftrightarrow \quad
    \overline{w}_t = \alpha w_t + (1 - \alpha) \overline{w}_{t-1}
\end{align*}
is another competitive practical method that is commonly adopted in training neural networks with SGD. Thus, it is natural to ask whether \lrs can beat this method as well. The answer is yes. In Table~\ref{tab:cifar-10_ema}, we present additional experimental results on CIFAR-10 to compare the performance of \lrs and exponential moving averaging. It can be observed that there is a large performance gap between those two methods.

\begin{table}[h!]
  \caption{CIFAR-10: training losses and test accuracy of Exponential Moving Average (EMA) and \lrs with \#Epoch = 100 over 5 trials. For EMA, we search its constant learning rate $\eta_t = \eta_0 \in \{ 1.0, 0.6, 0.3, 0.2, 0.1 \}$ and decay $\alpha \in \{ 0.9, 0.95, 0.99, 0.995, 0.999 \}$. Other settings remain the same as Section~\ref{sec:cifar-10_exp}.}
  \label{tab:cifar-10_ema}
  \footnotesize
  \begin{center}
    \begin{tabular}{cccccccc}
      \toprule
      Method/Schedule
      & \multicolumn{2}{c}{ResNet-18}
      & \multicolumn{2}{c}{GoogLeNet}
      & \multicolumn{2}{c}{VGG16}
      \\
      & \makecell{Loss} & \makecell{Acc(\%)}
      & \makecell{Loss} & \makecell{Acc(\%)}
      & \makecell{Loss} & \makecell{Acc(\%)}
      \\
      \midrule
      EMA
      & 0.30$\pm$0.01
      & 90.09$\pm$0.82
      & 0.33$\pm$0.01
      & 93.42$\pm$0.26
      & 0.49$\pm$0.00
      & 91.87$\pm$0.82
			\\
      \LRS
      & \textbf{0.14}$\pm$\textbf{0.00} 
      & \textbf{94.05}$\pm$\textbf{0.18}
      & \textbf{0.12}$\pm$\textbf{0.00}
      & \textbf{94.75}$\pm$\textbf{0.15}
      & \textbf{0.18}$\pm$\textbf{0.00}
      & \textbf{92.88}$\pm$\textbf{0.24}
      \\
      \bottomrule
    \end{tabular}
  \end{center}
\end{table}
 
\subsection{ImageNet Classification with Elastic Step Decay}
\label{appendix:esd}

One key observation in CIFAR-10 experiments is the existence of ``power law'' shown in Figure~\ref{fig:other_eigenvalue_distrib}. Also, notice that in the form of \lrs, specifically Eqn.~\eqref{eq:delta}, iteration interval length $\Delta_i$ is proportional to the square root of eigenvalue density $s_i$ in range $[\mu \cdot 2^i, \mu \cdot 2^{i+1})$. Combining those two facts together, it suggests the length of ``learning rate interval'' should have lengths exponentially decreasing.

Based on this idea, Elastic Step Decay (ESD) is proposed, which has the following form,
\begin{align*}
  &\eta_t = \eta_0 / 2^k
  \mbox{ , if  } t \in \left[(1-r^k) T, (1-r^{k+1})T\right)
\end{align*}

Compared to general step decay with adjustable interval lengths, elastic step decay does not require manual adjustment for the length of each interval. Instead, they are all controlled by one hyperparameter $r \in (0, 1)$, which decides the ``shrinking speed'' of interval lengths. Experiments on CIFAR-10, CIFAR-100 and ImageNet demonstrate its superiority in practice, as shown in Table~\ref{tab:esd_cifar-10}, Table~\ref{tab:esd_imagenet}.

For experiments on CIFAR-10/CIFAR-100, we adopt the same settings as \lrs, except we only use common step decay with three same-length intervals + decay factor 10.


\begin{table}[h!]
  \caption{Elastic Step Decay on CIFAR-10/CIFAR-100: test accuracy(\%) of different schedules over 5 trials. ``*'' before a number means at least one occurrence of loss explosion among all 5 trial experiments.}
  \label{tab:esd_cifar-10}
  \scriptsize
  \begin{center}
    \begin{tabular}{cccccccc}
      \toprule
      \#Epoch
      & Schedule
      & \multicolumn{2}{c}{ResNet-18}
      & \multicolumn{2}{c}{GoogLeNet}
      & \multicolumn{2}{c}{VGG16}
      \\
      &
      & \makecell{CIFAR-10} & \makecell{CIFAR-100}
      & \makecell{CIFAR-10} & \makecell{CIFAR-100}
      & \makecell{CIFAR-10} & \makecell{CIFAR-100}
      \\
      \midrule

      \multirow{4}{*}{=10}
      & Inverse Time Decay
      & 79.45$\pm$1.00
      & 48.73$\pm$1.66
      & 86.54$\pm$0.94
      & 57.90$\pm$1.27
      & 84.47$\pm$0.74
      & 50.04$\pm$0.83
      \\
      & Step Decay
      & 79.67$\pm$0.74
      & 54.54$\pm$0.26
      & 88.37$\pm$0.13
      & 63.05$\pm$0.35
      & 85.18$\pm$0.06
      & 45.86$\pm$0.31
	  \\
      & \Coss
      & 84.23$\pm$0.07
      & 61.26$\pm$1.11
      & 90.56$\pm$0.31
      & 69.09$\pm$0.27
      & 87.99$\pm$0.13
      & 55.42$\pm$0.28
      \\
      & ESD
      & \textbf{85.38$\pm$0.38}
      & \textbf{64.17$\pm$0.57}
      & \textbf{91.23$\pm$0.33}
      & \textbf{70.46$\pm$0.41}
      & \textbf{88.67$\pm$0.21}
      & \textbf{57.23$\pm$0.39}
      \\
      \midrule

      \multirow{4}{*}{=100}
      & Inverse Time Decay
      & 90.82$\pm$0.43
      & 69.82$\pm$0.37
      & 92.05$\pm$0.69
      & 73.54$\pm$0.28
      & *76.24$\pm$13.77
      & 67.70$\pm$0.49
      \\
      & Step Decay
      & 93.68$\pm$0.07
      & 73.13$\pm$0.12
      & 94.13$\pm$0.32
      & 76.80$\pm$0.16
      & 92.62$\pm$0.15
      & 70.02$\pm$0.41
	  \\
      & \Coss
      & 94.04$\pm$0.21
      & 74.65$\pm$0.41
      & 94.62$\pm$0.11
      & 78.13$\pm$0.54
      & 93.17$\pm$0.05
      & 72.47$\pm$0.28
	  \\
      & ESD
      & \textbf{94.06$\pm$0.11}
      & \textbf{74.76$\pm$0.33}
      & \textbf{94.65$\pm$0.11}
      & \textbf{78.23$\pm$0.20}
      & \textbf{93.25$\pm$0.12}
      & \textbf{72.50$\pm$0.26}
      \\
      \bottomrule
    \end{tabular}
  \end{center}
\end{table}

For experiments on ImageNet, we use ResNet-50 trained via SGD without momentum, batch size $256$ and weight decay $wd=10^{-4}$. Since no momentum is used, the initial learning rate is set to $\eta_0=1.0$ instead of $\eta_0=0.1$. Two step decay baselines are adopted. ``Step Decay [30-60]'' is the common choice that decays the learning rate 10 folds at the end of epoch $30$ and epoch $60$. ``Step Decay [30-60-80]'' is another popular choice for the ImageNet setting~\citep{goyal2018accurate}, which further decays learning rate 10 folds at epoch $80$. For cosine decay scheduler, the hyperparameter $\eta_{\min}$ is set to be 0. As for the dataset, we use the common ILSVRC 2012 dataset, which contains 1000 classes, around 1.2M images for training and 50,000 images for validation.
For all experiments, we search $r \in \{1/2, 1/\sqrt{2}\}$ for ESD, with other hyperparameter search and selection process being the same as \lrs.

\begin{table}[h!]
  \small
  \caption{Elastic Step Decay on ImageNet-1k: Losses and validation accuracy of different schedulings for ResNet-50 with \#Epoch=90  over 3 trials.}
  \label{tab:esd_imagenet}
  \begin{center}
    \begin{tabular}{ccccc}
      \toprule
      & Schedule
      & Training loss
      & \makecell{Top-1 validation\\acc(\%)}
      & \makecell{Top-5 validation\\acc(\%)}
      \\
      \midrule

      \multirow{4}{*}{\#Epoch=90}
        & Step Decay [30-60]
        & 1.4726$\pm$0.0057
        & 75.55$\pm$0.13
        & 92.63$\pm$0.08
        \\
        & Step Decay [30-60-80]
        & 1.4738$\pm$0.0080
        & 76.05$\pm$0.33
        & 92.83$\pm$0.15
        \\
        & Cosine Decay
        & 1.4697$\pm$0.0049
        & 76.57$\pm$0.07
        & 93.25$\pm$0.05
        \\
        & ESD ($r=1/\sqrt{2}$)
        & \textbf{1.4317}$\pm$\textbf{0.0027}
        & \textbf{76.79}$\pm$\textbf{0.10}
        & \textbf{93.31}$\pm$\textbf{0.05}
			\\
      \bottomrule
    \end{tabular}
  \end{center}
\end{table}

\subsection{Language Modeling with Elastic Step Decay}
\label{appendix:esd_language_modeling}

More experiments on language modeling are conducted to further demonstrate Elastic Step Decay's superiority over other schedulers.

For all experiments, we follow almost the same setting in~\citet{zaremba2015recurrent}, where a large regularized LSTM recurrent neural network~\citep{lstm1997} is trained on Penn Treebank~\citep{marcus-etal-1993-building} for language modeling task. The Penn Treebank dataset has a training set of 929k words, a validation set of 73k words and a test set of 82k words. SGD without momentum is adopted for training, with batch size 20 and 35 unrolling steps in LSTM.

Other details are exactly the same, except for the number of training epochs. In~\citet{zaremba2015recurrent}, it uses 55 epochs to train
the large regularized LSTM, which is changed to 30 epochs in our setting,
since we found that the model starts overfitting after 30 epochs. We conducted
hyperparameter search for all schedules, as shown in
Table~\ref{tab:lstm_hyperparams}.

\begin{table}[h!]
  \scriptsize
  \centering
    \caption{Hyperparameter search for schedulers.}
    \label{tab:lstm_hyperparams}
    \begin{tabular}{ccc}
      \toprule 

      Scheduler
      & Form
      & \makecell{Hyperparameter choices}
      \\

      \midrule
      Inverse Time Decay
      & $\eta_t = \frac{\eta_0}{1 + \lambda \cdot \eta_0 \cdot t}$
      & \makecell{
        $\eta_0 \in \{10^0, 10^{-1}, 10^{-2}, 10^{-3}\}$,
        \\
        and set $\lambda$, so that
        \\
        $\eta_{\min} \in \{10^{-2}, 10^{-3}, 10^{-4}, 10^{-5}, 10^{-6}\}$
      }
      \\

      \midrule
      General Step Decay
      & \makecell{
        $\eta_t = \eta_0 \cdot \gamma^k$,
        \\
        if $t \in \left[k, k+1\right) \cdot \frac{T}{K}$
      }
      & \makecell{
        $\eta_0 = 1$, \\
        $K \in \{ 3, 4, 5, \floor*{\log T}\}, \floor*{\log T} + 1\}$,
        \\
        $\gamma \in \{\frac{1}{2}, \frac{1}{5}, \frac{1}{10}\}$
      }
      \\

      \midrule
      Cosine Decay
      & $\eta_t = \eta_{\min} + \frac{1}{2} \left(\eta_0 - \eta_{\min}\right) \left(1 + \cos\left(\frac{t \pi}{T}\right)\right)$
      & \makecell{
      $\eta_0 \in \{10^0, 10^{-1}, 10^{-2}, 10^{-3}\}$,
      \\
      $\eta_{\min} \in \{10^{-2}, 10^{-3}, 10^{-4}, 0\}$
      }
      \\

      \midrule
      Elastic Step Decay
      & \makecell{
        $\eta_t = \eta_0 / 2^k,$
        \\
        $\mbox{ if  } t \in \left[(1-r^k) T, (1-r^{k+1})T\right)$
      }
      & \makecell{
        $\eta_0 = 1$, \\
        $r \in \{ 2^{-1}, 2^{-1/2}, 2^{-1/3}, 2^{-1/5}, 2^{-1/20} \}$,
      }
      \\

      \midrule
      \makecell{Baseline}
      & \makecell{
        $
        \eta_t = \begin{cases}
          \eta_0 & \mbox{ for first 14 epochs} \\
          \frac{\eta_0}{1.15^k} & \mbox{ for epoch } k + 14
        \end{cases}
        $
      }
      & $\eta_0 = 1$
      \\
      \bottomrule
    \end{tabular}
\end{table}

Experimental results show that Elastic Step Decay significantly outperforms other schedulers, as shown in Table~\ref{tab:lstm_ptb}.

\begin{table}[h!]
  \small
  \caption{Scheduler performance on LSTM + Penn Treebank over 5 trials.}
  \label{tab:lstm_ptb}
  \centering
    \begin{tabular}{cccc}
      \toprule 

      Scheduler
      & \makecell{Validation perplexity}
      & \makecell{Test perplexity}
      \\
      \midrule

      Inverse Time Decay
      & 114.9$\pm$1.1
      & 112.7$\pm$1.1
      \\
      General Step Decay
      & 82.4$\pm$0.1
      & 79.1$\pm$0.2
      \\
      Baseline~\citep{zaremba2015recurrent}
      & 82.2
      & 78.4
      \\
      Cosine Decay
      & 82.4$\pm$0.4
      & 78.5$\pm$0.4
      \\
      Elastic Step Decay
      & \textbf{81.1}$\pm$\textbf{0.2}
      & \textbf{77.4}$\pm$\textbf{0.3}
      \\
      \bottomrule
    \end{tabular}
\end{table}

\newpage
\subsection{Image Classification on ImageNet with Cosine-power Scheduling}
\label{appendix:cosine-power}



Another key observation in CIFAR-10 experiments is that \lrs's learning rate curve shape changes in a fixed tendency: \textit{more ``concave'' learning rate curves for less training epochs}, which inspire the cosine-power schedule in following form.
\begin{align*}
  & \text{Cosine-power}: \eta_{t} = \eta_{\min} + (\eta_0 -
  \eta_{\min}) \left[ \frac{1}{2} (1 + \cos(\frac{t}{t_{\max}} \pi)) \right]^{\alpha} \\
\end{align*}


Results in Table~\ref{tab:cosine_variant_imagenet} show the schedulings'
performance with $\alpha=0.5/1/2$, which are denoted as \textcolor{red}{$\sqrt{\text{Cosine}}$}/\textcolor{green}{Cosine}/\textcolor{blue}{$\text{Cosine}^2$} respectively. Notice that the best scheduler gradually moves from small $\alpha$ to larger $\alpha$ when the number of epochs increases. For \#epoch=270, since the number of epochs is large enough to make model converge, it is reasonable that the accuracy gap between all schedulers is small.

For experiments on ImageNet, we use ResNet-18 trained via SGD without momentum, batch size $256$ and weight decay $wd=10^{-4}$. Since no momentum is used, the initial learning rate is set to $\eta_0=1.0$ instead of $\eta_0=0.1$. The hyperparameters $\eta_{\min}$ is set to be 0 for all cosine-power scheduler. As for the dataset, we use the common ILSVRC 2012 dataset, which contains 1000 classes, around 1.2M images for training and 50,000 images for validation.


\begin{table}[h!]
  \scriptsize
  \caption{Cosine-power Decay on ImageNet: training losses and validation accuracy (\%) of different schedulings for ResNet-18 over 3 trials. Settings \#Epoch$\ge 90$ only have 1 trial due to constraints of resource and time.}
  \label{tab:cosine_variant_imagenet}
  \centering
    \begin{tabular}{ccccc}
      \toprule
      \#Epoch
      & Schedule
      & \makecell{Training \\loss}
      & \makecell{Top-1 \\validation acc (\%)}
      & \makecell{Top-5 \\validation acc (\%)}
      \\
      \midrule

      \multirow{3}{*}{1}
        & \color{red}\textbf{$\sqrt{\text{Cosine}}$}
        & \textbf{5.4085$\pm$0.0080}
        & \textbf{30.01$\pm$0.21}
        & \textbf{55.26$\pm$0.33}
        \\
        & Cosine
        & 5.4330$\pm$0.0106
        & 26.43$\pm$0.31
        & 50.85$\pm$0.43
			  \\
        & $\text{Cosine}^2$
        & 5.4939$\pm$0.0157
        & 21.81$\pm$0.21
        & 44.53$\pm$0.09
			\\
      \midrule

      \multirow{3}{*}{5}
        & \color{red}\textbf{$\sqrt{\text{Cosine}}$}
        & 2.9515$\pm$0.0057
        & \textbf{57.27$\pm$0.15}
        & \textbf{80.71$\pm$0.12}
        \\
        & Cosine
        & \textbf{2.8389$\pm$0.0061}
        & 55.67$\pm$0.08
        & 79.46$\pm$0.16
			  \\
        & $\text{Cosine}^2$
        & 2.9160$\pm$0.0099
        & 52.75$\pm$0.20
        & 77.11$\pm$0.08
			\\
      \midrule

      \multirow{3}{*}{30}
        & $\sqrt{\text{Cosine}}$
        & 2.1739$\pm$0.0046
        & 67.56$\pm$0.03
        & 87.82$\pm$0.09
        \\
        & \color{green}\textbf{Cosine}
        & \textbf{2.0402$\pm$0.0031}
        & \textbf{67.97$\pm$0.10}
        & \textbf{88.12$\pm$0.03}
			  \\
        & $\text{Cosine}^2$
        & 2.0525$\pm$0.0032
        & 67.41$\pm$0.05
        & 87.70$\pm$0.10
			\\

      \midrule
      \multirow{3}{*}{90}
        & $\sqrt{\text{Cosine}}$
        & 1.9056
        & 69.85
        & 89.46
        \\
        & \color{green}\textbf{Cosine}
        & 1.7676
        & \textbf{70.46}
        & \textbf{89.75}
			  \\
        & $\text{Cosine}^2$
        & \textbf{1.7403}
        & 70.42
        & 89.69
			\\

      \midrule
      \multirow{3}{*}{270}
        & $\sqrt{\text{Cosine}}$
        & 1.7178
        & 71.37
        & 90.31
        \\
			  & Cosine
        & 1.5756
        & \textbf{71.93}
        & 90.33
			  \\
        & \color{blue}\textbf{$\text{Cosine}^2$}
        & \textbf{1.5250}
        & 71.69
        & \textbf{90.37}
			\\
      \bottomrule
      \\
    \end{tabular}
\end{table}

\begin{figure}[h!]
    \centering
    \includegraphics[scale=0.35]{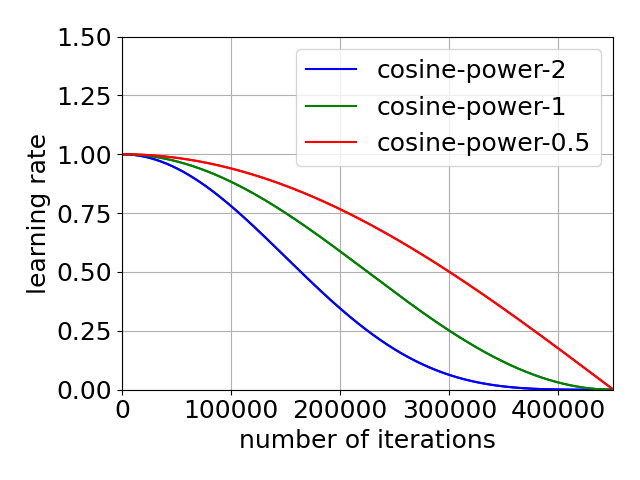}
    \includegraphics[scale=0.35]{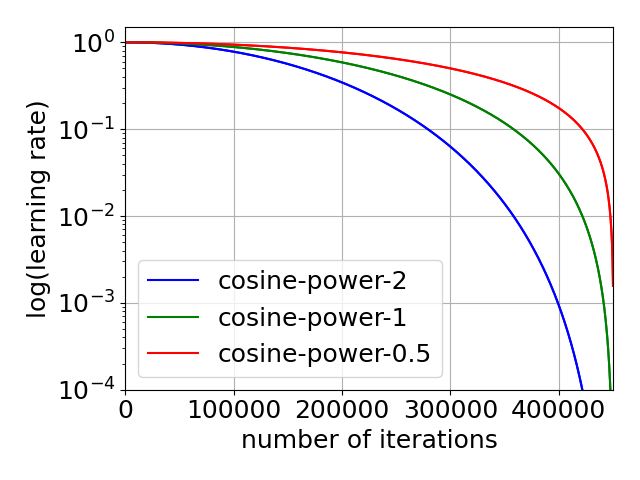}
    \caption{Learning rate curve of three cosine-power schedulers. Top: original scale; Bottom: log scale.}
    \label{fig:imagenet_curve}
\end{figure}


\newpage
\subsection{Full Figures for Eigencurve Experiments in Section~\ref{sec:cifar-10_exp}}
\label{appendix:cifar-10_full_fig}

Please refer to Figure~\ref{fig:cifar-10_resnet_epoch-10}, ~\ref{fig:cifar-10_googlenet_epoch-10}, ~\ref{fig:cifar-10_vgg16_epoch-10}, ~\ref{fig:cifar-10_resnet_epoch-100}, ~\ref{fig:cifar-10_googlenet_epoch-100} and~\ref{fig:cifar-10_vgg16_epoch-100}.

\begin{figure}[h!]
    \centering
    \includegraphics[scale=0.4]{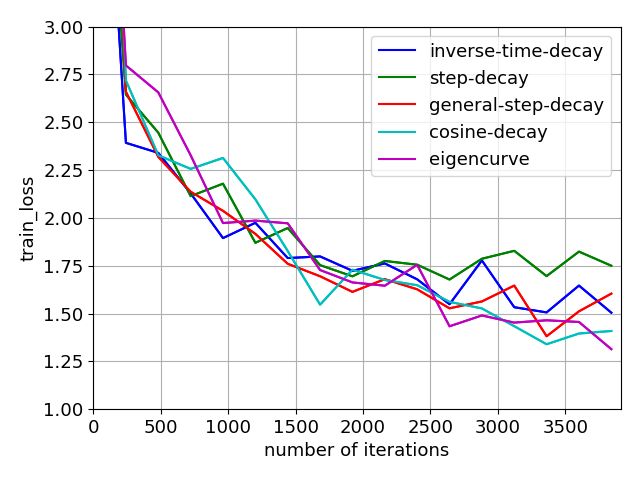}
    \includegraphics[scale=0.4]{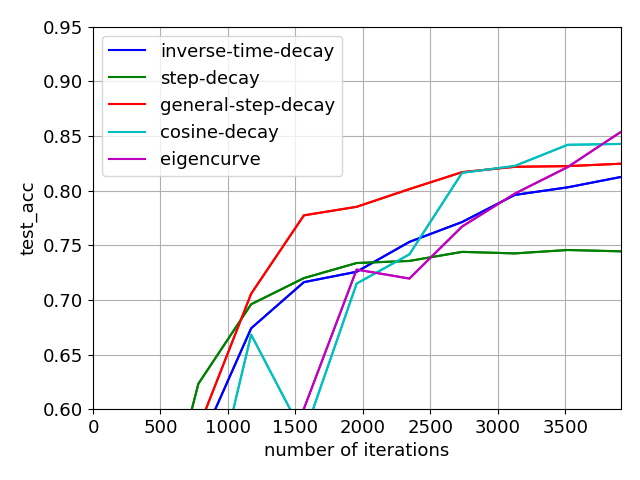}
    \caption{CIFAR-10 results for ResNet-18, with \#Epoch = 10. Left: training losses. Right: test accuracy.}
    \label{fig:cifar-10_resnet_epoch-10}
\end{figure}

\begin{figure}[h!]
    \centering
    \includegraphics[scale=0.4]{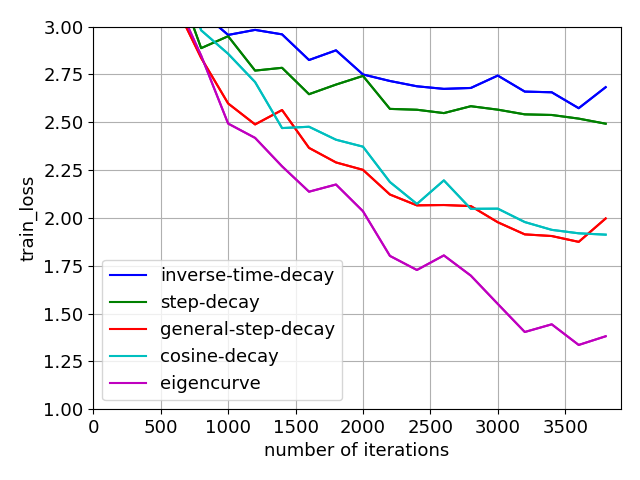}
    \includegraphics[scale=0.4]{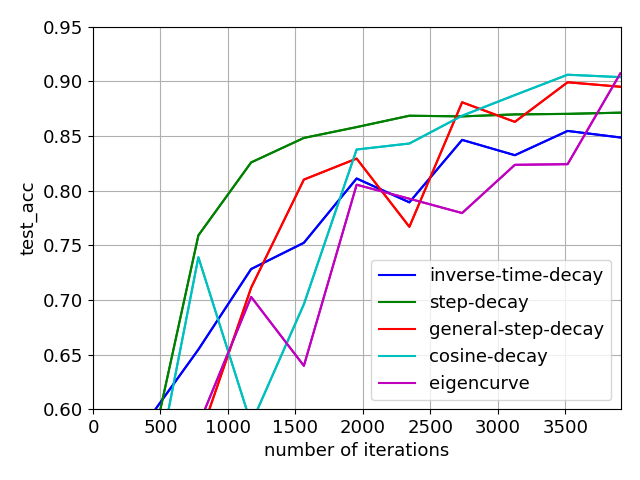}
    \caption{CIFAR-10 results for GoogLeNet, with \#Epoch = 10. Left: training losses. Right: test accuracy.}
    \label{fig:cifar-10_googlenet_epoch-10}
\end{figure}

\begin{figure}[h!]
    \centering
    \includegraphics[scale=0.4]{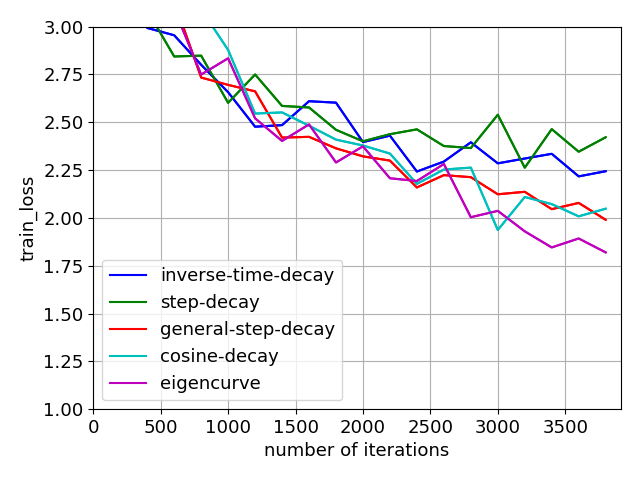}
    \includegraphics[scale=0.4]{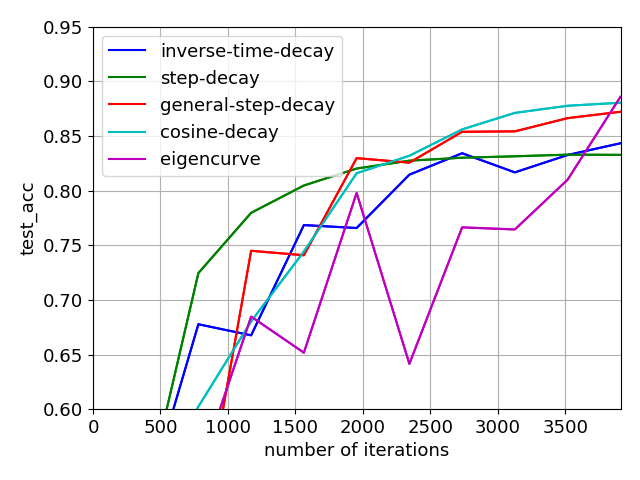}
    \caption{CIFAR-10 results for VGG16, with \#Epoch = 10. Left: training losses. Right: test accuracy.}
    \label{fig:cifar-10_vgg16_epoch-10}
\end{figure}

\begin{figure}[h!]
    \centering
    \includegraphics[scale=0.4]{supp-img/resnet_epoch-100_exp-00095-stat_train_loss.png}
    \includegraphics[scale=0.4]{supp-img/resnet_epoch-100_exp-00095-stat_test_acc.png}
    \caption{CIFAR-10 results for ResNet-18, with \#Epoch = 100. Left: training losses. Right: test accuracy.}
    \label{fig:cifar-10_resnet_epoch-100}
\end{figure}

\begin{figure}[h!]
    \centering
    \includegraphics[scale=0.4]{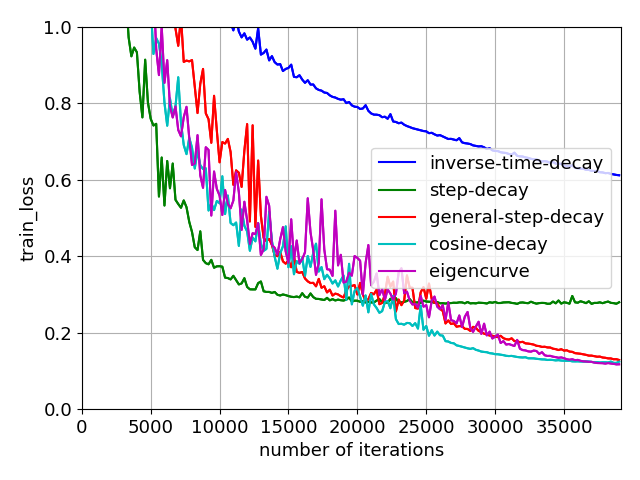}
    \includegraphics[scale=0.4]{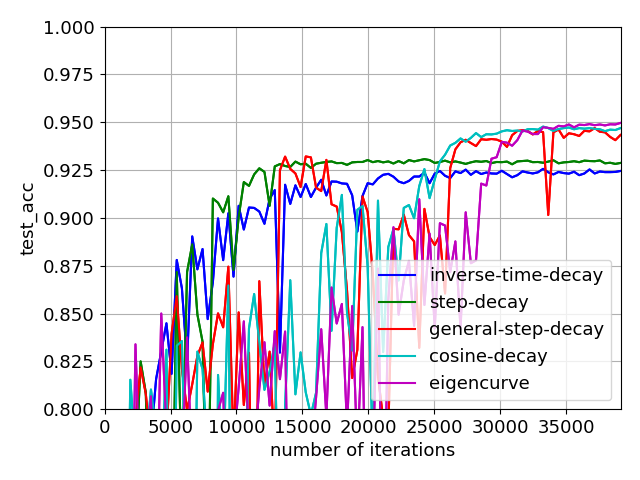}
    \caption{CIFAR-10 results for GoogLeNet, with \#Epoch = 100. Left: training losses. Right: test accuracy.}
    \label{fig:cifar-10_googlenet_epoch-100}
\end{figure}

\begin{figure}[h!]
    \centering
    \includegraphics[scale=0.4]{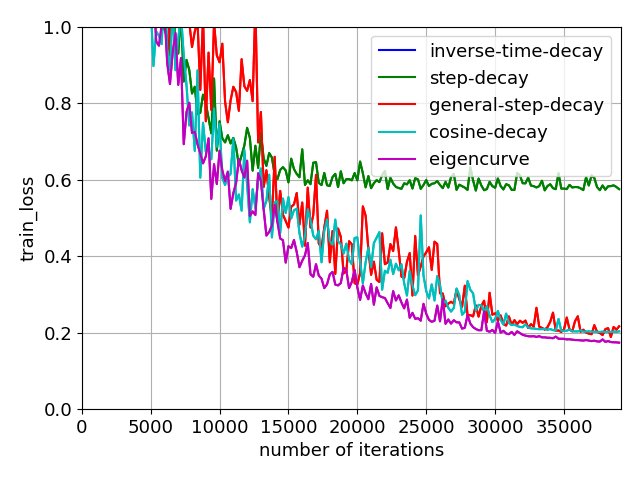}
    \includegraphics[scale=0.4]{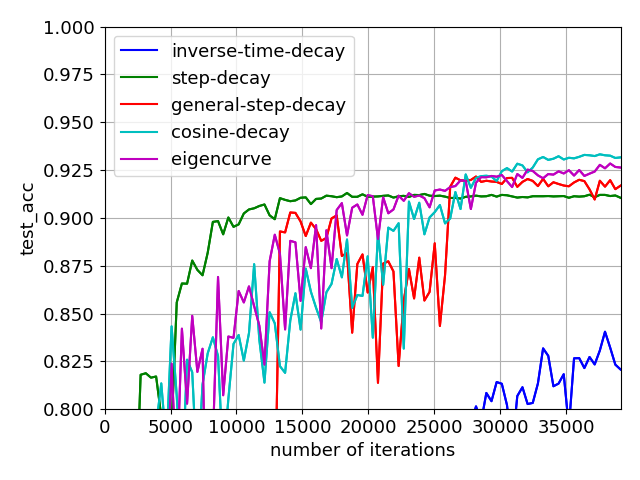}
    \caption{CIFAR-10 results for VGG16, with \#Epoch = 100. Left: training losses. Right: test accuracy.}
    \label{fig:cifar-10_vgg16_epoch-100}
\end{figure}

\pagebreak
\newpage
\section{Detailed Experimental Settings for Image Classification on CIFAR-10/CIFAR-100}
\label{appendix:cifar_setup}

\subsection{Basic Settings}
As mentioned in the main paper, all models are trained with stochastic gradient descent (SGD), no
momentum, batch size $128$ and weight decay $wd = 0.0005$. Furthermore, we perform a grid
search to choose the best hyperparameters of all schedulers, with a validation set created from $5,000$ samples in the training set, i.e. one-tenth of the training set.
The remaining $45,000$ samples are then used for training the model. After obtaining hyperparameters with the best validation accuracy, we train the model again with the full training set and test the trained model on test set, where 5 trials of experiments are conducted. The mean and standard deviation of the test results are reported.

Here the grid search explores hyperparameters $\eta_0 \in \{1.0, 0.6, 0.3, 0.2, 0.1\}$ and $\eta_{\min} \in \{0.01, 0.001, 0.0001, 0, \text{``UNRESTRICTED''}\}$, where $\eta_0$ denotes the initial learning rate and $\eta_{\min}$ stands for the learning rate of last iteration. ``UNRESTRICTED'' denotes the case where $\eta_{\min}$ is not set, which is useful for \lrss, who can decide the learning rate curve without setting $\eta_{\min}$. Given $\eta_0$ and $\eta_{\min}$, we adjust all schedulers as follows. For inverse time decay, the hyperparameter $\gamma$ is computed accordingly based on $\eta_0$ and $\eta_{\min}$. For \cosss, $\eta_0$ and $\eta_{\min}$ is directly used, with no restart adopted. For general step decay, we search the interval number in $\{ 3, 4, 5, \lfloor \log T \rfloor, \lfloor \log T \rfloor + 1 \}$ and decay factor in $\{ 2, 5, 10 \}$. For step decay proposed in ~\citet{ge2019step}, the interval number is fixed to be $\lfloor \log T \rfloor$, along with a decay factor $2$. For \lrss, two major modifications are made to make it more suitable for practical settings:
\begin{align*}
  \eta_t
  = \frac{1/L}{1 + \frac{1}{\kappa} \sum_{j=1}^{i-1} \Delta_j 2^{j-1} + \frac{2^{i-1}}{\kappa} (t - t_{i-1})}
  = \frac{\bm{\eta_0}}{1 + \frac{1}{\kappa} \sum_{j=1}^{i-1} \Delta_j \bm{\beta}^{j-1} + \frac{\bm{\beta}^{i-1}}{\kappa} (t - t_{i-1})}.
\end{align*}

Here we change $1/L$ to $\eta_0$ so that it is possible to adjust the initial
learning rate of \texttt{eigencurve}. We also change the fixed constant $2$ to a
general constant $\beta > 1$, which is aimed at making the learning rate curve smoother.  The learning rate curve of \lrs is then linearly scaled to match the given $\eta_{\min}$.

Notice that the learning rate $\eta_0$ can be larger than $1/L$, while the loss still does not explode. There are several explanations for this phenomenon. First, in basic non-smooth analysis of GD and SGD with inverse time decay scheduler, the learning rate can be larger than $1/L$ if the gradient norm is bounded~\citep{shamir2012stochastic}. Second, deep learning has a non-convex loss landscape, especially when the parameter is far away from the optima. Hence it is common to use larger learning rate at first. As long as the loss does not explode, it is okay. So we still include large learning rate $\eta_0$ in our grid search process.

\subsection{Settings for \LRS}
\label{appendix:pyhess}

In addition, for our \lrs scheduler, we use PyHessian~\citep{yao2020pyhessian} to generate Hessian matrix's eigenvalue distribution for all models. The whole process consists of three phases, which are illustrated as follows.

\paragraph{1) Training the model}

Almost all CNN models on CIFAR-10 have non-convex objectives, thus the Hessian's eigenvalue distributions are different for different parameters. Normally, we want the this distribution to reflect the overall tendency of most parts of the training process. According to the phenomenon demonstrated in Appendix E, figure A.11-A.17~of \citet{yao2020pyhessian}, the eigenvalue distribution of ResNet's Hessian presents similar tendency after training 30 epochs, which suggests that the Hessian's eigenvalue distribution can be used after sufficient training.

In all CIFAR-10 experiments, we use the Hessian's eigenvalue distribution of models after training 180 epochs. Since the goal here is to sufficiently train the model, not to obtain good performance, common baseline settings are adopted for training. For all models used for eigenvalue distribution estimation, we adopt SGD with momentum $=0.9$, batch size $128$, weight decay $wd = 0.0005$ and initial learning rate $0.1$. On top of that, we use step decay, which decays the learning rate by a factor of $10$ at epoch $80$ and $120$. All of them are default settings of the PyHessian code (\texttt{\url{https://github.com/amirgholami/PyHessian/blob/master/training.py}}, commit: f4c3f77).

ImageNet adopts a similar setting, with training epochs being 90, SGD with momentum $= 0.9$, batch size $256$, weight decay $wd = 0.0001$, inital learning rate $0.1$ and step decay schedule decays learning rate by a factor of $10$ at epoch $30$ and $60$.

\paragraph{2) Estimating Hessian matrix's eigenvalue distribution for the trained model}

After obtaining the checkpoint of a sufficiently trained model, we then run PyHessian to estimate the Hessian's eigenvalue distribution for that checkpoint. The goal here is to obtain the Hessian's eigenvalue distribution with sufficient precision. To be more specific, the length of intervals around each estimated eigenvalue. PyHessian estimates the eigenvalue spectral density (ESD) of a model's Hessian, in other words, the output is a list of eigenvalue intervals, along with the density of each interval, where the whole density adds up to $1$. Precision means the interval length here. 

It is natural that the estimation precision is related to the complexity of the PyHessian algorithm, e.g. the better precision it yields, the more time and space it consumes. More specifically, the algorithm has a time complexity of $\cO(N n^2_v d)$ and space complexity $\cO(Bd + n_v d)$, where $d$ is the number of model parameters, $N$ is the number of samples used for estimating the ESD, $B$ is the batch size and $n_v$ is the iteration number of Stochastic Lanczos Quadrature used in PyHessian, which controls the estimation precision (see Algorithm 1 of~\citet{yao2020pyhessian}).

In our experiments, we use $n_v=5000$ for ResNet-18 and $n_v=3000$ for GoogLeNet/VGG16, which gives an eigenvalue distribution estimation with precision around $10^{-5}$ to $10^{-4}$. $N$ and $B$ are both set to $200$ due to GPU memory constraint, i.e. we use one mini-batch to estimate the eigenvalue distribution. It turns out that this one-batch estimation is good enough and yields similar results to full dataset settings shown in~\citet{yao2020pyhessian}.

However, space complexity is still a bottleneck here. Due to the large number of $n_v$ and space complexity $\cO(Bd + n_v d)$ of PyHessian, the value of $d$ cannot be very large. In practice, with a NVIDIA GeForce 2080 Ti GPU, which has around $11$GB memory, the maximum acceptable parameter number $d$ is around $200K-400K$. This implies that the model has to be compressed. In our experiments, we reduce the number of channels by a factor of $C$ for all models. For ResNet-18, $C=16$. For GoogLeNet, $C=4$. For VGG16, $C=8$. Notice that those compressed models are only used for eigenvalue distribution estimation. In experiments of comparing different scheduling, we still use the original model with no compression.

One may refer to \texttt{\url{https://github.com/opensource12345678/why_cosine_works/tree/main/eigenvalue_distribution}} for generated eigenvalue distributions.

\paragraph{3) Generating \lrs scheduler with the estimated eigenvalue distribution}

After obtaining the eigenvalue distribution, we do a preprocessing before plug it into our \lrs scheduler.

First, we notice that there are negative eigenvalues in the final distribution. Theoretically, if the parameter is right at the optimal point, no negative eigenvalues should exist for Hessian matrix. Thus we conjecture that those negative eigenvalues are caused by the fact that the model is closed to optima $w_*$, but not exactly at that point. Furthermore, the estimation precision loss can be another cause. In fact, most of those negative eigenvalues are small, e.g. $98.6\%$ of those negative eigenvalues lie in $[-0.1, 0)$, and can be generally ignored without much loss. In our case, we set them to their absolute values.

Second, for a given weight decay value $wd$, we need to take the implicit L2 regularization into account, since it affects the Hessian matrix as well. Therefore, for all eigenvalues after the first step, we add $wd$ to them.

After preprocessing, we plug the eigenvalue distribution into our \lrs scheduler and generates the exact form of \lrss.
\begin{align*}
  \eta_t
  = \frac{1/L}{1 + \frac{1}{\kappa} \sum_{j=1}^{i-1} \Delta_j 2^{j-1} + \frac{2^{i-1}}{\kappa} (t - t_{i-1})}
  = \frac{\bm{\eta_0}}{1 + \frac{1}{\kappa} \sum_{j=1}^{i-1} \Delta_j \bm{\beta}^{j-1} + \frac{\bm{\beta}^{i-1}}{\kappa} (t - t_{i-1})}
\end{align*}

For experiments with 100 epochs, we set $\beta=1.000005$, so that the learning rate curve is much smoother. For experiments with 10 epochs, we set $\beta=2.0$. In our experiments, $\beta$ serves as a fixed constant, not hyperparameters. So no hyperparameter search is conducted on $\beta$. One can do that in practice though, if computation resource allows.


\subsection{Compute Resource and Implementation Details}

All the code for results in main paper can be found in \texttt{\url{https://github.com/opensource12345678/why_cosine_works/tree/main}}, which is released under the MIT license.

All experiments on CIFAR-10/CIFAR-100 are conducted on a single NVIDIA GeForce 2080 Ti GPU, where ResNet-18/GoogLeNet/VGG16 takes around 20mins/90mins/40mins to train 100 epochs, respectively. High-precision eigenvalue distribution estimation, e.g. $n_v \ge 3000$, requires around 1-2 days to complete, but this is no longer necessary given the released results.

The ResNet-18 model is implemented in Tensorflow 2.0. We use tensorflow-gpu 2.3.1 in our code. The GoogLeNet and VGG16 model is implemented in Pytorch, specifically, 1.7.0+cu101.

\subsection{License of PyHessian}

According to \texttt{\url{https://github.com/amirgholami/PyHessian/blob/master/LICENSE}}, PyHessian~\citep{yao2020pyhessian} is released under the MIT License.

\section{Detailed Experimental Settings for Image Classification on ImageNet}
\label{appendix:imagenet_setup}

One may refer to \texttt{\url{https://www.image-net.org/download}} for specific terms of access for ImageNet. The dataset can be downloaded from \texttt{\url{https://image-net.org/challenges/LSVRC/2012/2012-downloads.php}}, with training set being ``Training images (Task 1 \& 2)'' and validation set being ``Validation images (all tasks)''. Notice that registration and verification of institute is required for successful download.

ResNet-18 experiments on ImageNet are conducted on two NVIDIA GeForce 2080 Ti GPUs with data parallelism, while ResNet-50 experiments are conducted on 4 GPUs in a similar fashion. Both models take around 2 days to train 90 epochs, about 20mins-30mins per epoch. Those ResNet models on ImageNet are implemented in Pytorch, specifically, 1.7.0+cu101.

\section{Important Propositions and Lemmas}
\label{appendix:important_props_and_lemmas}

\begin{proposition}
	\label{prop:dec}
	Letting $f(x)$ be a monotonically increasing function in the range $[t_0, \tilde{t}]$, then it holds that
	\begin{align}
		\label{eq:inc_int}
		\sum_{k = t_0}^{\tilde{t}-1} f(k) \le \int_{t_0}^{\tilde{t}} f(x)\; dx.
	\end{align}
	If $f(x)$ is monotonically decreasing in the range $[t_0, \tilde{t}]$, then it holds that 
	\begin{align}
		\label{eq:dec_int}
	\int_{t_0}^{\tilde{t}} f(x)\; dx 
	\le
	\sum_{k = t_0}^{\tilde{t}-1} f(k) 
	\le	
	\sum_{k = t_0}^{\tilde{t}} f(k) 
	\le 
	f(t_0) + \int_{t_0}^{\tilde{t}} f(x)\; dx.
	\end{align}
\end{proposition}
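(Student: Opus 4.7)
The plan is to prove both parts by the standard Riemann-sum versus integral comparison for monotone functions: partition $[t_0, \tilde{t}]$ into unit subintervals $[k, k+1]$ for integer $k = t_0, t_0+1, \dots, \tilde{t}-1$, use monotonicity to bound $f$ pointwise on each subinterval by its value at an endpoint, and then sum the resulting inequalities. No tools beyond basic monotonicity and integration of constants are required.

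For the increasing case, on each subinterval $[k, k+1]$ we have $f(k) \le f(x)$ for all $x \in [k, k+1]$, so integrating gives $f(k) = \int_k^{k+1} f(k)\, dx \le \int_k^{k+1} f(x)\, dx$. Summing over $k$ from $t_0$ to $\tilde{t}-1$ and using additivity of the integral yields Eqn.~\eqref{eq:inc_int} immediately.

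For the decreasing case, monotonicity gives the two-sided bound $f(k+1) \le f(x) \le f(k)$ on each $[k, k+1]$. Integrating the upper bound $f(x) \le f(k)$ and summing from $k = t_0$ to $\tilde{t}-1$ produces the leftmost inequality $\int_{t_0}^{\tilde{t}} f(x)\, dx \le \sum_{k=t_0}^{\tilde{t}-1} f(k)$. The middle inequality $\sum_{k=t_0}^{\tilde{t}-1} f(k) \le \sum_{k=t_0}^{\tilde{t}} f(k)$ is simply the addition of the nonnegative term $f(\tilde{t})$, which is implicit in the intended applications where $f$ represents a positive quantity such as $1/(L+\mu t)$. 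For the rightmost inequality, integrate the lower bound $f(k+1) \le \int_k^{k+1} f(x)\, dx$ and sum from $k=t_0$ to $\tilde{t}-1$; an index shift gives $\sum_{k=t_0+1}^{\tilde{t}} f(k) \le \int_{t_0}^{\tilde{t}} f(x)\, dx$, and adding $f(t_0)$ to both sides produces $\sum_{k=t_0}^{\tilde{t}} f(k) \le f(t_0) + \int_{t_0}^{\tilde{t}} f(x)\, dx$.

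There is no real obstacle here; this is a classical fact used to convert discrete sums arising from SGD iterations into closed-form integral expressions. The only point worth flagging is the implicit positivity assumption underlying the middle inequality of Eqn.~\eqref{eq:dec_int}, which is harmless in the paper since every instance of this proposition will be applied to a strictly positive decreasing function (e.g.\ the reciprocal terms in the learning-rate bound of Lemma~\ref{lem:inv_p}).
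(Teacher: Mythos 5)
Your proof is correct and is exactly the standard unit-interval Riemann-sum comparison; the paper states Proposition~\ref{prop:dec} without proof, so there is no alternative argument to compare against. Your observation that the middle inequality of Eqn.~\eqref{eq:dec_int} tacitly needs $f(\tilde{t}) \ge 0$ is a fair catch, and it is indeed harmless since the proposition is only ever applied to strictly positive decreasing functions such as the learning-rate terms in Lemma~\ref{lem:inv_p}.
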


\begin{lemma}
	\label{lem:f_mono}
	Function $f(x) = \exp(-\alpha x)x^2$ with $0<\alpha$ and $x \in (0, 1]$ is monotone decreasing in the range $x \in (\frac{2}{\alpha}, +\infty)$ and monotone increasing in the range $x \in [0, \frac{2}{\alpha}]$.
\end{lemma}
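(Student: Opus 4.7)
The plan is to reduce the monotonicity claim to the sign analysis of $f'(x)$, which is entirely elementary since $f$ is a smooth function of a single variable. First I would compute the derivative via the product rule, getting
\begin{equation*}
f'(x) = -\alpha e^{-\alpha x} x^2 + 2 x e^{-\alpha x} = e^{-\alpha x}\, x\, (2 - \alpha x).
\end{equation*}
This factored form makes the subsequent analysis immediate.

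Next I would read off the sign of each factor on the relevant intervals. Since $\alpha > 0$, the exponential factor $e^{-\alpha x}$ is strictly positive everywhere, and for $x > 0$ the factor $x$ is strictly positive as well. Hence the sign of $f'(x)$ on $(0, \infty)$ is determined entirely by $2 - \alpha x$: it is strictly positive when $x < 2/\alpha$ and strictly negative when $x > 2/\alpha$, vanishing exactly at $x = 2/\alpha$.

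From these sign determinations I would conclude that $f$ is strictly increasing on $[0, 2/\alpha]$ and strictly decreasing on $[2/\alpha, \infty)$, which gives both claims of the lemma. I expect no obstacle here: the argument is a single-variable calculus computation, and the only subtlety is the mild mismatch between the stated domain $x \in (0, 1]$ in the hypothesis and the domain $x \in (2/\alpha, \infty)$ appearing in the conclusion, which I would simply note is a harmless statement (the monotonicity assertions are about whichever portion of the real line is actually being used in the calling context).
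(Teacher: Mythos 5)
Your proposal is correct and matches the paper's own proof essentially verbatim: both compute $f'(x) = x e^{-\alpha x}(2-\alpha x)$ and read the monotonicity off the sign of the factor $2-\alpha x$. Your aside about the stated domain $(0,1]$ versus the intervals in the conclusion is a fair observation but does not change anything.
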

\begin{proof}
	We can obtain the derivative of $f(x)$ as
	\begin{align*}
		\nabla f(x) = x\exp(-\alpha x)(2 - \alpha x).
	\end{align*}
	Thus, it holds that  $\nabla f(x) \ge 0$ when $x \in [0, \frac{2}{\alpha}]$.
	This implies that $f(x)$ is monotone increasing when  $x \in [0, \frac{2}{\alpha}]$.
	Similarly, we can obtain that $f(x)$ is monotone decreasing when $x \in (\frac{2}{\alpha}, +\infty)$.
\end{proof}

\begin{lemma}
	It holds that
	\begin{align}
		\label{eq:int}
		\int \exp(-\alpha x) x \; dx  = -\alpha^{-1}(x\exp(-\alpha x) + \alpha^{-1} \exp(-\alpha x)).
	\end{align}
\end{lemma}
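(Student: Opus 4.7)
The statement is a standard antiderivative identity, so the plan is to verify it by either (a) differentiating the right-hand side and showing the derivative equals $x\exp(-\alpha x)$, or (b) applying integration by parts to the left-hand side. Either route is elementary; I would choose differentiation because it is the shortest and avoids choosing and justifying the roles of $u$ and $dv$.

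Concretely, the plan is to compute
\begin{equation*}
\frac{d}{dx}\Bigl[-\alpha^{-1}\bigl(x\exp(-\alpha x) + \alpha^{-1}\exp(-\alpha x)\bigr)\Bigr]
\end{equation*}
using the product rule on $x\exp(-\alpha x)$ and the chain rule on $\exp(-\alpha x)$. The first term contributes $-\alpha^{-1}\exp(-\alpha x) + x\exp(-\alpha x)$, and the second contributes $\alpha^{-1}\exp(-\alpha x)$. These cancel apart from $x\exp(-\alpha x)$, which is exactly the integrand. Since antiderivatives are unique up to a constant, this establishes the claimed equality.

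If one prefers the integration-by-parts route, I would set $u = x$ and $dv = \exp(-\alpha x)\,dx$, giving $du = dx$ and $v = -\alpha^{-1}\exp(-\alpha x)$, and then evaluate $\int\exp(-\alpha x)\,dx = -\alpha^{-1}\exp(-\alpha x)$ for the remaining integral; this yields the same expression after collecting the $\alpha^{-1}$ factor. There is no real obstacle here, since the identity is just a textbook primitive; the only care needed is tracking signs and powers of $\alpha$, which the differentiation check performs automatically.
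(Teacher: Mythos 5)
Your proposal is correct. Your primary route (differentiate the right-hand side and observe that the result is $x\exp(-\alpha x)$) is a valid and indeed the quickest way to verify an antiderivative identity, and your sign/power bookkeeping checks out: the derivative of $-\alpha^{-1}x\exp(-\alpha x)$ is $-\alpha^{-1}\exp(-\alpha x)+x\exp(-\alpha x)$ and the derivative of $-\alpha^{-2}\exp(-\alpha x)$ is $\alpha^{-1}\exp(-\alpha x)$, so the sum is exactly the integrand. The paper instead derives the primitive constructively via integration by parts, writing $\int x\exp(-\alpha x)\,dx=-\alpha^{-1}\int x\,d\exp(-\alpha x)=-\alpha^{-1}\bigl(x\exp(-\alpha x)-\int\exp(-\alpha x)\,dx\bigr)$ and then evaluating the remaining integral --- which is precisely the alternative route you sketch in your last paragraph, with the same choice of $u$ and $dv$. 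The only difference in emphasis is that the paper's derivation produces the formula, whereas your differentiation check verifies a formula already in hand; for a lemma whose statement supplies the answer, both are equally complete.
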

\begin{proof}
	\begin{align*}
		\int \exp(-\alpha x) x \; dx 
		=&
		-\alpha^{-1} \int x \;d\exp(-\alpha x) 
		= 
		-\alpha^{-1} (x\exp(-\alpha x) - \int \exp(-\alpha x) dx)
		\\
		=& -\alpha^{-1}(x\exp(-\alpha x) + \alpha^{-1} \exp(-\alpha x)).
	\end{align*}
\end{proof}

\section{Proof of Section~\ref{sec:motiv}}
\label{appendix:section_motiv_proof}

\begin{proof}[Proof of Proposition~\ref{prop:sgd_opt_coord}]
	By iteratively applying Eqn.~\eqref{eq:proc_j}, we can obtain that
	\begin{align*}
		\EE\left[\lambda_j\left(w_{t+1,j} - w_{*,j}\right)^2\right] 
		\overset{\eqref{eq:proc_j}}{\le}& 
		\prod_{i=1}^t(1-\eta_{i,j}\lambda_j)^2 \cdot \lambda_j(w_{1,j} - w_{*,j})^2
		\\
		&+
		\lambda_j^2\sigma^2 \cdot \sum_{k=1}^{t}\prod_{i=k+1}^t(1-\eta_{i,j}\lambda_j)^2{\eta^2_{k,j}}
		\\
		=& \frac{\lambda_j(w_{1,j} - w_{*,j})^2 }{(t+1)^2}
		+ \sigma^2\cdot\sum_{k=1}^{t} \frac{(k+1)^2}{(t+1)^2} \cdot \frac{1}{(k+1)^2}
		\\
		=&
		\frac{\lambda_j(w_{1,j} - w_{*,j})^2 }{(t+1)^2} +  \frac{t}{(t+1)^2} \cdot \sigma^2.
	\end{align*}
Summing up each coordinate, we can obtain the result.
\end{proof}

\begin{proof}[Proof of Proposition~\ref{prop:sgd_tinv}]
	Let us denote $\sigma_j^2 = \lambda_j^2\sigma^2$. By Eqn.~\eqref{eq:proc_j}, we can obtain that
	\begingroup
	\allowdisplaybreaks
	\begin{align*}
		&\EE\left[\lambda_j\left(w_{t+1,j} - w_{*,j}\right)^2\right] 
		\\
		\le& 
		\Pi_{i=1}^t(1-\eta_{i,j}\lambda_j)^2 \cdot \lambda_j(w_{1,j} - w_{*,j})^2
		+
		\sum_{k=1}^{t}\Pi_{i=k}(1-\eta_{i,j}\lambda_j){\eta^2_{k,j}}\sigma_j^2
		\\
		\le&\exp\left(-2\sum_{i=1}^t\eta_{i,j}\lambda_j\right) \cdot \lambda_j(w_{1,j} - w_{*,j})^2
		+
		\sum_{k=1}^{t}\exp\left(-2\sum_{i=k}^t\eta_{i,j}\lambda_j\right)\eta^2_{k,j}\sigma_j^2
		\\
		=&\exp\left(-2\lambda_j\sum_{i=1}^t\frac{1}{L+\mu i}\right) \cdot \lambda_j(w_{1,j} - w_{*,j})^2
		+
		\sum_{k=1}^{t}\exp\left(-2\lambda_j\sum_{i=k}^t\frac{1}{L+\mu i}\right)\eta^2_{k,j}\sigma_j^2
		\\
		\le&\exp \left(\frac{2\lambda_j}{\mu}\ln\left(\frac{L+\mu}{L+\mu t}\right)\right) \cdot \lambda_j(w_{1,j} - w_{*,j})^2
		+
		\sum_{k=1}^{t}\exp\left(\frac{2\lambda_j}{\mu}\ln\left(\frac{L+\mu k}{L+\mu t}\right)\right) \cdot \frac{\sigma_j^2}{(L+\mu k)^2}
		\\
		=&\left(\frac{L+\mu}{L+\mu t}\right)^{\frac{2\lambda_j}{\mu}} \cdot \lambda_j(w_{1,j} - w_{*,j})^2
		+
		\sum_{k=1}^{t}\frac{(L+\mu k)^{\left(\frac{2\lambda_j}{\mu} - 2\right)}}{(L+\mu t)^{\frac{2\lambda_j}{\mu}}} \cdot \sigma_j^2
		\\
		\le&
		\left(\frac{L+\mu}{L+\mu t}\right)^{\frac{2\lambda_j}{\mu}} \cdot \lambda_j(w_{1,j} - w_{*,j})^2
		+
		\frac{\sigma_j^2}{2\lambda_j - \mu} \frac{(L+\mu t)^{\frac{2\lambda_j}{\mu}-1}}{(L+\mu t)^{\frac{2\lambda_j}{\mu}}} 
		+ 
		\frac{\sigma_j^2}{(L+\mu t)^2}
		\\
		=&
		\left(\frac{L+\mu}{L+\mu t}\right)^{\frac{2\lambda_j}{\mu}} \cdot \lambda_j(w_{1,j} - w_{*,j})^2
		+
		\frac{1}{2\lambda_j - \mu} \cdot \frac{1}{L + \mu t}\cdot \sigma_j^2
		+ 
		\frac{\sigma_j^2}{(L+\mu t)^2}.
	\end{align*}
	\endgroup
	The third inequality is because function $F(x)  = 1/(L+\mu x)$ is monotone decreasing in the range $[1,\infty)$, and it holds that
	\begin{align*}
		\sum_{i=k}^t \frac{1}{L+\mu i} 
		\ge
		\int_{i=k}^t \frac{1}{L+\mu i} \; di = \frac{1}{\mu}\ln\left(\frac{L+\mu t}{L + \mu k}\right).
	\end{align*}
	The last inequality is because function $F(x) = (L+\mu x)^{2\lambda_j/\mu -2}$ is monotone increasing in the range $[0, \infty)$, and it holds that
	\begin{align*}
		\sum_{k=1}^t (L+\mu k)^{\frac{2\lambda_j}{\mu} - 2} 
		\le& 
		\int_{k=1}^t (L+\mu k)^{\frac{2\lambda_j}{\mu} - 2} dk + (L+\mu t)^{\frac{2\lambda_j}{\mu} - 2}
		\\=& 
		\frac{1}{\mu\left(\frac{2\lambda_j}{\mu}-1\right)} \cdot \left((L+\mu t)^{\frac{2\lambda_j}{\mu} - 1} - (L+\mu)^{\frac{2\lambda_j}{\mu} - 1}  \right)
		+ (L+\mu t)^{\frac{2\lambda_j}{\mu} - 2} 
		\\
		<&\frac{1}{2\lambda_j - \mu} (L+\mu t)^{\frac{2\lambda_j}{\mu} - 1} + (L+\mu t)^{\frac{2\lambda_j}{\mu} - 2}. 
	\end{align*}

By $\mu \le \lambda_j$, $\sigma_j^2 = \lambda_j^2\sigma^2$, and summing up from $i=1$ to $d$, we can obtain the result. 
\end{proof}

\section{Preliminaries}
\label{appendix:preliminaries}

\begin{lemma}
\label{lem:decomp}
	Let objective function $f(x)$ be quadratic. Running SGD for $T$-steps starting from $w_0$ and a learning rate sequence $\{\eta_t\}_{t=1}^T$, the final iterate $w_{T+1}$ satisfies
	\begin{equation}
		\label{eq:dec_1}
		\begin{aligned}
			&\EE\left[(w_{T+1} - w_*)^\top H (w_{T+1} - w_*)\right]
			\\
			=&
			\EE\left[(w_{0} - w_*)^\top \cdot P_T\dots P_0 H P_0\dots P_T\cdot(w_{0} - w_*)\right] 
			\\
			&+
			\sum_{\tau = 0}^T  \EE\left[ \eta_\tau^2  n_\tau^\top \cdot 
			P_T \dots P_{\tau+1}  H   P_{\tau+1} \dots P_T \cdot n_\tau\right],
		\end{aligned}
	\end{equation}
where $P_t = I - \eta_t H$.
\end{lemma}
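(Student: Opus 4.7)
The plan is to unroll the SGD recursion once and for all, expand the resulting quadratic form, and then take expectations using the fact that the gradient noises are martingale differences.

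First I would rewrite one step of SGD in error coordinates. Subtracting $w_*$ from both sides of the update and using $Hw_* = b$ together with the definition of $n_t$ in Eqn.~\eqref{eq:nt_labmda} gives the affine recursion
\begin{equation*}
w_{t+1} - w_* = P_t (w_t - w_*) + \eta_t n_t, \qquad P_t = I - \eta_t H.
\end{equation*}
A routine induction on $t$ then yields the closed form
\begin{equation*}
w_{T+1} - w_* = \bigl(P_T P_{T-1}\cdots P_0\bigr)(w_0 - w_*) + \sum_{\tau=0}^T \eta_\tau \bigl(P_T P_{T-1}\cdots P_{\tau+1}\bigr)\, n_\tau,
\end{equation*}
with the standard convention that an empty product of $P_\tau$'s (when $\tau=T$) is the identity. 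Because every $P_t$ is a polynomial in $H$, all of the $P_t$'s and $H$ commute; this is what lets the ordering in the matrix sandwich in Eqn.~\eqref{eq:dec_1} be taken at face value.

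Next I would form the quadratic $(w_{T+1}-w_*)^\top H (w_{T+1}-w_*)$ by squaring the above expression. The square produces three families of terms: (i) the pure ``bias'' term quadratic in $w_0 - w_*$, which already matches the first summand in Eqn.~\eqref{eq:dec_1}; (ii) diagonal ``variance'' terms of the form $\eta_\tau^2 n_\tau^\top P_T\cdots P_{\tau+1} H P_{\tau+1} \cdots P_T n_\tau$, which match the summand in Eqn.~\eqref{eq:dec_1} term-by-term; and (iii) cross terms, namely bias--noise cross terms and noise--noise cross terms $n_\tau^\top(\cdot) n_{\tau'}$ with $\tau < \tau'$.

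The heart of the argument is showing that every cross term has zero expectation, and that is the step I would write carefully. Let $\mathcal{F}_\tau$ be the $\sigma$-algebra generated by the samples $\xi_0,\ldots,\xi_{\tau-1}$, so that $w_\tau$ is $\mathcal{F}_\tau$-measurable. Because $\EE_\xi[H(\xi)] = H$ and $\EE_\xi[b(\xi)] = b$, we have $\EE[n_\tau\mid \mathcal{F}_\tau] = 0$. For a bias--noise cross term, everything other than $n_\tau$ is $\mathcal{F}_\tau$-measurable, so conditioning on $\mathcal{F}_\tau$ kills it. For a noise--noise cross term with $\tau < \tau'$, conditioning on $\mathcal{F}_{\tau'}$ leaves $n_\tau$ and the sandwiched matrices $\mathcal{F}_{\tau'}$-measurable while $\EE[n_{\tau'}\mid \mathcal{F}_{\tau'}]=0$, so it vanishes as well. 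What survives is precisely the decomposition in Eqn.~\eqref{eq:dec_1}.

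The main obstacle I anticipate is purely notational: keeping the matrix ordering straight, correctly handling the empty-product boundary case $\tau=T$, and making the conditioning argument for cross terms airtight given that the later iterates $w_{\tau'}$ themselves depend on $n_\tau$. The commutativity of $\{P_t\}$ with $H$ (since all are polynomials in $H$) sidesteps any ambiguity in the sandwich, and the tower property handles the dependence issue. No spectral or convergence information is needed here; the lemma is a clean algebraic identity and the proof should occupy at most a page once the above recursion and vanishing-of-cross-terms argument are written out.
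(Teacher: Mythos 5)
Your proposal is correct and follows essentially the same route as the paper: unroll the recursion $w_{t+1}-w_* = P_t(w_t-w_*) + \eta_t n_t$, expand the quadratic form, and kill the cross terms using the zero mean of the noise together with the commutativity of the $P_t$'s. If anything, your explicit filtration/tower-property argument for the vanishing of the cross terms is slightly more careful than the paper's appeal to ``$\EE[n_\tau]=0$'' and ``independence between $n_\tau$ and $n_{\tau'}$'', since $n_{\tau'}$ for $\tau'>\tau$ is not literally independent of $n_\tau$ (it depends on $w_{\tau'}$); conditioning on the natural filtration, as you do, is the right way to make that step airtight.
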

\begin{proof}
Reformulating Eqn.~\eqref{eq:proc}, we have
\begin{align*}
	w_{t+1} - w_* 
	=& 
	w_t - w_* - \eta_t (H(\xi)w_t - b(\xi))
	\\
	=& 	w_t - w_* - \eta_t (Hw_t - b) + \eta_t \left(Hw_t - b -(H(\xi)w_t - b(\xi))\right)
	\\
	=& w_t - w_* - \eta_t (Hw_t - b - (Hw_* - b)) + \eta_t \left(Hw_t - b -(H(\xi)w_t - b(\xi))\right)
	\\
	=& \left(I - \eta_t H\right)(w_t - w_*) + \eta_t n_t
	\\
	=&P_t (w_t - w_*) + \eta_t n_t.
\end{align*}
Thus, we can obtain that
\begin{align}
	\label{eq:rec}
	w_{t+1} - w_* = P_t\dots P_0 (w_0 - w_*) + \sum_{\tau = 0}^t P_t \dots P_{\tau+1} \eta_\tau n_\tau.
\end{align}
We can decompose above stochastic process associated with SGD's update into two simpler processes as follows:
\begin{align}
	\label{eq:dec}
	w_{t+1}^b - w_* = P_t (w_t^b - w_*),\quad\mbox{and}\quad w_{t+1}^v - w_* = P_t(w_t^v - w_*) + \eta_t n_t, \mbox{ with } w_0^v = w_*,
\end{align}
which entails that
\begin{align}
  \label{eq:dec2}
  &w_{t+1}^b - w_* = P_t\dots P_0 (w_0^b - w_*) = P_0\dots P_t (w_0^b - w_*)
  \\
  \label{eq:dec3}
  &w_{t+1}^v - w_* = \sum_{\tau = 0}^t P_t \dots P_{\tau+1} \eta_\tau n_\tau
  = \sum_{\tau = 0}^t P_{\tau + 1} \dots P_t\eta_\tau n_\tau
  \\ \overset{\eqref{eq:rec}}{\Rightarrow}
  \label{eq:dec4}
  &w_{t+1} - w_* = \left(w_{t+1}^b - w_*\right) + \left(w_{t+1}^v -w_*\right)
\end{align}
where the last equality in Eqn.~\eqref{eq:dec2} and Eqn.~\eqref{eq:dec3} is because the commutative property $P_t P_{t'} = (I - \eta_t H)(I - \eta_{t'} H) = (I - \eta_{t'} H)(I - \eta_t H) = P_{t'} P_t$ holds for  $\forall t, t'$.

Thus, we have
\begin{align*}
	&\EE\left[(w_{T+1} - w_*)^\top H (w_{T+1} - w_*)\right]
	\\
	\overset{\eqref{eq:dec4}}{=}&
	\EE\left[ (w^b_{T+1} - w_*)^\top H (w^b_{T+1} - w_*) + 2 (w^v_{T+1} - w_*)^\top  H (w^b_{T+1} - w_*)
	\right.
	\\&
	\left.
	+ (w^v_{T+1} - w_*)^\top H (w^v_{T+1} - w_*) \right]
	\\
	\overset{\eqref{eq:dec2}}{=}&
	\EE\left[ (w^b_{0} - w_*)^\top P_T\dots P_0 H P_0\dots P_T (w^b_{0} - w_*)
	+ 2 (w^v_{T+1} - w_*)^\top  H P_0\dots P_T  (w^b_{0} - w_*) \right.
	\\&
	\left.
	+
	(w^v_{T+1} - w_*)^\top H (w^v_{T+1} - w_*) \right]
	\\
	\overset{\eqref{eq:dec3}}{=}&
	\EE\left[ (w^b_{0} - w_*)^\top P_T\dots P_0 H P_0\dots P_T (w^b_{0} - w_*) \right.
	\\&
	+ 2 \left(\sum_{\tau = 0}^T P_T \dots P_{\tau+1} \eta_\tau n_\tau\right)^\top  H P_0\dots P_T  (w^b_{0} - w_*)
	\\&
	\left.
	+
	\left(\sum_{\tau = 0}^T P_T \dots P_{\tau+1} \eta_\tau n_\tau\right)^\top  H \left(\sum_{\tau = 0}^T P_T \dots P_{\tau+1} \eta_\tau n_\tau\right)
	\right]
	\\
	\overset{\EE[n_\tau] = 0}{=}&
	\EE\left[(w^b_{0} - w_*)^\top P_T\dots P_0 H P_0\dots P_T(w^b_{0} - w_*)\right]
	\\
	&+
	\EE\left[ \sum_{\tau=0,\tau' = 0}^T \eta_\tau\eta_{\tau'} \cdot  n_\tau^\top P_T \dots P_{\tau+1}  \cdot H \cdot  P_{\tau'+1} \dots P_T  n_{\tau'} \right]
	\\
	=&
	\EE\left[(w^b_{0} - w_*)^\top P_T\dots P_0 H P_0\dots P_T(w^b_{0} - w_*)\right] 
	\\
	&+
	\sum_{\tau = 0}^T  \EE\left[ \eta_\tau^2  n_\tau^\top \cdot 
	 P_T \dots P_{\tau+1}  H   P_{\tau+1} \dots P_T \cdot n_\tau\right],
\end{align*}
where the last equality is because when $\tau$ and $\tau'$ are different, it holds that $$\EE[n_\tau^\top \cdot P_T \dots P_{\tau+1}  H   P_{\tau+1} \dots P_T \cdot  n_{\tau'}] = 0$$

due to independence between $n_\tau$ and $n_{\tau'}$.
\end{proof}

\begin{lemma}
\label{lem:var}
	Given the assumption that $\EE_{\xi}\left[n_tn_t^\top\right] \preceq \sigma^2 H$, then the variance term satisfies that
	\begin{align}
		\label{eq:var}
		\sum_{\tau = 0}^T  \EE\left[ \eta_\tau^2  n_\tau^\top \cdot 
		P_T \dots P_{\tau+1}  H   P_{\tau+1} \dots P_T \cdot n_\tau\right]
		\le 
		\sigma^2 \sum_{j=1}^d \lambda_j^2  \sum_{k=0}^T\eta_k^2 \prod_{i=k+1}^{T} \left(1 - \eta_i \lambda_j\right)^2,
	\end{align}
where $P_t = I - \eta_t H$.
\end{lemma}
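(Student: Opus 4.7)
The plan is to bound each summand by replacing $\EE[n_\tau n_\tau^\top]$ with its upper bound $\sigma^2 H$ (in PSD order), then diagonalize.

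First I would rewrite the quadratic form in $n_\tau$ using the cyclic trace identity $x^\top M x = \tr(M x x^\top)$, giving
\begin{equation*}
\EE\!\left[\eta_\tau^2 n_\tau^\top \cdot P_T\dots P_{\tau+1} H P_{\tau+1}\dots P_T \cdot n_\tau\right]
= \eta_\tau^2 \tr\!\left(P_T\dots P_{\tau+1} H P_{\tau+1}\dots P_T \cdot \EE[n_\tau n_\tau^\top]\right).
\end{equation*}
The matrix $M_\tau := P_T\dots P_{\tau+1} H P_{\tau+1}\dots P_T$ is a polynomial in the symmetric PSD matrix $H$ (since each $P_t = I - \eta_t H$ commutes with $H$), hence itself symmetric PSD. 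Combining $M_\tau \succeq 0$ with the assumption $\EE[n_\tau n_\tau^\top] \preceq \sigma^2 H$ and the standard fact that $A \preceq B$ together with $M \succeq 0$ implies $\tr(MA)\le\tr(MB)$, I get
\begin{equation*}
\EE\!\left[\eta_\tau^2 n_\tau^\top M_\tau n_\tau\right] \le \sigma^2 \eta_\tau^2 \tr(M_\tau H).
\end{equation*}

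Next I would diagonalize. Write $H = U \Lambda U^\top$ with $\Lambda = \diag(\lambda_1,\dots,\lambda_d)$. Since every $P_t$ is a polynomial in $H$, all the $P_t$'s and $H$ are simultaneously diagonalized by $U$, so
\begin{equation*}
M_\tau H = U \Lambda^2 \prod_{i=\tau+1}^T (I - \eta_i \Lambda)^2 \, U^\top,
\end{equation*}
and taking the trace yields $\tr(M_\tau H) = \sum_{j=1}^d \lambda_j^2 \prod_{i=\tau+1}^T (1 - \eta_i \lambda_j)^2$. Summing over $\tau$ and relabeling the summation index $\tau \to k$ produces exactly the right-hand side of \eqref{eq:var}.

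There is no real obstacle here; the only mildly delicate step is justifying that $M_\tau$ is PSD so that the PSD-ordering trace inequality applies, which follows immediately from the commutativity $P_t H = H P_t$ that lets us rewrite $M_\tau = H \prod_{i=\tau+1}^T (I - \eta_i H)^2$ as a polynomial in $H$ with nonnegative spectrum on the relevant eigenspaces (the factor $(I-\eta_i H)^2$ is PSD regardless of the sign of $1-\eta_i\lambda_j$). Everything else is a direct calculation that requires no additional hypothesis beyond those already stated in the lemma.
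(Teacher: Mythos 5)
Your proposal is correct and follows essentially the same route as the paper: cyclic the quadratic form into a trace, use the PSD ordering $\EE[n_\tau n_\tau^\top]\preceq\sigma^2 H$ to replace the noise covariance (the paper justifies this by factoring $M_\tau = A_\tau A_\tau^\top$ with $A_\tau = P_T\cdots P_{\tau+1}H^{1/2}$ and conjugating, which is the same PSD-trace inequality you invoke), then simultaneously diagonalize $H$ and the $P_t$'s to read off the eigenvalue sum. No gaps.
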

\begin{proof}
    Denote $A_\tau \triangleq P_T \dots P_{\tau+1} H^{\frac{1}{2}}$, then
    \begin{align}
        \label{eq:A_transpose}
        A_\tau^\top
        = \left(P_T \dots P_{\tau+1} H^{\frac{1}{2}}\right)^\top
        = \left(H^\frac{1}{2}\right)^\top P_{\tau+1}^\top \dots P_T^\top
        = H^\frac{1}{2} P_{\tau+1} \dots P_T,
    \end{align}
    where the second equality is entailed by the fact that $H^\frac{1}{2}, P_{\tau+1}, \dots, P_T$ are symmetric matrices.
    
    Therefore, we have,
	\begin{align*}
		&\sum_{\tau = 0}^T  \EE\left[ \eta_\tau^2  n_\tau^\top \cdot 
		P_T \dots P_{\tau+1}  H   P_{\tau+1} \dots P_T \cdot n_\tau\right]
		\\
		\overset{\eqref{eq:A_transpose}}{=}&
		\sum_{\tau = 0}^T  \EE\left[ \eta_\tau^2  n_\tau^\top A_\tau A_\tau^\top n_\tau\right]
		=
		\sum_{\tau = 0}^T \eta_\tau^2 \EE\left[\tr\left(     n_\tau^\top A_\tau A_\tau^\top n_\tau\right)\right]
		=
		\sum_{\tau = 0}^T \eta_\tau^2 \EE\left[\tr\left(     A_\tau^\top n_\tau n_\tau^\top A_\tau  \right)\right]
		\\
		=&
		\sum_{\tau = 0}^T \eta_\tau^2 \tr\left( \EE\left[ A_\tau^\top n_\tau n_\tau^\top A_\tau \right]  \right)
		=
		\sum_{\tau = 0}^T \eta_\tau^2 \tr\left( A_\tau^\top \EE\left[ n_\tau n_\tau^\top \right]  A_\tau \right)
		\\
		\le&
		\sigma^2\cdot \sum_{\tau = 0}^T    \eta_\tau^2   \tr \left(
		A_\tau^\top  H   A_\tau
		\right)
        =
		\sigma^2\cdot \sum_{\tau = 0}^T    \eta_\tau^2   \tr \left(
		 A_\tau A_\tau^\top H
		\right)
		\\
		=&
		\sigma^2\cdot \sum_{\tau = 0}^T    \eta_\tau^2   \tr \left(
		P_T \dots P_{\tau+1}  H   P_{\tau+1} \dots P_T H
		\right) 
		\\
		=&\sigma^2 \sum_{j=1}^d \lambda_j^2  \sum_{k=0}^T\eta_k^2 \prod_{i=k+1}^{T} \left(1 - \eta_i \lambda_j\right)^2,
	\end{align*}
where the third and sixth equality come from the cyclic property of trace, while the first inequality is because of the condition $\EE_{\xi}\left[n_tn_t^\top\right] \preceq \sigma^2 H$, where
\begin{align*}
    &\forall x, \quad x^\top \EE[n_\tau n_\tau^\top] x \le \sigma^2 x^\top H x
    \\
    \Rightarrow  \quad&
    \forall z, \quad z^\top A_\tau^\top \EE[n_\tau n_\tau^\top] A_\tau z
    = (A_\tau z)^\top \EE[n_\tau n_\tau^\top] (A_\tau z)
    \le \sigma^2 (A_\tau z)^\top H (A_\tau z)
    = \sigma^2 z^\top A_\tau^\top H A_\tau z
    \\
    \Rightarrow  \quad&
    A_\tau^\top \EE[n_\tau n_\tau^\top] A_\tau \preceq
    \sigma^2 A_\tau^\top H A_\tau
    \\
    \Rightarrow  \quad&
    \tr\left(A_\tau^\top \EE[n_\tau n_\tau^\top] A_\tau\right) \le
    \sigma^2 \tr\left( A_\tau^\top H A_\tau \right).
\end{align*}
\end{proof}

\begin{lemma}
	\label{lem:bias}
	Letting $\lambda_{\ti{j}}$ be the smallest positive eigenvalue of $H$, then the bias term satisfies that
	\begin{align}
		&\EE\left[(w_{0} - w_*)^\top \cdot P_T\dots P_0 H P_0\dots P_T\cdot(w_{0} - w_*)\right]
		\notag
		\\
		\le&
		(w_0 - w_*)^\top H (w_0 - w_*) \cdot \exp\left(-2\lambda_{\ti{j}}\sum_{k=0}^T \eta_k \right).
	\end{align}
\end{lemma}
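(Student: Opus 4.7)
The plan is to exploit the fact that every $P_t = I - \eta_t H$ is a polynomial in $H$, hence all the matrices in the sandwich $P_T \cdots P_0 \, H \, P_0 \cdots P_T$ commute and share an eigenbasis with $H$. First I would diagonalize $H = U \Lambda U^\top$, substitute $v = U^\top (w_0 - w_*)$, and use the commutativity to rewrite
\begin{equation*}
(w_0 - w_*)^\top P_T \cdots P_0 \, H \, P_0 \cdots P_T (w_0 - w_*)
= \sum_{j=1}^d \lambda_j v_j^2 \prod_{k=0}^T (1 - \eta_k \lambda_j)^2,
\end{equation*}
noting that the expectation plays no role since the bias recursion (Eqn.~\eqref{eq:dec2}) carries no stochastic noise.

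Next I would bound the coordinate factor. For any eigenvalue $\lambda_j = 0$, the contribution is zero, so only positive eigenvalues matter. For $\lambda_j > 0$, the standing assumption $\eta_k \le 1/L \le 1/\lambda_j$ implies $0 \le 1 - \eta_k \lambda_j \le 1$, and the elementary inequality $1 - x \le e^{-x}$ for $x \in [0,1]$ gives $(1 - \eta_k \lambda_j)^2 \le \exp(-2\eta_k \lambda_j)$. Multiplying over $k = 0, \dots, T$ yields $\prod_{k=0}^T (1-\eta_k \lambda_j)^2 \le \exp(-2\lambda_j \sum_{k=0}^T \eta_k)$.

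Then I would use the fact that $\lambda_j \ge \lambda_{\ti j}$ for every positive eigenvalue, so $\exp(-2\lambda_j \sum_k \eta_k) \le \exp(-2\lambda_{\ti j} \sum_k \eta_k)$, pull this scalar out of the sum over $j$, and recognize $\sum_{j=1}^d \lambda_j v_j^2 = (w_0 - w_*)^\top H (w_0 - w_*)$ to finish. The only subtlety—essentially the main thing to be careful about—is the requirement $\eta_k \lambda_j \le 1$ so that $(1 - \eta_k \lambda_j)^2 \le e^{-2\eta_k \lambda_j}$ holds; this is implicit from the step-size schedule in Eqn.~\eqref{eq:eta_t} (whose initial step is $1/L$ and which is monotonically decreasing), together with the convention that zero eigenvalues contribute nothing and thus do not need to satisfy any bound. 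I do not anticipate any other nontrivial obstacle, as no stochastic cancellation or noise assumption enters the bias term.
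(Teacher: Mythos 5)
Your proposal is correct and follows essentially the same route as the paper's proof: spectral decomposition of $H$, the bound $\prod_{k=0}^T(1-\eta_k\lambda_j)^2 \le \exp(-2\lambda_j\sum_{k=0}^T\eta_k)$, and then replacing $\lambda_j$ by the smallest positive eigenvalue $\lambda_{\ti{j}}$ while noting that zero eigenvalues contribute nothing. Your explicit remark that $\eta_k\lambda_j \le 1$ is needed for $(1-\eta_k\lambda_j)^2 \le e^{-2\eta_k\lambda_j}$ is a point the paper leaves implicit, but it is guaranteed by $\eta_k \le 1/L$, so there is no gap.
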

\begin{proof}
Letting $H = U\Lambda U^\top$ be the spectral decomposition of $H$ and $u_j$ be $j$-th column of $U$, we can obtain that
\begin{align*}
	&\EE\left[(w_{0} - w_*)^\top \cdot P_T\dots P_0 H P_0\dots P_T\cdot(w_{0} - w_*)\right] 
	\\
	=&\sum_{j=1}^d \lambda_j \cdot( u_j^\top (w_0 - w_*) )^2\cdot \prod_{k=0}^T(1 - \eta_k \lambda_j )^2
    \\
	\le&\sum_{j=1}^d \lambda_j \cdot (u_j^\top (w_0 - w_*))^2 \cdot\exp\left(-2\lambda_j\sum_{k=0}^T \eta_k \right).
\end{align*}
Since $\lambda_{\ti{j}}$ is the smallest positive eigenvalue of $H$, it holds that
\begin{align*}
	\sum_{j=1}^d \lambda_j \cdot (u_j^\top (w_0 - w_*))^2 \cdot\exp\left(-2\lambda_j\sum_{k=0}^T \eta_k \right)
	\le&
	\sum_{j=1}^d \lambda_j \cdot (u_j^\top (w_0 - w_*))^2 \cdot\exp\left(-2\lambda_{\ti{j}}\sum_{k=0}^T \eta_k \right)
	\\
	=&(w_0 - w_*)^\top H (w_0 - w_*) \cdot \exp\left(-2\lambda_{\ti{j}}\sum_{k=0}^T \eta_k \right).
\end{align*}
\end{proof}
\section{Proof of Theorems}
\label{appendix:theorem_proof}

\begin{lemma}
	Let learning rate $\eta_t$ is defined in Eqn.~\eqref{eq:eta_t}. Assuming $k \in [t_{i'-1}, t_{i'}]$ with $1\le i' \le \ti{i}\le T$, the sequence $\{\eta_t\}_{t=0}^T$ satisfies that
	\begin{align}
		\label{eq:ge_sum}
		\sum_{t = k}^{t_{\ti{i}+1}-1} \eta_t  
		\ge
		\sum_{i=i' + 1}^{\tilde{i}+1}
		\frac{1}{2^{i-1} \mu} \ln \frac{\alpha_i}{\alpha_{i-1}}
		+ \frac{1}{2^{i'-1} \mu} \ln \frac{\alpha_{i'}}{
			\alpha_{i'-1} + 2^{i'-1} \mu (k - t_{i' - 1})},
	\end{align}
	where $\alpha_i$ is defined as
	\begin{align}
		\label{eq:alpha_def}
		\alpha_i \triangleq L + \mu \sum_{j=1}^i \Delta_j 2^{j-1}
		= \frac{1}{\eta_{t_i}}.                    
	\end{align}
\end{lemma}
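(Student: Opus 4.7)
The plan is to split the sum $\sum_{t=k}^{t_{\tilde{i}+1}-1}\eta_t$ along the interval boundaries defined by the piecewise structure of $\eta_t$ and then bound each piece by an integral using the decreasing-function inequality \eqref{eq:dec_int}. Concretely, since $k\in[t_{i'-1},t_{i'})$, I would write
\begin{equation*}
\sum_{t=k}^{t_{\tilde{i}+1}-1}\eta_t \;=\; \sum_{t=k}^{t_{i'}-1}\eta_t \;+\; \sum_{i=i'+1}^{\tilde{i}+1}\sum_{t=t_{i-1}}^{t_i-1}\eta_t,
\end{equation*}
so that within each inner sum we can substitute the single inverse-linear formula $\eta_t=1/(\alpha_{i-1}+2^{i-1}\mu(t-t_{i-1}))$ that is valid on $[t_{i-1},t_i)$.

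Next I would bound each inner piece from below by the integral of the same function. The function $x\mapsto 1/(\alpha_{i-1}+2^{i-1}\mu(x-t_{i-1}))$ is monotonically decreasing in $x$, so the left inequality of Proposition~\ref{prop:dec} gives
\begin{equation*}
\sum_{t=t_{i-1}}^{t_i-1}\frac{1}{\alpha_{i-1}+2^{i-1}\mu(t-t_{i-1})} \;\ge\; \int_{t_{i-1}}^{t_i}\frac{dx}{\alpha_{i-1}+2^{i-1}\mu(x-t_{i-1})} \;=\; \frac{1}{2^{i-1}\mu}\ln\frac{\alpha_{i-1}+2^{i-1}\mu\Delta_i}{\alpha_{i-1}}.
\end{equation*}
Using the recursion $\alpha_i=\alpha_{i-1}+\mu\Delta_i 2^{i-1}$ that follows directly from definition \eqref{eq:alpha_def}, the ratio inside the log becomes $\alpha_i/\alpha_{i-1}$, which produces the generic summand in \eqref{eq:ge_sum}.

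Finally, for the leading fragmentary sum $\sum_{t=k}^{t_{i'}-1}\eta_t$, the same integral comparison gives
\begin{equation*}
\sum_{t=k}^{t_{i'}-1}\eta_t \;\ge\; \int_{k}^{t_{i'}}\frac{dx}{\alpha_{i'-1}+2^{i'-1}\mu(x-t_{i'-1})} \;=\; \frac{1}{2^{i'-1}\mu}\ln\frac{\alpha_{i'}}{\alpha_{i'-1}+2^{i'-1}\mu(k-t_{i'-1})},
\end{equation*}
where I again use $\alpha_{i'}=\alpha_{i'-1}+\mu\Delta_{i'}2^{i'-1}$ to identify the upper endpoint. Adding this to the telescoping log-sum over $i=i'+1,\ldots,\tilde{i}+1$ yields exactly \eqref{eq:ge_sum}. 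There is no real obstacle here; the only thing to be careful about is bookkeeping with the indices (ensuring the half-open intervals $[t_{i-1},t_i)$ cover $\{k,\ldots,t_{\tilde{i}+1}-1\}$ exactly once) and confirming that the integral over $[k,t_{i'}]$ rather than $[k,t_{i'}-1]$ is the correct lower bound, which follows because Proposition~\ref{prop:dec}'s left inequality compares a sum of $\tilde{t}-t_0$ terms against an integral over the full interval of length $\tilde{t}-t_0$.
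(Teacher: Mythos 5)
Your proposal is correct and follows essentially the same route as the paper's proof: the same decomposition of the sum along the interval boundaries $t_{i'},\dots,t_{\tilde{i}+1}$, the same integral comparison via the decreasing-function inequality \eqref{eq:dec_int}, and the same identification of the resulting logarithms through the recursion $\alpha_i=\alpha_{i-1}+\mu\Delta_i 2^{i-1}$. No gaps.
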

\begin{proof}
	First, we divide learning rates into two groups: those who are guaranteed to cover a full interval and those who may not.
	\begin{align*}
		\sum_{t=k}^{t_{\tilde{i}+1}-1} \eta_t
		=& \sum_{t=t_{i'}}^{t_{\tilde{i}+1}-1} \eta_t
		+ \sum_{t=k}^{t_{i'}-1} \eta_t
		= \sum_{i=i' + 1}^{\tilde{i}+1} \sum_{t=t_{i-1}}^{t_i-1} \eta_t
		+ \sum_{t=k}^{t_{i'}-1} \eta_t
	\end{align*}
	Furthermore, because $\eta_t$ is monotonically decreasing with respect to $t$, by Proposition~\ref{prop:dec}, we have
	\begingroup
    \allowdisplaybreaks
	\begin{align*}
		\sum_{t=k}^{t_{\tilde{i}+1}-1} \eta_t
		\overset{\eqref{eq:dec_int}}{\ge}& \sum_{i=i' + 1}^{\tilde{i}+1} \int_{t_{i-1}}^{t_i} \eta_t dt
		+ \int_{k}^{t_{i'}} \eta_t  dt
		\\
		\overset{\eqref{eq:eta_t}}{=}&
		\sum_{i=i' + 1}^{\tilde{i}+1} \int_{t_{i-1}}^{t_i}
		\frac{1}{L + \mu \sum_{j=1}^{i-1} \Delta_j 2^{j-1}
			+ 2^{i-1} \mu (t - t_{i-1})} dt
		\\
		& + \int_{k}^{t_{i'}}
		\frac{1}{L + \mu \sum_{j=1}^{i'-1} \Delta_j 2^{j-1}
			+ 2^{i'-1} \mu (t - t_{i'-1})} dt
		\\
		=& \sum_{i=i' + 1}^{\tilde{i}+1} \frac{1}{2^{i-1} \mu} \ln
		\frac{L + \mu \sum_{j=1}^{i-1} \Delta_j 2^{j-1} + 2^{i-1} \mu (t_i -
			t_{i-1})}{
			L + \mu \sum_{j=1}^{i-1} \Delta_j 2^{j-1}}
		\\
		& + \frac{1}{2^{i'-1} \mu} \ln
		\frac{L + \mu \sum_{j=1}^{i'-1} \Delta_j 2^{j-1} + 2^{i'-1} \mu
			(t_{i'} - t_{i'-1})}{
			L + \mu \sum_{j=1}^{i'-1} \Delta_j 2^{j-1} + 2^{i'-1} \mu (k - t_{i'
				- 1})}
		\\
		=& \sum_{i=i' + 1}^{\tilde{i}+1} \frac{1}{2^{i-1} \mu} \ln
		\frac{L + \mu \sum_{j=1}^i \Delta_j 2^{j-1}}{
			L + \mu \sum_{j=1}^{i-1} \Delta_j 2^{j-1}}
		\\
		&+ \frac{1}{2^{i'-1} \mu} 
		\ln \frac{L + \mu \sum_{j=1}^{i'} \Delta_j 2^{j-1}}{
			L + \mu \sum_{j=1}^{i'-1} \Delta_j 2^{j-1} + 2^{i'-1} \mu (k - t_{i'
				- 1})}
		\\
		=& \sum_{i=i' + 1}^{\tilde{i}+1}
		\frac{1}{2^{i-1} \mu} \ln \frac{\alpha_i}{\alpha_{i-1}}
		+ \frac{1}{2^{i'-1} \mu} \ln \frac{\alpha_{i'}}{
			\alpha_{i'-1} + 2^{i'-1} \mu (k - t_{i' - 1})}.
	\end{align*}
	\endgroup
\end{proof}

\begin{lemma}
	Letting sequence $\{\alpha_i\}$ be defined in Eqn.~\eqref{eq:alpha_def}, given $1\le\tilde{i}$, it holds that
	\begin{equation}
		\label{eq:aa}
		\begin{aligned}
			&\sum_{i=1}^{\tilde{i}+1}
			\left[
			\prod_{j=i + 1}^{\tilde{i}+1}
			\left(\frac{\alpha_{j-1}}{\alpha_j}\right)^{2^{\tilde{i} - j + 2}}
			\right]
			\alpha_{i}^{-2^{\tilde{i} - i + 2}} 
			\sum_{k=t_{i-1}}^{t_{i} - 1}
			(\alpha_{i-1} + 2^{i-1} \mu (k - t_{i - 1}))^{2^{\tilde{i} - i + 2} - 2}
			\\
			\le&
			2 \cdot \frac{1}{2^{\tilde{i} + 1} \mu}
			\cdot
			\sum_{i=1}^{\tilde{i}+1}
			\left[
			\prod_{j=i + 1}^{\tilde{i}+1}
			\left(\frac{\alpha_{j-1}}{\alpha_j}\right)^{2^{\tilde{i} - j + 2}}
			\right]
			\left(
			\alpha_i^{2^{\tilde{i} - i + 2} - 1}
			- \alpha_{i-1}^{2^{\tilde{i} - i + 2} - 1}
			\right)
			\alpha_{i}^{-2^{\tilde{i} - i + 2}}.
		\end{aligned}
	\end{equation}
\end{lemma}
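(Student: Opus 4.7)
The plan is to note that the outer weights on the two sides of \eqref{eq:aa} coincide and are non-negative, so the problem reduces to a term-wise bound for each $i$. Concretely, define
\[
W_i \;:=\; \left[\prod_{j=i+1}^{\tilde{i}+1}\left(\frac{\alpha_{j-1}}{\alpha_j}\right)^{2^{\tilde{i}-j+2}}\right]\alpha_i^{-2^{\tilde{i}-i+2}} \;\ge\; 0 .
\]
Since $W_i$ multiplies each summand identically on the two sides, it suffices to prove, for every $i \in \{1,\ldots,\tilde{i}+1\}$,
\[
\sum_{k=t_{i-1}}^{t_i - 1}\bigl(\alpha_{i-1} + 2^{i-1}\mu(k-t_{i-1})\bigr)^{2^{\tilde{i}-i+2}-2} \;\le\; \frac{2}{2^{\tilde{i}+1}\mu}\Bigl(\alpha_i^{2^{\tilde{i}-i+2}-1} - \alpha_{i-1}^{2^{\tilde{i}-i+2}-1}\Bigr),
\]
and then multiply through by $W_i$ and sum over $i$.

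Fix $i$ and set $p := 2^{\tilde{i}-i+2} - 2$. Because $i \le \tilde{i}+1$, we have $\tilde{i}-i+2 \ge 1$, so $p \ge 0$, and the summand $u \mapsto u^p$ with $u = \alpha_{i-1} + 2^{i-1}\mu(k-t_{i-1})$ is non-decreasing in $k$. Proposition~\ref{prop:dec} then bounds the sum by the integral
\[
\int_{t_{i-1}}^{t_i}\bigl(\alpha_{i-1} + 2^{i-1}\mu(t-t_{i-1})\bigr)^{p}\,dt .
\]
The linear substitution $s = \alpha_{i-1} + 2^{i-1}\mu(t - t_{i-1})$ sends $[t_{i-1},t_i]$ onto $[\alpha_{i-1},\alpha_i]$ by \eqref{eq:delta_and_t} and the definition \eqref{eq:alpha_def} of $\alpha_i$, so the integral evaluates to $\bigl(\alpha_i^{p+1} - \alpha_{i-1}^{p+1}\bigr)/\bigl(2^{i-1}\mu(p+1)\bigr)$.

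It remains to compare the scalar prefactors: we need $\tfrac{1}{2^{i-1}\mu(p+1)} \le \tfrac{2}{2^{\tilde{i}+1}\mu}$. Substituting $p+1 = 2^{\tilde{i}-i+2}-1$, this is equivalent to $2^{\tilde{i}+1} \le 2^{i}\bigl(2^{\tilde{i}-i+2}-1\bigr) = 2^{\tilde{i}+2} - 2^{i}$, i.e.\ to $2^{i} \le 2^{\tilde{i}+1}$, which holds precisely because $i \le \tilde{i}+1$. Equality is attained at the boundary index $i = \tilde{i}+1$ (where $p = 0$ and the inner sum is just $\Delta_{\tilde{i}+1}$), which explains why the constant $2$ cannot in general be improved.

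The argument presents no serious obstacle; it is essentially bookkeeping. The only points requiring care are (i) verifying that $p \ge 0$ throughout the admissible range so that Proposition~\ref{prop:dec}'s increasing version applies, (ii) confirming that the substitution lands exactly on $[\alpha_{i-1},\alpha_i]$ via the telescoping structure of $\alpha_i$, and (iii) the one-line powers-of-two comparison that pins down the factor of $2$. Once these are in place, summing the term-wise estimates against the common non-negative weights $W_i$ yields \eqref{eq:aa} directly.
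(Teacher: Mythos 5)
Your proposal is correct and follows essentially the same route as the paper's proof: bound the monotonically increasing inner sum by its integral via Proposition~\ref{prop:dec}, evaluate the integral as $\bigl(\alpha_i^{2^{\tilde{i}-i+2}-1}-\alpha_{i-1}^{2^{\tilde{i}-i+2}-1}\bigr)/\bigl((2^{\tilde{i}+1}-2^{i-1})\mu\bigr)$, and use $i\le\tilde{i}+1$ to compare the prefactor with $2/(2^{\tilde{i}+1}\mu)$. Your additional observation that equality holds at $i=\tilde{i}+1$ (so the constant $2$ is tight) is a nice touch not present in the paper, but the argument is otherwise the same.
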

\begin{proof}
	Notice that $g(k) := (\alpha_{i-1} + 2^{i-1} \mu (k - t_{i - 1}))^{2^{\tilde{i}
			- i + 2} - 2}$ is a monotonically increasing function, we have,

	\begingroup
    \allowdisplaybreaks
	\begin{align*}
		&\sum_{i=1}^{\tilde{i}+1}
		\left[
		\prod_{j=i + 1}^{\tilde{i}+1}
		\left(\frac{\alpha_{j-1}}{\alpha_j}\right)^{2^{\tilde{i} - j + 2}}
		\right]
		\alpha_{i}^{-2^{\tilde{i} - i + 2}} 
		\sum_{k=t_{i-1}}^{t_i - 1}
		(\alpha_{i-1} + 2^{i-1} \mu (k - t_{i - 1}))^{2^{\tilde{i} - i + 2} - 2}
		\\
		\overset{\eqref{eq:inc_int}}{\le}& \sum_{i=1}^{\tilde{i}+1}
		\left[
		\prod_{j=i + 1}^{\tilde{i}+1}
		\left(\frac{\alpha_{j-1}}{\alpha_j}\right)^{2^{\tilde{i} - j + 2}}
		\right]
		\alpha_{i}^{-2^{\tilde{i} - i + 2}} 
		\int_{t_{i-1}}^{t_i}
		(\alpha_{i-1} + 2^{i-1} \mu (t - t_{i - 1}))^{2^{\tilde{i} - i + 2} - 2}
		dt
		\\
		=& \sum_{i=1}^{\tilde{i}+1}
		\left[
		\prod_{j=i + 1}^{\tilde{i}+1}
		\left(\frac{\alpha_{j-1}}{\alpha_j}\right)^{2^{\tilde{i} - j + 2}}
		\right]
		\alpha_{i}^{-2^{\tilde{i} - i + 2}} 
		((2^{\tilde{i} - i + 2} - 1) \cdot 2^{i-1} \mu)^{-1}
		\left(
		\alpha_i^{2^{\tilde{i} - i + 2} - 1}
		- \alpha_{i-1}^{2^{\tilde{i} - i + 2} - 1}
		\right)
		\\
		=& \sum_{i=1}^{\tilde{i}+1}
		\left[
		\prod_{j=i + 1}^{\tilde{i}+1}
		\left(\frac{\alpha_{j-1}}{\alpha_j}\right)^{2^{\tilde{i} - j + 2}}
		\right]
		\alpha_{i}^{-2^{\tilde{i} - i + 2}} 
		\frac{1}{(2^{\tilde{i} + 1} - 2^{i-1}) \mu}
		\left(
		\alpha_i^{2^{\tilde{i} - i + 2} - 1}
		- \alpha_{i-1}^{2^{\tilde{i} - i + 2} - 1}
		\right)
		\\
		\le& \sum_{i=1}^{\tilde{i}+1}
		\left[
		\prod_{j=i + 1}^{\tilde{i}+1}
		\left(\frac{\alpha_{j-1}}{\alpha_j}\right)^{2^{\tilde{i} - j + 2}}
		\right]
		\alpha_{i}^{-2^{\tilde{i} - i + 2}} 
		\frac{1}{(2^{\tilde{i} + 1} - 2^{\tilde{i}}) \mu}
		\left(
		\alpha_i^{2^{\tilde{i} - i + 2} - 1}
		- \alpha_{i-1}^{2^{\tilde{i} - i + 2} - 1}
		\right)
		\\
		=& \sum_{i=1}^{\tilde{i}+1}
		\left[
		\prod_{j=i + 1}^{\tilde{i}+1}
		\left(\frac{\alpha_{j-1}}{\alpha_j}\right)^{2^{\tilde{i} - j + 2}}
		\right]
		\alpha_{i}^{-2^{\tilde{i} - i + 2}} 
		\frac{1}{2^{\tilde{i} + 1} \mu} \frac{1}{1 - \frac{1}{2}}
		\left(
		\alpha_i^{2^{\tilde{i} - i + 2} - 1}
		- \alpha_{i-1}^{2^{\tilde{i} - i + 2} - 1}
		\right)
		\\
		=&
		2 \cdot \frac{1}{2^{\tilde{i} + 1} \mu}
		\sum_{i=1}^{\tilde{i}+1}
		\left[
		\prod_{j=i + 1}^{\tilde{i}+1}
		\left(\frac{\alpha_{j-1}}{\alpha_j}\right)^{2^{\tilde{i} - j + 2}}
		\right]
		\alpha_{i}^{-2^{\tilde{i} - i + 2}} 
		\left(
		\alpha_i^{2^{\tilde{i} - i + 2} - 1}
		- \alpha_{i-1}^{2^{\tilde{i} - i + 2} - 1}
		\right)
		\\
		=&
		2 \cdot \frac{1}{2^{\tilde{i} + 1} \mu}
		\sum_{i=1}^{\tilde{i}+1}
		\left[
		\prod_{j=i + 1}^{\tilde{i}+1}
		\left(\frac{\alpha_{j-1}}{\alpha_j}\right)^{2^{\tilde{i} - j + 2}}
		\right]
		\left(
		\alpha_i^{2^{\tilde{i} - i + 2} - 1}
		- \alpha_{i-1}^{2^{\tilde{i} - i + 2} - 1}
		\right)
		\alpha_{i}^{-2^{\tilde{i} - i + 2}}.
	\end{align*}
	\endgroup
\end{proof}

\begin{lemma}
	Letting  $\{\alpha_i\}$ be a positive sequence, given $1\le\tilde{i}$, it holds that
	\begin{align}
		\label{eq:alpha_cancel}
		\sum_{i=1}^{\tilde{i}+1}
		\left[
		\prod_{j=i + 1}^{\tilde{i}+1}
		\left(\frac{\alpha_{j-1}}{\alpha_j}\right)^{2^{\tilde{i} - j + 2}}
		\right]
		\left(
		\alpha_i^{2^{\tilde{i} - i + 2} - 1}
		- \alpha_{i-1}^{2^{\tilde{i} - i + 2} - 1}
		\right)
		\alpha_{i}^{-2^{\tilde{i} - i + 2}}
		\le 
		\alpha_{\ti{i}+1}^{-1}.
	\end{align}
\end{lemma}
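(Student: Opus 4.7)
The plan is to prove Eqn.~\eqref{eq:alpha_cancel} by a telescoping argument. Let me introduce the shorthand $n_i \triangleq 2^{\tilde{i} - i + 2}$, so that the $i$-th summand becomes
\[
T_i = \left[\prod_{j=i+1}^{\tilde{i}+1} \left(\frac{\alpha_{j-1}}{\alpha_j}\right)^{n_j}\right] (\alpha_i^{n_i - 1} - \alpha_{i-1}^{n_i - 1}) \alpha_i^{-n_i}.
\]
Define the auxiliary quantity
\[
S_i \triangleq \left[\prod_{j=i+1}^{\tilde{i}+1} \left(\frac{\alpha_{j-1}}{\alpha_j}\right)^{n_j}\right] \alpha_i^{-1},
\]
with the convention that the empty product equals $1$, so $S_{\tilde{i}+1} = \alpha_{\tilde{i}+1}^{-1}$.

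The first step is algebraic: split $T_i$ using $\alpha_i^{n_i - 1} \alpha_i^{-n_i} = \alpha_i^{-1}$ and $\alpha_{i-1}^{n_i - 1} \alpha_i^{-n_i} = \alpha_{i-1}^{-1} (\alpha_{i-1}/\alpha_i)^{n_i}$. The first piece is exactly $S_i$. The second piece, after folding the extra factor $(\alpha_{i-1}/\alpha_i)^{n_i}$ into the product (extending its lower index from $i+1$ down to $i$), is exactly $S_{i-1}$. Thus $T_i = S_i - S_{i-1}$.

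The second step is to telescope:
\[
\sum_{i=1}^{\tilde{i}+1} T_i = S_{\tilde{i}+1} - S_0 = \alpha_{\tilde{i}+1}^{-1} - S_0.
\]
Since $\{\alpha_i\}$ is a positive sequence, $S_0 \ge 0$, which yields the desired bound $\sum_{i=1}^{\tilde{i}+1} T_i \le \alpha_{\tilde{i}+1}^{-1}$.

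The main (and only) obstacle is spotting the right definition of $S_i$ and verifying that shifting the index of the product by one produces exactly the factor $(\alpha_{i-1}/\alpha_i)^{n_i}$ needed to absorb the $\alpha_{i-1}^{n_i - 1} \alpha_i^{-n_i}$ term; once this bookkeeping is done, the rest is a one-line telescoping.
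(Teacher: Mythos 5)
Your proof is correct and is essentially the same argument as the paper's: the paper also splits each summand into the two pieces $\alpha_i^{-1}$ and $\alpha_{i-1}^{n_i-1}\alpha_i^{-n_i}$ times the product, and shows by reindexing that everything cancels except $\alpha_{\tilde{i}+1}^{-1}$ and a nonnegative leftover term (your $S_0$). Your formulation via the explicit telescoping identity $T_i = S_i - S_{i-1}$ is just a cleaner packaging of the same cancellation.
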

\begin{proof}
	First, we have
	\begin{align*}
		& \sum_{i=1}^{\tilde{i}+1}
		\left[
		\prod_{j=i + 1}^{\tilde{i}+1}
		\left(\frac{\alpha_{j-1}}{\alpha_j}\right)^{2^{\tilde{i} - j + 2}}
		\right]
		\left(
		\alpha_i^{2^{\tilde{i} - i + 2} - 1}
		- \alpha_{i-1}^{2^{\tilde{i} - i + 2} - 1}
		\right)
		\alpha_{i}^{-2^{\tilde{i} - i + 2}} 
		\\
		=& 
		\sum_{i=1}^{\tilde{i}+1}
		\left[
		\prod_{j=i + 1}^{\tilde{i}+1}
		\left(\frac{\alpha_{j-1}}{\alpha_j}\right)^{2^{\tilde{i} - j + 2}}
		\right]
		\left(
		\alpha_i^{-1}
		- \frac{\alpha_{i-1}^{2^{\tilde{i} - i + 2} - 1}}{
			\alpha_{i}^{2^{\tilde{i} - i + 2}}}
		\right)
		\\
		=& 
		\sum_{i=1}^{\tilde{i}+1}
		\left[
		\prod_{j=i + 1}^{\tilde{i}+1}
		\left(\frac{\alpha_{j-1}}{\alpha_j}\right)^{2^{\tilde{i} - j + 2}}
		\right]
		\alpha_i^{-1}
		- \sum_{i=1}^{\tilde{i}+1}
		\left[
		\prod_{j=i + 1}^{\tilde{i}+1}
		\left(\frac{\alpha_{j-1}}{\alpha_j}\right)^{2^{\tilde{i} - j + 2}}
		\right]
		\left(\frac{\alpha_{i-1}^{2^{\tilde{i} - i + 2} - 1}}{
			\alpha_{i}^{2^{\tilde{i} - i + 2}}}
		\right)
		\\
		=& 
		\alpha_{\tilde{i}+1}^{-1}
		+ \sum_{i=1}^{\tilde{i} }
		\left[
		\prod_{j=i + 1}^{\tilde{i}+1}
		\left(\frac{\alpha_{j-1}}{\alpha_j}\right)^{2^{\tilde{i} - j + 2}}
		\right]
		\alpha_i^{-1}
		- \sum_{i=1}^{\tilde{i}+1}
		\left[
		\prod_{j=i + 1}^{\tilde{i}+1}
		\left(\frac{\alpha_{j-1}}{\alpha_j}\right)^{2^{\tilde{i} - j + 2}}
		\right]
		\left(\frac{\alpha_{i-1}^{2^{\tilde{i} - i + 2} - 1}}{
			\alpha_{i}^{2^{\tilde{i} - i + 2}}}
		\right).
	\end{align*}
	Furthermore, we reformulate the term 
	$\sum_{i=1}^{\tilde{i}}
	\left[\prod_{j=i + 1}^{\tilde{i}+1}
	\left(\frac{\alpha_{j-1}}{\alpha_j}\right)^{2^{\tilde{i} - j + 2}}
	\right]
	\alpha_i^{-1}$
	as follows
	\begin{align}
		\sum_{i=1}^{\tilde{i} }
		\left[
		\prod_{j=i + 1}^{\tilde{i}+1}
		\left(\frac{\alpha_{j-1}}{\alpha_j}\right)^{2^{\tilde{i} - j + 2}}
		\right]
		\alpha_i^{-1}
		=&
		\sum_{i=1}^{\tilde{i}}
		\left[
		\prod_{j=i + 2}^{\tilde{i}+1}
		\left(\frac{\alpha_{j-1}}{\alpha_j}\right)^{2^{\tilde{i} - j + 2}}
		\right]
		\left(\frac{\alpha_i}{\alpha_{i+1}}\right)^{2^{\tilde{i} - i + 1}}
		\alpha_i^{-1}
		\\
		=&
		\sum_{i=1}^{\tilde{i}}
		\left[
		\prod_{j=i + 2}^{\tilde{i}+1}
		\left(\frac{\alpha_{j-1}}{\alpha_j}\right)^{2^{\tilde{i} - j + 2}}
		\right]
		\left(\frac{\alpha_i^{2^{\tilde{i} - i + 1} - 1}}{
			\alpha_{i+1}^{2^{\tilde{i} - i + 1}}}\right)
		\\
		\stackrel{i''=i+1}{=}&
		\sum_{i''=2}^{\tilde{i}+1}
		\left[
		\prod_{j=i''+1}^{\tilde{i}+1}
		\left(\frac{\alpha_{j-1}}{\alpha_j}\right)^{2^{\tilde{i} - j + 2}}
		\right]
		\left(\frac{\alpha_{i''-1}^{2^{\tilde{i} - i'' + 2} - 1}}{
			\alpha_{i''}^{2^{\tilde{i} - i'' + 2}}}\right)
		\\
		\stackrel{i=i''}{=}&
		\sum_{i=2}^{\tilde{i}+1}
		\left[
		\prod_{j=i+1}^{\tilde{i}+1}
		\left(\frac{\alpha_{j-1}}{\alpha_j}\right)^{2^{\tilde{i} - j + 2}}
		\right]
		\left(\frac{\alpha_{i-1}^{2^{\tilde{i} - i + 2} - 1}}{
			\alpha_{i}^{2^{\tilde{i} - i + 2}}}\right).
	\end{align}
	
	Combining above results, we can obtain that
	\begin{align*}
		&\sum_{i=1}^{\tilde{i}+1}
		\left[
		\prod_{j=i + 1}^{\tilde{i}+1}
		\left(\frac{\alpha_{j-1}}{\alpha_j}\right)^{2^{\tilde{i} - j + 2}}
		\right]
		\left(
		\alpha_i^{2^{\tilde{i} - i + 2} - 1}
		- \alpha_{i-1}^{2^{\tilde{i} - i + 2} - 1}
		\right)
		\alpha_{i}^{-2^{\tilde{i} - i + 2}}  
		\\
		=& 
		\alpha_{\tilde{i}+1}^{-1}
		+ 
		\sum_{i=2}^{\tilde{i}+1}
		\left[
		\prod_{j=i+1}^{\tilde{i}+1}
		\left(\frac{\alpha_{j-1}}{\alpha_j}\right)^{2^{\tilde{i} - j + 2}}
		\right]
		\left(\frac{\alpha_{i-1}^{2^{\tilde{i} - i + 2} - 1}}{
			\alpha_{i}^{2^{\tilde{i} - i + 2}}}\right)
		\\
		&- \sum_{i=1}^{\tilde{i}+1}
		\left[
		\prod_{j=i + 1}^{\tilde{i}+1}
		\left(\frac{\alpha_{j-1}}{\alpha_j}\right)^{2^{\tilde{i} - j + 2}}
		\right]
		\left(\frac{\alpha_{i-1}^{2^{\tilde{i} - i + 2} - 1}}{
			\alpha_{i}^{2^{\tilde{i} - i + 2}}}
		\right)
		\\
		=&
		\alpha_{\tilde{i}+1}^{-1} -
		\left[
		\prod_{j=2}^{\tilde{i}+1}
		\left(\frac{\alpha_{j-1}}{\alpha_j}\right)^{2^{\tilde{i} - j + 2}}
		\right]
		\left(\frac{\alpha_0^{2^{\tilde{i} + 1} - 1}}{
			\alpha_1^{2^{\tilde{i} + 1}}}
		\right)
		\\
		\le& 
		\alpha_{\tilde{i}+1}^{-1}.
	\end{align*}
\end{proof}

\begin{lemma}
	\label{lem:dec}
	Letting us denote $v_{t+1,j} \triangleq \sum_{k=0}^t\eta_k^2 \prod_{i=k+1}^{t} \left(1 - \eta_i \lambda_j\right)^2$ with $\eta_i$ defined in Eqn.~\eqref{eq:eta_t},   for $1 \le t \le t'$, it holds that
\begin{align}
    \label{eq:var_trans}
    v_{t',j} \le \max(v_{t,j}, \eta_t/\lambda_j).
\end{align}

\end{lemma}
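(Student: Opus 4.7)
The plan is to prove the lemma by first converting the sum defining $v_{t,j}$ into a one-step recursion, then analyzing the dynamics of this recursion, and finally inducting on $t'$.

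First I would rewrite the definition via a telescoping separation of the last term: writing $\prod_{i=k+1}^t(1-\eta_i\lambda_j)^2 = (1-\eta_t\lambda_j)^2 \prod_{i=k+1}^{t-1}(1-\eta_i\lambda_j)^2$ for $k \le t-1$ and using an empty product for $k=t$ yields the recursion
\begin{equation*}
v_{t+1,j} = (1-\eta_t\lambda_j)^2\, v_{t,j} + \eta_t^2,\qquad t\ge 1,
\end{equation*}
which reduces the problem to studying a scalar dynamical system. Note also that the scheduler in Eqn.~\eqref{eq:eta_t} satisfies $\eta_t \le 1/L \le 1/\lambda_j$ for every coordinate $j$, so $\eta_t\lambda_j \in [0,1]$ and $(1-\eta_t\lambda_j)^2 \in [0,1]$; moreover, $\{\eta_t\}$ is monotonically decreasing in $t$.

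Next I would compute the one-step increment:
\begin{equation*}
v_{t+1,j} - v_{t,j} = -\eta_t\lambda_j(2-\eta_t\lambda_j)\, v_{t,j} + \eta_t^2,
\end{equation*}
which yields a dichotomy at the threshold $\eta_t/\lambda_j$. Specifically, since $2-\eta_t\lambda_j\ge 1$, the two cases are: (\emph{i}) if $v_{t,j} \ge \eta_t/\lambda_j$, then $\eta_t^2 \le \eta_t\lambda_j(2-\eta_t\lambda_j)v_{t,j}$, so $v_{t+1,j} \le v_{t,j}$; (\emph{ii}) if $v_{t,j} < \eta_t/\lambda_j$, then bounding $v_{t,j}$ by $\eta_t/\lambda_j$ in the recursion gives
\begin{equation*}
v_{t+1,j} \le (1-\eta_t\lambda_j)^2\,\tfrac{\eta_t}{\lambda_j} + \eta_t^2 = \tfrac{\eta_t}{\lambda_j} - \eta_t^2(1-\eta_t\lambda_j) \le \tfrac{\eta_t}{\lambda_j}.
\end{equation*}
Thus in either case $v_{t+1,j} \le \max(v_{t,j},\eta_t/\lambda_j)$, which is precisely the claim for the one-step case $t' = t+1$.

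Finally I would induct on $t'\ge t$, with the base case $t'=t$ being trivial. For the inductive step, I would split into two cases according to whether $v_{t,j} \ge \eta_t/\lambda_j$ or not, and within each case apply the dichotomy above at step $t'$ together with the monotonicity $\eta_{t'} \le \eta_t$. In the first case the induction hypothesis yields $v_{t',j} \le v_{t,j}$, and both subcases at step $t'$ preserve the bound $v_{t,j}$; in the second case the hypothesis yields $v_{t',j} \le \eta_t/\lambda_j$, and both subcases at step $t'$ preserve the bound $\eta_t/\lambda_j$ thanks to $\eta_{t'}/\lambda_j \le \eta_t/\lambda_j$. The only subtle point, and the one I expect to be the main source of care rather than difficulty, is that the ``fixed point'' $\eta_t/\lambda_j$ drifts downward with $t$; the monotonicity of $\{\eta_t\}$ is exactly what ensures that the upper envelope $\max(v_{t,j},\eta_t/\lambda_j)$ set at time $t$ continues to dominate future iterates.
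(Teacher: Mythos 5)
Your proposal is correct and follows essentially the same route as the paper: both derive the one-step affine recursion $v_{t+1,j} = (1-\eta_t\lambda_j)^2 v_{t,j} + \eta_t^2$, establish the one-step bound $v_{t+1,j} \le \max(v_{t,j}, \eta_t/\lambda_j)$ by a case analysis around the fixed point $\eta_t^2/\bigl(1-(1-\eta_t\lambda_j)^2\bigr) \le \eta_t/\lambda_j$, and chain the bounds using the monotonicity of $\{\eta_t\}$. The only cosmetic difference is that you split on whether $v_{t,j} \ge \eta_t/\lambda_j$ while the paper splits on whether $v_{t+1,j} > v_{t,j}$; the two dichotomies are equivalent in substance.
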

\begin{proof}
  If $v_{t+1,j} \le \max(v_{t,j}, \eta_t/\lambda_j)$ holds for $\forall t \ge 1$, then it
  naturally follows that
 \begin{align*}
    v_{t',j}
    \le& \max\left(v_{t'-1,j}, \frac{\eta_{t'-1}}{\lambda_j}\right) 
    \le \max\left(v_{t'-2,j}, \frac{\eta_{t'-2}}{\lambda_j}, \frac{\eta_{t'-1}}{\lambda_j})\right) 
    \\
    \le& \dots
    \\
    \le& \max\left(v_{t,j}, \frac{\eta_t}{\lambda_j}, \dots \frac{\eta_{t'-2}}{\lambda_j}, \frac{\eta_{t'-1}}{\lambda_j}\right) \\
    =& \max\left(v_{t,j}, \frac{\eta_t}{\lambda_j}\right)
 \end{align*}
  where the last equality is entailed by the fact that $t \le t'$ and $\eta_t$ defined in Eqn.~\eqref{eq:eta_t} is monotonically decreasing. We then prove $v_{t+1,j} \le \max(v_{t,j}, \eta_t/\lambda_j)$ holds for $\forall t \ge 1$.

For $\forall t \ge 1$, we have

\begin{equation}
\begin{aligned}
  v_{t+1, j}
  =& \sum_{k=0}^t \eta^2_k \prod_{i=k+1}^t (1 - \eta_i \lambda_j)^2
  \\
  =& \eta^2_t
     + \sum_{k=0}^{t-1} \eta^2_k \prod_{i=k+1}^t (1 - \eta_i \lambda_j)^2
  \\
  =& \eta^2_t
    + (1 - \eta_t \lambda_j)^2
        \sum_{k=0}^{t-1} \eta^2_k \prod_{i=k+1}^{t-1} (1 - \eta_i \lambda_j)^2
  \\
  =& \eta^2_t + (1 - \eta_t \lambda_j)^2 v_{t, j}
\end{aligned}
\end{equation}

1) If $v_{t+1, j} \le v_{t, j}$, then it naturally follows $v_{t+1, j} \le \max(v_{t, j}, \eta_t / \lambda_j)$.

2) If $v_{t+1, j} > v_{t, j}$, denote $a \triangleq (1 - \eta_t \lambda_j)^2, b \triangleq \eta^2_t$, we have $v_{t+1,j} = a v_{t,j} + b$, where $a \in [0, 1)$ and $b \ge 0$. It follows,
\begin{align*}
  &v_{t+1, j} > v_{t, j}
  \\
  \Rightarrow \quad
  &a v_{t, j} + b  > v_{t, j}
  \\
  \Rightarrow \quad
  &v_{t,j} < \frac{b}{1 - a}
  \\
  \Rightarrow \quad
  &v_{t+1,j} = a v_{t, j} + b < a \cdot \frac{b}{1 - a} + b = \frac{b}{1-a}
\end{align*}

Therefore,
\begin{align*}
  v_{t+1,j} &<
  \frac{b}{1-a} = \frac{\eta_t^2}{1 - (1 - \eta_t \lambda_j)^2}
  < \frac{\eta_t^2}{1 - (1 - \eta_t \lambda_j)}
  = \frac{\eta_t}{\lambda_j}
  \le \max\left(v_{t,j}, \frac{\eta_t}{\lambda_j}\right),
\end{align*}

where the second inequality is entailed by the fact that $1 - \eta_t \lambda_j \in [0, 1)$.

\end{proof}

\begin{lemma}
	Letting $v_{t,j}$ be defined as Lemma~\ref{lem:dec} and  index $\tilde{i}$ satisfy $\lambda_j \in [\mu \cdot 2^{\ti{i}}, \mu \cdot 2^{\ti{i}+1})$, then $v_{\ti{i}+1,j}$ has the following property
	\begin{align}
		\label{eq:ti}
		v_{t_{\tilde{i}+1},j}
		\le
		15\cdot \frac{\eta_{t_{\ti{i}+1}}}{\lambda_j }.
	\end{align}
\end{lemma}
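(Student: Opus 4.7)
The plan is to chain the machinery already established in this appendix, namely the tail-sum lower bound \eqref{eq:ge_sum} and the two telescoping inequalities \eqref{eq:aa} and \eqref{eq:alpha_cancel}. Starting from the definition
\[
v_{t_{\tilde{i}+1},j} = \sum_{k=0}^{t_{\tilde{i}+1}-1}\eta_k^2\prod_{i=k+1}^{t_{\tilde{i}+1}-1}(1-\eta_i\lambda_j)^2,
\]
I would first apply $(1-x)^2 \le e^{-2x}$, which is valid since $\eta_i\lambda_j \le 1$ (because $\eta_i \le 1/L$ and $\lambda_j \le L$), converting each product into $\exp\left(-2\lambda_j\sum_{i=k+1}^{t_{\tilde{i}+1}-1}\eta_i\right)$.

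To align the inner sum with \eqref{eq:ge_sum}, I would write $\sum_{i=k+1}^{t_{\tilde{i}+1}-1}\eta_i = \sum_{i=k}^{t_{\tilde{i}+1}-1}\eta_i - \eta_k$; the missing term contributes at most $\exp(2\lambda_j\eta_k) \le \exp(2\lambda_j/L) \le e^2$, which can be factored out in front of the whole sum over $k$. Applying \eqref{eq:ge_sum} and exponentiating converts the remaining $\exp\left(-2\lambda_j\sum_{i=k}^{t_{\tilde{i}+1}-1}\eta_i\right)$ into a product of ratios $(\alpha_{i-1}/\alpha_i)^{2\lambda_j/(2^{i-1}\mu)}$ together with a boundary factor $\left((\alpha_{i'-1}+2^{i'-1}\mu(k-t_{i'-1}))/\alpha_{i'}\right)^{2\lambda_j/(2^{i'-1}\mu)}$ valid for $k\in[t_{i'-1},t_{i'})$. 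Because $\lambda_j \in [2^{\tilde{i}}\mu,\, 2^{\tilde{i}+1}\mu)$, every exponent $2\lambda_j/(2^{i-1}\mu)$ is at least $2^{\tilde{i}-i+2}$, and since every base is strictly less than one (the $\alpha_i$ are increasing), replacing each exponent by its lower bound is still an upper-bounding step.

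Combining with $\eta_k^2 = (\alpha_{i'-1}+2^{i'-1}\mu(k-t_{i'-1}))^{-2}$ and summing first over $k \in [t_{i'-1}, t_{i'})$ and then over $i' = 1,\ldots, \tilde{i}+1$ produces exactly the left-hand side of \eqref{eq:aa}, so a direct application of \eqref{eq:aa} followed by \eqref{eq:alpha_cancel} telescopes the entire expression to $2/(2^{\tilde{i}+1}\mu\,\alpha_{\tilde{i}+1})$. Reinstating the $e^2$ factor and using $\lambda_j < 2^{\tilde{i}+1}\mu$ to obtain $1/(2^{\tilde{i}+1}\mu) \le 1/\lambda_j$ yields $v_{t_{\tilde{i}+1},j} \le 2e^2\, \eta_{t_{\tilde{i}+1}}/\lambda_j$, and the numerical estimate $2e^2 < 15$ supplies the stated constant. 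The main obstacle is not any individual step but the bookkeeping across three distinct relaxations (the exponential inequality, the $\eta_k$ index shift worth $e^2$, and the exponent truncation), together with the re-indexing required to match the template of \eqref{eq:aa}; each introduces a controllable loss, and the combined factor must be tracked carefully to land at the prescribed constant $15$.
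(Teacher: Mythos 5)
Your proposal is correct and follows essentially the same route as the paper's own proof: the bound $(1-x)^2\le e^{-2x}$, the index shift costing a factor $e^{2}$ via $\eta_k\lambda_j\le \lambda_j/L\le 1$, the application of \eqref{eq:ge_sum} with the exponent truncation $2\lambda_j/(2^{i-1}\mu)\ge 2^{\tilde{i}-i+2}$, and the telescoping via \eqref{eq:aa} and \eqref{eq:alpha_cancel} to reach $2e^{2}\,\eta_{t_{\tilde{i}+1}}/\lambda_j\le 15\,\eta_{t_{\tilde{i}+1}}/\lambda_j$. No substantive difference from the paper's argument.
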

\begin{proof}
	By the fact that $(1-x) \le \exp(-x)$, we have
	\begin{align*}
		v_{t+1,j}
		\le
		\sum_{k=0}^t \exp \left(-2 \sum_{t'=k+1}^t \eta_{t'} \lambda_j \right)
		\eta_k^2.
	\end{align*}
	Setting $t = t_{\ti{i}+1}-1$ in above equation, we have
	\begin{align}
		v_{t_{\tilde{i}+1},j}
		\le
		\sum_{k=0}^{t_{\tilde{i}+1}-1}
		\exp \left(-2 \sum_{t'=k+1}^{t_{\tilde{i}+1}-1} \eta_{t'} \lambda_j \right)
		\eta_k^2 .
	\end{align}
	
	Now we bound the variance term.
	First, we have
	\begin{align*}
		\sum_{k=0}^{t_{\tilde{i}+1} - 1}
		\exp\left(-2\sum_{t={k+1}}^{t_{\tilde{i}+1} - 1}\eta_t\lambda_j\right)
		\eta^2_k
		=&
		\sum_{k=0}^{t_{\tilde{i}+1} - 1}
		\exp\left(
		-2\sum_{t=k}^{t_{\tilde{i}+1} - 1}\eta_t\lambda_j \right)
		\exp( 2 \eta_k \lambda_j)
		\eta^2_k
		\\ 
		\le&
		\sum_{k=0}^{t_{\tilde{i}+1} - 1}
		\exp\left(
		-2\sum_{t=k}^{t_{\tilde{i}+1} - 1}\eta_t\lambda_j \right)
		\exp\left( \frac{2 \lambda_j}{L} \right)
		\eta^2_k
		\\
		\le&
		\exp(2)\cdot
		\sum_{k=0}^{t_{\tilde{i}+1} - 1}
		\exp\left(
		-2\sum_{t=k}^{t_{\tilde{i}+1} - 1}\eta_t\lambda_j \right)
		\eta^2_k,
	\end{align*}
	where the first inequality is because $\eta_k \le 1/L$.
	Hence, we can obtain 
	\begin{align*}
		&\sum_{k=0}^{t_{\tilde{i}+1} - 1}
		\exp\left(-2\sum_{t={k+1}}^{t_{\tilde{i}+1} - 1}\eta_t\lambda_j\right)
		\eta^2_k
		\\
		\le& \exp(2) \cdot \sum_{k=0}^{t_{\tilde{i}+1} - 1}
		\exp\left(-2\sum_{t=k}^{t_{\tilde{i}+1} - 1}\eta_t\lambda_j\right)
		\eta^2_k
		\\
		=& \exp(2) \cdot \sum_{i=1}^{\tilde{i}+1} \sum_{k=t_{i-1}}^{t_{i} - 1}
		\exp\left(-2\sum_{t=k}^{t_{\tilde{i}+1} - 1}\eta_t\lambda_j\right)
		\eta_k^2.
	\end{align*}
	Furthermore, combining with Eqn.~\eqref{eq:ge_sum} and the condition $\lambda_j \in [\mu \cdot 2^{\ti{i}}, \mu \cdot 2^{\ti{i}+1})$, we can obtain

	%
	
	\begingroup
	\allowdisplaybreaks
	\begin{align*}
		& \sum_{i=1}^{\tilde{i}+1} \sum_{k=t_{i-1}}^{t_{i} - 1}
		\exp\left(-2\sum_{t=k}^{t_{\tilde{i}+1} - 1}\eta_t\lambda_j\right)
		\eta_k^2
		\le
		\sum_{i=1}^{\tilde{i}+1} \sum_{k=t_{i-1}}^{t_{i} - 1}
		\exp\left(-2\sum_{t=k}^{t_{\tilde{i}+1} - 1}\eta_t\mu \cdot 2^{\tilde{i}}\right)
		\eta_k^2
		\\
		\overset{\eqref{eq:ge_sum}}{\le}& \sum_{i=1}^{\tilde{i}+1} \sum_{k=t_{i-1}}^{t_{i} - 1}
		\exp\left(-2 \sum_{j=i + 1}^{\tilde{i}+1}
		2^{\tilde{i} - j + 1} \ln \frac{\alpha_j}{\alpha_{j-1}}
		-2 \cdot 2^{\tilde{i} - i + 1} \ln \frac{\alpha_{i}}{
			\alpha_{i-1} + 2^{i-1} \mu (k - t_{i - 1})}
		\right)
		\eta_k^2
		\\
		=& \sum_{i=1}^{\tilde{i}+1} \sum_{k=t_{i-1}}^{t_{i} - 1}
		\exp\left(\sum_{j=i + 1}^{\tilde{i}+1}
		2^{\tilde{i} - j + 2} \ln \frac{\alpha_{j-1}}{\alpha_j}
		+ 2^{\tilde{i} - i + 2} \ln \frac{
			\alpha_{i-1} + 2^{i-1} \mu (k - t_{i - 1})}{\alpha_{i}}
		\right)
		\eta_k^2
		\\
		=& \sum_{i=1}^{\tilde{i}+1} \sum_{k=t_{i-1}}^{t_{i} - 1}
		\left[
		\prod_{j=i + 1}^{\tilde{i}+1}
		\left(\frac{\alpha_{j-1}}{\alpha_j}\right)^{2^{\tilde{i} - j + 2}}
		\right]
		\cdot \left(\frac{
			\alpha_{i-1} + 2^{i-1} \mu (k - t_{i - 1})}{
			\alpha_{i}}\right)^{2^{\tilde{i} - i + 2}} 
		\eta_k^2
        \\
		=& \sum_{i=1}^{\tilde{i}+1}
		\left[
		\prod_{j=i + 1}^{\tilde{i}+1}
		\left(\frac{\alpha_{j-1}}{\alpha_j}\right)^{2^{\tilde{i} - j + 2}}
		\right]
		\sum_{k=t_{i-1}}^{t_{i} - 1}
		\left(\frac{
			\alpha_{i-1} + 2^{i-1} \mu (k - t_{i - 1})}{
			\alpha_{i}}\right)^{2^{\tilde{i} - i + 2}} 
		\eta_k^2
		\\
		\overset{\eqref{eq:eta_t}}{=}& \sum_{i=1}^{\tilde{i}+1}
		\left[
		\prod_{j=i + 1}^{\tilde{i}+1}
		\left(\frac{\alpha_{j-1}}{\alpha_j}\right)^{2^{\tilde{i} - j + 2}}
		\right]
		\\
		&\cdot
		\sum_{k=t_{i-1}}^{t_{i} - 1}
		\left(\frac{
			\alpha_{i-1} + 2^{i-1} \mu (k - t_{i - 1})}{
			\alpha_{i}}\right)^{2^{\tilde{i} - i + 2}} 
		\left(L+\mu \sum_{j=1}^{i-1} \Delta_j2^{j-1} + 2^{i-1} \mu (k - t_{i-1})\right)^{-2}
		\\
		\overset{\eqref{eq:alpha_def}}{=}& \sum_{i=1}^{\tilde{i}+1}
		\left[
		\prod_{j=i + 1}^{\tilde{i}+1}
		\left(\frac{\alpha_{j-1}}{\alpha_j}\right)^{2^{\tilde{i} - j + 2}}
		\right]
		\\
		&\cdot \sum_{k=t_{i-1}}^{t_{i} - 1}
		\left(\frac{
			\alpha_{i-1} + 2^{i-1} \mu (k - t_{i - 1})}{
			\alpha_{i}}\right)^{2^{\tilde{i} - i + 2}} 
		\left(\alpha_{i-1} + 2^{i-1} \mu (k - t_{i-1})\right)^{-2}
		\\
		=& \sum_{i=1}^{\tilde{i}+1}
		\left[
		\prod_{j=i + 1}^{\tilde{i}+1}
		\left(\frac{\alpha_{j-1}}{\alpha_j}\right)^{2^{\tilde{i} - j + 2}}
		\right]
		\alpha_{i}^{-2^{\tilde{i} - i + 2}} 
	    \\
	    &\cdot
		\sum_{k=t_{i-1}}^{t_{i} - 1}
		(\alpha_{i-1} + 2^{i-1} \mu (k - t_{i - 1}))^{2^{\tilde{i} - i + 2}}
		\left(\alpha_{i-1} + 2^{i-1} \mu (k - t_{i-1})\right)^{-2}
		\\
		=& \sum_{i=1}^{\tilde{i}+1}
		\left[
		\prod_{j=i + 1}^{\tilde{i}+1}
		\left(\frac{\alpha_{j-1}}{\alpha_j}\right)^{2^{\tilde{i} - j + 2}}
		\right]
		\alpha_{i}^{-2^{\tilde{i} - i + 2}} 
		\sum_{k=t_{i-1}}^{t_{i} - 1}
		(\alpha_{i-1} + 2^{i-1} \mu (k - t_{i - 1}))^{2^{\tilde{i} - i + 2} - 2}
		\\
		\overset{\eqref{eq:aa}}{\le}&
		2 \cdot \frac{1}{2^{\tilde{i} + 1} \mu}
		\cdot
		\sum_{i=1}^{\tilde{i}+1}
		\left[
		\prod_{j=i + 1}^{\tilde{i}+1}
		\left(\frac{\alpha_{j-1}}{\alpha_j}\right)^{2^{\tilde{i} - j + 2}}
		\right]
		\left(
		\alpha_i^{2^{\tilde{i} - i + 2} - 1}
		- \alpha_{i-1}^{2^{\tilde{i} - i + 2} - 1}
		\right)
		\alpha_{i}^{-2^{\tilde{i} - i + 2}}
		\\
		\overset{\eqref{eq:alpha_cancel}}{\le}&
		2 \cdot \frac{1}{2^{\tilde{i} + 1} \mu} \cdot \alpha_{\ti{i}+1}^{-1}
		\le
		2 \cdot \frac{\eta_{t_{\ti{i}+1}}}{\lambda_j }, 
	\end{align*}
	\endgroup
	where the last inequality is because of the condition $\lambda_j \in [\mu \cdot 2^{\ti{i}}, \mu \cdot 2^{\ti{i}+1})$ and the definition of $\alpha_i$.
	
	Therefore, we have
	\begin{align*}
		v_{t_{\tilde{i}+1},j}
		\le
		2\exp(2) \cdot \frac{\eta_{t_{\ti{i}+1}}}{\lambda_j }
		\le
		15\cdot \frac{\eta_{t_{\ti{i}+1}}}{\lambda_j }.
	\end{align*}
\end{proof}
\begin{lemma}
    \label{lem:done_var}
	Let objective function $f(x)$ be quadratic and Assumption~\eqref{eq:var_ass} hold. Running SGD for $T$-steps starting from $w_0$ and a learning rate sequence $\{\eta_t\}_{t=1}^T$ defined in Eqn.~\eqref{eq:eta_t}, the final iterate $w_{T+1}$ satisfies
	\begin{align*}
		\EE\left[(w_{T+1} - w_*)^\top H (w_{T+1} - w_*)\right]
		\le&
		(w_0 - w_*)^\top H (w_0 - w_*) \cdot \exp\left(-2\mu\sum_{k=0}^T \eta_k \right)
		\\&
		+
		15\sigma^2\mu \sum_{\ti{i}=0}^{I_{\max}-1} \frac{2^{\ti{i}+1} s_{\ti{i}}}{L + \mu \sum_{j=1}^{\ti{i}+1} \Delta_j 2^{j-1} }.
	\end{align*}
\end{lemma}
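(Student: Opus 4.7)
My plan is to combine the decomposition and auxiliary lemmas already established in the excerpt. First, I invoke Lemma~\ref{lem:decomp} to split the error $\EE[(w_{T+1}-w_*)^\top H(w_{T+1}-w_*)]$ into a bias term and a variance term. For the bias term, I directly apply Lemma~\ref{lem:bias} with $\lambda_{\ti{j}} = \mu$ (the smallest positive eigenvalue of $H$), which immediately yields the factor $\exp(-2\mu \sum_{k=0}^T \eta_k)$ times the initial Hessian-weighted squared error. This takes care of the first summand on the right-hand side.

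For the variance term, Lemma~\ref{lem:var} gives the bound $\sigma^2 \sum_{j=1}^d \lambda_j^2 v_{T+1,j}$, where $v_{t,j} = \sum_{k=0}^{t-1}\eta_k^2 \prod_{i=k+1}^{t-1}(1-\eta_i\lambda_j)^2$. The crux is controlling each $v_{T+1,j}$. I group the eigenvalues according to the scheduler's brackets: for each $j$, let $\ti{i}(j)$ be the unique index with $\lambda_j \in [\mu \cdot 2^{\ti{i}}, \mu \cdot 2^{\ti{i}+1})$. The preceding lemma (Eqn.~\eqref{eq:ti}) gives $v_{t_{\ti{i}+1},j} \le 15\,\eta_{t_{\ti{i}+1}}/\lambda_j$, and since $T+1 \ge t_{\ti{i}+1}$, I apply Lemma~\ref{lem:dec} to propagate this bound forward:
\begin{equation*}
v_{T+1,j} \le \max\!\left(v_{t_{\ti{i}+1},j},\; \frac{\eta_{t_{\ti{i}+1}}}{\lambda_j}\right) \le \frac{15\,\eta_{t_{\ti{i}+1}}}{\lambda_j}.
\end{equation*}

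Substituting back and using $\lambda_j < \mu \cdot 2^{\ti{i}+1}$ on the factor $\lambda_j^2/\lambda_j = \lambda_j$, I bound the variance contribution of the $\ti{i}$-th group by $15\sigma^2 \cdot s_{\ti{i}} \cdot \mu \cdot 2^{\ti{i}+1} \cdot \eta_{t_{\ti{i}+1}}$. Plugging in the closed form $\eta_{t_{\ti{i}+1}} = 1/(L + \mu \sum_{j=1}^{\ti{i}+1}\Delta_j 2^{j-1})$ from Eqn.~\eqref{eq:eta_t} and summing over $\ti{i}=0,\dots,I_{\max}-1$ produces exactly the claimed variance bound. Adding the bias piece finishes the proof.

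The main obstacles are bookkeeping rather than conceptual: (i) confirming that Lemma~\ref{lem:dec} can be iterated from $t_{\ti{i}+1}$ up to $T+1$ without losing the constant $15$ (it can, because the bound $15\eta_{t_{\ti{i}+1}}/\lambda_j$ already dominates the extra $\eta_t/\lambda_j$ term that Lemma~\ref{lem:dec} could contribute, since the step sizes are monotonically decreasing), and (ii) making the group-wise aggregation tight by using $\lambda_j \le \mu \cdot 2^{\ti{i}+1}$ on one factor of $\lambda_j$ while keeping the remaining $1/\lambda_j$ to cancel against $v_{T+1,j}$'s denominator. Everything else is routine substitution.
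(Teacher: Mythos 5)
Your proposal is correct and follows essentially the same route as the paper's proof: decompose via Lemma~\ref{lem:decomp}, bound the bias with Lemma~\ref{lem:bias} at $\lambda_{\ti{j}}=\mu$, bound the variance with Lemma~\ref{lem:var}, propagate the per-coordinate bound $v_{t_{\ti{i}+1},j}\le 15\,\eta_{t_{\ti{i}+1}}/\lambda_j$ from Eqn.~\eqref{eq:ti} forward to $v_{T+1,j}$ using Lemma~\ref{lem:dec}, and aggregate over the eigenvalue brackets using $\lambda_j<\mu\cdot 2^{\ti{i}+1}$ and $\#\{\lambda_j\in\mathcal{R}_{\ti{i}}\}=s_{\ti{i}}$. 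The only cosmetic difference is that the paper's displayed chain routes through $\eta_{t_{\ti{i}+1}+1}\le\eta_{t_{\ti{i}+1}}$ while you apply Lemma~\ref{lem:dec} directly at $t=t_{\ti{i}+1}$; both are valid.
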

\begin{proof}
    The target of this lemma is to obtain the explicit form to bound the variance term. By the definition of $v_{t+1,j}$ in Lemma~\ref{lem:dec}, we can obtain that
	\begin{align*}
		&\sum_{\tau = 0}^T  \EE\left[ \eta_\tau^2  n_\tau^\top \cdot 
		P_T \dots P_{\tau+1}  H   P_{\tau+1} \dots P_T \cdot n_\tau\right]
	\\
		\overset{\eqref{eq:var}}{\le}& 
		\sigma^2 \sum_{j=1}^d \lambda_j^2 \cdot v_{T+1, j}
    \\
    \overset{\eqref{eq:var_trans}}{\le}&
    \sigma^2 \sum_{j=1}^d \lambda_j^2 \cdot \max\left(v_{t_{\ti{i}+1}+1, j}, \frac{\eta_{t_{\ti{i}+1}+1}}{\lambda_j}\right)
		\\
		\overset{\eqref{eq:ti}}{\le}&
    \sigma^2 \sum_{j=1}^d \lambda_j^2 \cdot \max\left( 15 \cdot \frac{\eta_{t_{\ti{i}+1}+1}}{\lambda_j}, \frac{\eta_{t_{\ti{i}+1}+1}}{\lambda_j}\right)
		\\
		=&
    15 \sigma^2 \sum_{j=1}^d \lambda_j \cdot \eta_{t_{\ti{i}+1}+1}
    \le
    15 \sigma^2 \sum_{j=1}^d \lambda_j \cdot \eta_{t_{\ti{i} + 1}}
		\le
		15 \sigma^2 \mu \sum_{\ti{i}=0}^{I_{\max}-1} 2^{\ti{i}+1} s_{\ti{i}} \cdot \eta_{t_{\ti{i}+1}},
	\end{align*}
where the last inequality is because $\lambda_j \in [2^{\ti{i}}\mu,\; 2^{\ti{i}+1}\mu)$ and there are $s_{\ti{i}}$ such $\lambda_j$'s lie in this range.
By Eqn.~\eqref{eq:eta_t}, we have
\begin{align}
	\eta_{t_{\ti{i}+1}} = \frac{1}{L + \mu \sum_{j=1}^{\ti{i}+1} \Delta_j 2^{j-1} }.
\end{align} 
Therefore, we have
\begin{align}
		\sum_{\tau = 0}^T  \EE\left[ \eta_\tau^2  n_\tau^\top \cdot 
	P_T \dots P_{\tau+1}  H   P_{\tau+1} \dots P_T \cdot n_\tau\right]
	\le
	15\sigma^2\mu \sum_{\ti{i}=0}^{I_{\max}-1} \frac{2^{\ti{i}+1} s_{\ti{i}}}{L + \mu \sum_{j=1}^{\ti{i}+1} \Delta_j 2^{j-1} }.
\end{align}

Combining with Lemma~\ref{lem:decomp} and Lemma~\ref{lem:bias}, we can obtain that
\begin{align*}
	\EE\left[(w_{T+1} - w_*)^\top H (w_{T+1} - w_*)\right]
	\le&
	(w_0 - w_*)^\top H (w_0 - w_*) \cdot \exp\left(-2\mu\sum_{k=0}^T \eta_k \right)
	\\&
	+
15\sigma^2\mu \sum_{\ti{i}=0}^{I_{\max}-1} \frac{2^{\ti{i}+1} s_{\ti{i}}}{L + \mu \sum_{j=1}^{\ti{i}+1} \Delta_j 2^{j-1} }.
\end{align*}
\end{proof}

\begin{lemma}
	For $\forall t \ge 0$, the learning rate sequence $\{\eta_t\}_{t=1}^T$ defined in Eqn.~\eqref{eq:eta_t} satisfies
	\begin{align}
	    \label{eq:eta_upper}
		\eta_t \le \frac{1}{L + \mu t}
	\end{align}
\end{lemma}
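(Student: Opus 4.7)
The plan is to prove the bound directly from the definition of $\eta_t$ in Eqn.~\eqref{eq:eta_t}, which reduces to comparing denominators. Fix any $t \ge 0$ and let $i$ be the unique index with $t \in [t_{i-1}, t_i)$. Then the claim $\eta_t \le \frac{1}{L + \mu t}$ is equivalent to
\begin{equation*}
L + \mu \sum_{j=1}^{i-1} \Delta_j 2^{j-1} + 2^{i-1}\mu (t - t_{i-1}) \ \ge\ L + \mu t,
\end{equation*}
which, after canceling $L$ and dividing by $\mu > 0$, becomes
\begin{equation*}
\sum_{j=1}^{i-1} \Delta_j 2^{j-1} + 2^{i-1}(t - t_{i-1})\ \ge\ t.
\end{equation*}

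The key step is then to rewrite the right-hand side using the telescoping identity $t = t_{i-1} + (t - t_{i-1}) = \sum_{j=1}^{i-1}\Delta_j + (t - t_{i-1})$, which comes from Eqn.~\eqref{eq:delta_and_t}. Moving everything to one side, the required inequality becomes
\begin{equation*}
\sum_{j=1}^{i-1} \Delta_j (2^{j-1} - 1) + (2^{i-1} - 1)(t - t_{i-1})\ \ge\ 0.
\end{equation*}
Since $\Delta_j \ge 0$ by construction, $t - t_{i-1} \ge 0$ by the choice of $i$, and $2^{k-1} \ge 1$ for every $k \ge 1$, every term on the left is non-negative, so the sum is non-negative termwise. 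This establishes the claim.

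There is no real obstacle here beyond bookkeeping: the only things to check are the degenerate case $i=1$ (the sum on the left is empty, and the inequality reduces to $(2^0-1)(t-0) \ge 0$, i.e.\ $0 \ge 0$), and the boundary case $t = 0$ which gives $\eta_0 \le 1/L$ immediately. Consequently the whole argument is a one-line algebraic verification once the denominator comparison is set up correctly.
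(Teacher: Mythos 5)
Your proof is correct and is essentially the same argument as the paper's: both hinge on the bound $2^{j-1}\ge 1$ together with the telescoping identity $t_{i-1}=\sum_{j=1}^{i-1}\Delta_j$, the only difference being that the paper lower-bounds the denominator term by term while you rearrange the same comparison into a sum of non-negative terms. No gaps.
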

\begin{proof}
    For $\forall t \ge 0$, there $\exists i \ge 1$, where $t \in [t_{i-1}, t_i)$. Given the form defined in Eqn.~\eqref{eq:eta_t}, we have,

\begin{align*}
    \eta_t
    =& \frac{1}{L + \mu \sum_{j=1}^{i-1} \Delta_j 2^{j-1} + 2^{i-1} \mu (t - t_{i-1})}
    \\
    \le& \frac{1}{L + \mu \sum_{j=1}^{i-1} \Delta_j + \mu (t - t_{i-1})}
    \\
    \overset{\eqref{eq:delta_and_t}}{=}&
    \frac{1}{L + \mu \sum_{j=1}^{i-1} (t_j - t_{j-1}) + \mu (t - t_{i-1})}
    \\
    =&
    \frac{1}{L + \mu (t_{i-1} - t_0) + \mu (t - t_{i-1})}
    \\
    =&
    \frac{1}{L + \mu (t - t_0)}
    \\
    =&
    \frac{1}{L + \mu t}
\end{align*}
\end{proof}

\begin{lemma}
    \label{lem:bound_bias}
	Let objective function $f(x)$ be quadratic and Assumption~\eqref{eq:var_ass} hold. Running SGD for $T$-steps starting from $w_0$ and a learning rate sequence $\{\eta_t\}_{t=1}^T$ defined in Eqn.~\eqref{eq:eta_t}, the final iterate $w_{T+1}$ satisfies
	\begin{align*}
		\EE\left[(w_{T+1} - w_*)^\top H (w_{T+1} - w_*)\right]
		\le&
		(w_0 - w_*)^\top H (w_0 - w_*) \cdot \frac{\kappa^2}{\Delta_1^2}
		\\&
		+
		15\sigma^2\mu \sum_{\ti{i}=0}^{I_{\max}-1} \frac{2^{\ti{i}+1} s_{\ti{i}}}{L + \mu \sum_{j=1}^{\ti{i}+1} \Delta_j 2^{j-1} }.
	\end{align*}
\end{lemma}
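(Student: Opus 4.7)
The plan is to reduce Lemma~\ref{lem:bound_bias} to Lemma~\ref{lem:done_var}, since the two bounds differ only in how the bias contraction factor is expressed. Lemma~\ref{lem:done_var} already delivers the variance term verbatim and the prefactor $(w_0-w_*)^\top H (w_0-w_*)$ on the bias term; the only remaining task is to show
\[
\exp\left(-2\mu \sum_{k=0}^T \eta_k\right) \le \frac{\kappa^2}{\Delta_1^2},
\]
or equivalently, the lower bound $\sum_{k=0}^T \eta_k \ge \mu^{-1}\ln(\Delta_1/\kappa)$.

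First, I would discard every phase past the first by monotonicity, so that $\sum_{k=0}^T \eta_k \ge \sum_{k=0}^{\Delta_1 - 1}\eta_k$. On the initial interval $t\in[t_0,t_1)=[0,\Delta_1)$, Eqn.~\eqref{eq:eta_t} collapses to $\eta_t = 1/(L+\mu t)$, so $\sum_{k=0}^{\Delta_1-1}\eta_k = \sum_{k=0}^{\Delta_1-1} 1/(L+\mu k)$. Because $k\mapsto 1/(L+\mu k)$ is positive and monotone decreasing, Proposition~\ref{prop:dec} gives
\[
\sum_{k=0}^{\Delta_1-1} \frac{1}{L+\mu k} \ge \int_{0}^{\Delta_1} \frac{dx}{L+\mu x} = \frac{1}{\mu}\ln\left(\frac{L+\mu\Delta_1}{L}\right).
\]

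Substituting $L=\mu\kappa$ and exponentiating,
\[
\exp\left(-2\mu\sum_{k=0}^T \eta_k\right) \le \left(\frac{L}{L+\mu\Delta_1}\right)^2 = \left(\frac{\kappa}{\kappa+\Delta_1}\right)^2 \le \frac{\kappa^2}{\Delta_1^2},
\]
where the last step simply drops the additive $\kappa$ in the denominator. Plugging this inequality into the bias term of Lemma~\ref{lem:done_var}, while keeping its variance term unchanged, yields exactly the claim.

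There is no serious obstacle here: the statement is essentially a cosmetic rewriting of Lemma~\ref{lem:done_var} that replaces an exponential-in-$\sum\eta_k$ factor by the more usable polynomial factor $\kappa^2/\Delta_1^2$. The only subtle point is realizing that it suffices to use the single first phase of the schedule to obtain this bound, since the contribution of that phase alone already dominates any logarithmic factor needed to control the bias; later phases, with much smaller step sizes, would give an inferior logarithm. This rewriting is what makes the subsequent choice of $\Delta_1 \propto \sqrt{s_0}T/(\sum_{i}\sqrt{s_i})$ in Eqn.~\eqref{eq:delta} translate into the $\kappa^2 (\sum_i\sqrt{s_i})^2/(s_0 T^2)$ bias term appearing in Theorem~\ref{thm:main_1}.
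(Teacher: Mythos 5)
Your proposal is correct and follows essentially the same route as the paper: both reduce to Lemma~\ref{lem:done_var} and then bound $\exp(-2\mu\sum_{k=0}^T\eta_k)$ by restricting to the first phase, where $\eta_t = 1/(L+\mu t)$, applying the integral comparison to get $\big(L/(L+\mu\Delta_1)\big)^2$, and dropping the additive term to reach $\kappa^2/\Delta_1^2$. The only cosmetic difference is that the paper routes this computation through its already-established Eqn.~\eqref{eq:ge_sum} (specialized to $\tilde{i}=0$, $\lambda_j=\mu$) and Eqn.~\eqref{eq:eta_upper}, whereas you carry out the same integral estimate directly.
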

\begin{proof}
    The target of this lemma is to obtain the explicit form to bound the bias term.

	First, by Eqn.~\eqref{eq:ge_sum} and the condition $\lambda_j \in [\mu \cdot 2^{\ti{i}}, \mu \cdot 2^{\ti{i}+1})$,  we have
	\begin{equation}
		\label{eq:exp}
		\begin{aligned}
		    \exp\left(-2 \lambda_j \sum_{k=0}^{T} \eta_k \right)
			\le&
			\exp\left(-2 \sum_{k=0}^{t_{\tilde{i}+1}-1} \eta_k \lambda_j\right)
			\le
			\exp\left(-2 \sum_{i=1}^{\ti{i}+1} \frac{1}{2^{i-1}\mu} \ln\frac{\alpha_i}{\alpha_{i-1}} \lambda_j\right)
			\\
			\le&
			\exp\left(- \sum_{i=1}^{\ti{i}+1} 2^{\ti{i}-i+2} \ln\frac{\alpha_i}{\alpha_{i-1}} \right)
			=
			\prod_{i=1}^{\ti{i}+1} \left(\frac{\alpha_{i-1}}{\alpha_i}\right)^{2^{\ti{i}-i+2}}
			\\
			\le&
			\prod_{i=1}^{\ti{i}+1}\left(\frac{\alpha_{i-1}}{\alpha_i}\right)^2
			=\left(\frac{\alpha_1}{\alpha_{\ti{i}+1}}\right)^2
			=L^2 \cdot \eta_{t_{\tilde{i}+1}}^2 
		\end{aligned}	
	\end{equation}

    For $\lambda_j = \mu$, since $\mu \in [\mu \cdot 2^{\ti{i}}, \mu \cdot 2^{\ti{i}+1})$ for $\ti{i}=0$, it follows,
	\begin{equation}
		\begin{aligned}
		    \exp\left(-2 \mu \sum_{k=0}^{T} \eta_k \right)
			\le L^2 \cdot \eta_{t_1}^2
			\overset{\eqref{eq:eta_upper}}{\le}
			\left(\frac{L}{L + \mu t_1}\right)^2
        	\overset{\eqref{eq:delta_and_t}}{=}
			\left(\frac{L}{L + \mu \Delta_1}\right)^2
			\le
			\left(\frac{L}{\mu \Delta_1}\right)^2
			= \frac{\kappa^2}{\Delta_1^2}
		\end{aligned}	
	\end{equation}

    Combining with Lemma~\ref{lem:done_var}, we obtain that,
    \begin{align*}
		\EE\left[(w_{T+1} - w_*)^\top H (w_{T+1} - w_*)\right]
		\le&
		(w_0 - w_*)^\top H (w_0 - w_*) \cdot \frac{\kappa^2}{\Delta_1^2}
		\\&
		+
		15\sigma^2\mu \sum_{\ti{i}=0}^{I_{\max}-1} \frac{2^{\ti{i}+1} s_{\ti{i}}}{L + \mu \sum_{j=1}^{\ti{i}+1} \Delta_j 2^{j-1} }.
	\end{align*}
\end{proof}

\subsection{Proof of Lemma~\ref{lem:inv_p}}
\label{appendix:proof_lem_inv_p}
\lemmainvp*
\begin{proof}
    For $\forall t \ge 0$, we have
\begingroup
\allowdisplaybreaks
\begin{align*}
    f(w_t) - f(w_*)
    \overset{\eqref{eq:prob}}{=}&
    \EE\left[\frac{1}{2}w_t^\top H(\xi) w_t - b(\xi)^\top w_t\right]
    - \EE\left[\frac{1}{2}w_*^\top H(\xi) w_* - b(\xi)^\top w_*\right]
    \\
    =&\left(\frac{1}{2}w_t^\top \EE[H(\xi)] w_t - \EE[b(\xi)]^\top w_t\right)
    - \left(\frac{1}{2}w_*^\top \EE[H(\xi)] w_* - \EE[b(\xi)]^\top w_*\right)
    \\
    =&
    \left(\frac{1}{2}w_t^\top H w_t - b^\top w_t\right)
    - \left(\frac{1}{2}w_*^\top H w_* - b^\top w_*\right)
    \\
    \overset{\eqref{eq:optima}}{=}&
    \left(\frac{1}{2}w_t^\top H w_t - b^\top w_t\right)
    - \left(\frac{1}{2}b^\top \left(H^\top\right)^{-1} b - b^\top H^{-1} b\right)
    \\
    =&
    \left(\frac{1}{2}w_t^\top H w_t - b^\top w_t\right)
    - \left(\frac{1}{2}b^\top H^{-1} b - b^\top H^{-1} b\right)
    \\
    =&
    \frac{1}{2}w_t^\top H w_t - b^\top w_t
    + \frac{1}{2}b^\top H^{-1} b
    \\
    =&  \frac{1}{2}w_t^\top H w_t - \frac{1}{2}b^\top w_t
    - \frac{1}{2}b^\top w_t
    + \frac{1}{2}b^\top H^{-1} b
    \\
    =&  \frac{1}{2}w_t^\top H w_t - \frac{1}{2}w_t^\top b
    - \frac{1}{2}b^\top w_t
    + \frac{1}{2}b^\top H^{-1} b
    \\
    =&  \frac{1}{2}w_t^\top H w_t -\frac{1}{2}w_t^\top H w_* -\frac{1}{2}w_*^\top H w_t
    + \frac{1}{2} w_*^\top H w_*
    \\
    =& \frac{1}{2}(w_t - w_*)^\top H (w_t - w_*),
\end{align*}
\endgroup

where the 5th equality is entailed by the fact that $H^\top = H$ is a symmetric matrix, and the 9th equality uses both $H^\top = H$ and Eqn~\ref{eq:optima}.

Combine the above result with Lemma~\ref{lem:bound_bias}, we obtain that
\begin{align*}
	\EE\left[f(w_{T+1}) - f(w_*)\right]
	\le&
	(f(w_0) - f(w_*)) \cdot \frac{\kappa^2}{\Delta_1^2}
	+
	\frac{15}{2} \cdot \sigma^2\mu \sum_{\ti{i}=0}^{I_{\max}-1} \frac{2^{\ti{i}+1} s_{\ti{i}}}{L + \mu \sum_{j=1}^{\ti{i}+1} \Delta_j 2^{j-1} }.
\end{align*}
\end{proof}

\subsection{Proof of Theorem~\ref{thm:main_1}}

\theoremmain*
\begin{proof}
	We have
	\begin{align*}
		\mu\cdot \sum_{\ti{i}=0}^{I_{\max}-1} \frac{2^{\ti{i}+1} s_{\ti{i}}}{L + \mu \sum_{j=1}^{\ti{i}+1} \Delta_j 2^{j-1} }
		<&
		\mu\cdot\sum_{\ti{i}=0}^{I_{\max}-1} \frac{2^{\ti{i}+1} s_{\ti{i}}}{\mu 2^{\ti{i}} \Delta_{\ti{i}+1}} 
		\overset{\eqref{eq:delta}}{=}
		2\sum_{\ti{i}=0}^{I_{\max}-1} \frac{s_{\ti{i}}}{\frac{\sqrt{s_{\ti{i}}}}{\sum_{i=0}^{I_{\max}-1} \sqrt{s_i}} \cdot T}
		\\
		=& \frac{2}{T}  \cdot \sum_{i=0}^{I_{\max}-1} \sqrt{s_i}  \cdot  \sum_{\ti{i}=0}^{I_{\max}-1} \sqrt{s_{\ti{i}}}
		=\frac{2 \left(\sum_{i=0}^{I_{\max}-1} \sqrt{s_i}\right)^2}{T}.
	\end{align*}
	Combining with Lemma~\ref{lem:inv_p} and the definition of $\Delta_1$, we can obtain that
	\begin{align*}
		\EE\left[f(w_{T+1}) - f(w_*)\right]
		\le&
		(f(w_0) - f(w_*)) \cdot \frac{\kappa^2 \cdot \left(\sum_{i=0}^{I_{\max}-1} \sqrt{s_{i}}\right)^2}{s_0 T^2}
        +
		\frac{15 \left(\sum_{i=0}^{I_{\max}-1} \sqrt{s_i}\right)^2}{T} \cdot \sigma^2.
	\end{align*}
\end{proof}

\subsection{Proof of Corollary~\ref{cor:optimal}}
\label{appendix:proof_optimal}

\corollaryoptimal*
\begin{proof}

According to Theorem~\ref{thm:main_1}, 
\begin{align*}
		&\EE\left[f(w_{T+1}) - f(w_*)\right]
		\\
		\le&
		(f(w_0) - f(w_*)) \cdot \frac{\kappa^2 \cdot \left(\sum_{i=0}^{I_{\max}-1} \sqrt{s_{i}}\right)^2}{s_0 T^2}
		+
		\frac{15 \left(\sum_{i=0}^{I_{\max}-1} \sqrt{s_i}\right)^2}{T} \cdot \sigma^2
		\\
		=&
	    (f(w_0) - f(w_*)) \cdot \frac{\kappa^2}{T^2}
		\cdot \frac{\left(\sum_{i=0}^{I_{\max}-1} \sqrt{s_{i}}\right)^2}{s_0}
		+
		\frac{d \sigma^2}{T}
		\cdot
		\frac{15 \left(\sum_{i=0}^{I_{\max}-1} \sqrt{s_i}\right)^2}{d}.
	\end{align*}
	
The key terms here are $C_1 \triangleq \left(\sum_{i=0}^{I_{\max}-1} \sqrt{s_{i}}\right)^2 / s_0$ and $C_2 \triangleq 15 \left(\sum_{i=0}^{I_{\max}-1} \sqrt{s_{i}}\right)^2 /d$. As long as we can bound both terms with a constant $C(\alpha)$, the corollary will be directly proved.

1) If $\kappa < 2$, then there is only one interval with $s_0 = d$. By setting $C(\alpha) = \max(C_1, C_2) = 15$, this completes the proof.

2) If $\kappa \ge 2$, then bounding $C_1$ and $C_2$ be done by computing the value of $s_i$ under power law. For all interval $i$ except the last interval, we have,

\begingroup
\allowdisplaybreaks
\begin{align*}
  \frac{s_i}{d}
  \overset{\eqref{eq:def_si}}{=}& \# \lambda_j \in [\mu \cdot 2^i, \mu \cdot 2^{i+1})
  = \int_{\mu \cdot 2^i}^{\mu \cdot 2^{i+1}}  p(\lambda) d\lambda
  \\
  \overset{\eqref{eq:power_law}}{=}&
  \int_{\mu \cdot 2^i}^{\mu \cdot 2^{i+1}}
      \frac{1}{Z} \cdot \left( \frac{\mu}{\lambda} \right)^{\alpha} d\lambda
  =
  \frac{1}{Z} \cdot \mu^\alpha \cdot \int_{\mu \cdot 2^i}^{\mu \cdot 2^{i+1}}
     \lambda^{-\alpha} d\lambda
  \\
  =&
  \left(\int_{\mu}^L \left(\frac{\mu}{\lambda}\right)^\alpha d \lambda\right)^{-1} \cdot \mu^{\alpha} \cdot
      \int_{\mu \cdot 2^i}^{\mu \cdot 2^{i+1}} \lambda^{-\alpha} d\lambda
  =
  \left(\int_{\mu}^L \lambda^{-\alpha} d \lambda\right)^{-1} \cdot
      \int_{\mu \cdot 2^i}^{\mu \cdot 2^{i+1}} \lambda^{-\alpha} d\lambda
  \\
  =&
  \left(\frac{\lambda^{1-\alpha}}{1-\alpha}
     \Big|_{\mu}^L\right)^{-1}
  \cdot
  \left(\frac{\lambda^{1-\alpha}}{1-\alpha}
     \Big|_{\mu \cdot 2^i}^{\mu \cdot 2^{i+1}}\right)
  =
  \left(\lambda^{1-\alpha}
     \Big|_{\mu}^L\right)^{-1}
  \cdot
  \left(\lambda^{1-\alpha}
     \Big|_{\mu \cdot 2^i}^{\mu \cdot 2^{i+1}}\right)
  \\
  =&
  \left(
    L^{1-\alpha} - \mu^{1-\alpha}
  \right)^{-1}
  \cdot
  \left(
    \mu^{1-\alpha} \cdot \left(2^{i+1}\right)^{1 - \alpha} - \mu^{1-\alpha} \cdot \left(2^i\right)^{1 - \alpha}
  \right)
  \\
  =&
  \frac{
    \mu^{1-\alpha} \cdot \left(2^{i+1}\right)^{1 - \alpha} - \mu^{1-\alpha} \cdot \left(2^i\right)^{1 - \alpha}
  }{
    L^{1-\alpha} - \mu^{1-\alpha}
  }
  =
  \frac{
    \left(2^{i+1}\right)^{1 - \alpha} - \left(2^i\right)^{1 - \alpha}
  }{
    \kappa^{1-\alpha} - 1
  }
  \\
  =& 2^{i(1-\alpha)} \cdot
  \frac{
    2^{1 - \alpha} - 1
  }{
    \kappa^{1-\alpha} - 1
  }
\end{align*}
\endgroup

Therefore, we have
\begin{align}
  s_i = d \cdot 2^{i(1-\alpha)} \cdot
  \frac{
    2^{1 - \alpha} - 1
  }{
    \kappa^{1-\alpha} - 1
  }
  =
  d \cdot
  \frac{
    2^{1 - \alpha} - 1
  }{
    \kappa^{1-\alpha} - 1
  }
  \cdot 2^{i(1-\alpha)}
\end{align}

holds for all interval $i$ except the last interval $i' = I_{\max} - 1 = \log_2 \kappa - 1 > 0$. This last interval may not completely covers $[\mu \cdot 2^{i'}, \mu \cdot 2^{i'+1})$ due to the boundary truncated by $L$, but we still have
\begin{align*}
  s_{i'} \le
  d \cdot
  \frac{
    2^{1 - \alpha} - 1
  }{
    \kappa^{1-\alpha} - 1
  }
  \cdot 2^{i'(1-\alpha)}
\end{align*}

It follows,
\begin{align*}
  \left(\sum_{i=0}^{I_{\max}-1} \sqrt{s_{i}}\right)^2
  \le&
  d \cdot
  \frac{
    2^{1 - \alpha} - 1
  }{
    \kappa^{1-\alpha} - 1
  }
  \cdot
  \left(\sum_{i=0}^{I_{\max}-1} \sqrt{2^{i(1-\alpha)}}\right)^2
  =
  d \cdot
  \frac{
    2^{1 - \alpha} - 1
  }{
    \kappa^{1-\alpha} - 1
  }
  \cdot
  \left(\sum_{i=0}^{I_{\max}-1} 2^{i(1-\alpha)/2}\right)^2
  \\
  =&
  d \cdot
  \frac{
    2^{1 - \alpha} - 1
  }{
    \kappa^{1-\alpha} - 1
  }
  \cdot
  \left(\sum_{i=0}^{I_{\max}-1} \left(\frac{1}{2}\right)^{i(\alpha-1)/2}\right)^2
  \\
  \le&
  d \cdot
  \frac{
    2^{1 - \alpha} - 1
  }{
    \kappa^{1-\alpha} - 1
  }
  \cdot
  \left(\sum_{i=0}^{\infty} \left(\frac{1}{2}\right)^{i(\alpha-1)/2}\right)^2
  =
  d \cdot
  \frac{
    2^{1 - \alpha} - 1
  }{
    \kappa^{1-\alpha} - 1
  }
  \cdot
  \left(\sum_{i=0}^{\infty} \left(\frac{1}{2^{(\alpha-1)/2}}\right)^{i}\right)^2
  \\
  =&
  d \cdot
  \frac{
    2^{1 - \alpha} - 1
  }{
    \kappa^{1-\alpha} - 1
  }
  \cdot
  \left(\frac{1}{1 - \frac{1}{2^{(\alpha-1)/2}}}\right)^2
  =
  d \cdot
  \frac{
    2^{1 - \alpha} - 1
  }{
    \kappa^{1-\alpha} - 1
  }
  \cdot
  \left(\frac{1}{1 - 2^{(1-\alpha)/2}}\right)^2.
\end{align*}

Thus,
\begin{align*}
  C_1
  =& \frac{\left(\sum_{i=0}^{I_{\max}-1} \sqrt{s_{i}}\right)^2}{s_0}
  \le
  \frac{
    d \cdot
    \frac{
      2^{1 - \alpha} - 1
    }{
      \kappa^{1-\alpha} - 1
    }
    \cdot
    \left(\frac{1}{1 - 2^{(1-\alpha)/2}}\right)^2
  }{
    d \cdot
    \frac{
      2^{1 - \alpha} - 1
    }{
      \kappa^{1-\alpha} - 1
    }
    \cdot 2^{0(1-\alpha)} 
  }
  =
  \left(\frac{1}{1 - 2^{(1-\alpha)/2}}\right)^2
  \\
  C_2
  =& \frac{15 \left(\sum_{i=0}^{I_{\max}-1} \sqrt{s_{i}}\right)^2}{d}
  \le
  \frac{15}{d} \cdot
  d \cdot
  \frac{
    2^{1 - \alpha} - 1
  }{
    \kappa^{1-\alpha} - 1
  }
  \cdot
  \left(\frac{1}{1 - 2^{(1-\alpha)/2}}\right)^2
  \\
  =&
  15 \cdot
  \frac{
    1 - \left(\frac{1}{2}\right)^{\alpha - 1}
  }{
    1 - \left(\frac{1}{\kappa}\right)^{\alpha - 1}
  }
  \cdot
  \left(\frac{1}{1 - 2^{(1-\alpha)/2}}\right)^2
  \\
  \le&
  15 \cdot \left(\frac{1}{1 - 2^{(1-\alpha)/2}}\right)^2.
\end{align*}

Here the last inequality for $C_2$ is entailed by $\kappa \ge 2$ and $\alpha > 1$.

By setting $C(\alpha) = \max(C_1, C_2) = 15 \cdot \left(\frac{1}{1 - 2^{(1-\alpha)/2}}\right)^2$, we obtain

\begingroup
\begin{align*}
		&\EE\left[f(w_{T+1}) - f(w_*)\right]
		\\
		\le&
	    (f(w_0) - f(w_*)) \cdot \frac{\kappa^2}{T^2}
		\cdot \frac{\left(\sum_{i=0}^{I_{\max}-1} \sqrt{s_{i}}\right)^2}{s_0}
		+
		\frac{d \sigma^2}{T}
		\cdot
		\frac{15 \left(\sum_{i=0}^{I_{\max}-1} \sqrt{s_i}\right)^2}{d}.
		\\
		=&
	    (f(w_0) - f(w_*)) \cdot \frac{\kappa^2}{T^2}
		\cdot C_1
		+
		\frac{d \sigma^2}{T}
		\cdot
		C_2
		\\
		\le&
	    \left((f(w_0) - f(w_*)) \cdot \frac{\kappa^2}{T^2}
		+
		\frac{d \sigma^2}{T}\right)
		\cdot
		C(\alpha).
\end{align*}
\endgroup
\end{proof}

\subsection{Proof of Theorem~\ref{thm:step_lower_bound}}
\label{appendix:proof_step_lower_bound}

\theoremsteplowerbound*

\begin{proof}
  The lower bound here is an asymptotic bound. Specifically, we require
  \begin{align}
    \label{eq:step_decay_lower_bound_T_requirement}
    \frac{T}{\log T} \ge \max\left(
      2^{16},
      16,
      \frac{1}{256} \cdot \frac{\sigma^2}{\min_j \lambda_j \left(w_{0,j} - w_{*,j}\right)^2}
    \right).
  \end{align}

  In~\citet{ge2019step}, step decay has following learning rate sequence:
  \begin{align}
      \eta_t = \frac{\eta_1}{2^\ell} \quad \mbox{if } t \in \left[1 + \frac{T}{\log T} \cdot \ell, \frac{T}{\log T} \cdot (\ell + 1) \right],
  \end{align}
  where $\ell = 0, 1, \dots, \log T - 1$. Notice that the index start from $t = 1$ instead of $t = 0$. For consistency with our framework, we set $\eta_0 = 0$, which produces the exact same step decay scheduler while only adding one extra iteration, thus does not affect the overall asymptotic bound.
  
  We first translate the general notations to diagonal cases so that the idea of the proof can be clearer.

  Since $f(x)$ is quadratic, according to the proof of Lemma~\ref{lem:inv_p} in Appendix~\ref{appendix:proof_lem_inv_p},
\begin{align*}
  f(w_{T+1}) - f(w_*)
  =& \frac{1}{2}(w_{T+1} - w_*)^\top H (w_{T+1} - w_*).
\end{align*}

Furthermore, according to Lemma~\ref{lem:decomp}, where $P_t = I - \eta_t H$,
\begin{align*}
    &\EE\left[(w_{T+1} - w_*)^\top H (w_{T+1} - w_*)\right]
	\\
	=&
	\EE\left[(w_{0} - w_*)^\top \cdot P_T\dots P_0 H P_0\dots P_T\cdot(w_{0} - w_*)\right] 
	\\
	&+
	\sum_{\tau = 0}^T  \EE\left[ \eta_\tau^2  n_\tau^\top \cdot 
	P_T \dots P_{\tau+1}  H   P_{\tau+1} \dots P_T \cdot n_\tau\right]
	\\
	=&
	\sum_{j=1}^d \lambda_j \left(w_{0,j} - w_{*,j}\right)^2\prod_{k=0}^T (1 - \eta_k \lambda_j)^2 
	+
	\sum_{\tau = 0}^T \eta_\tau^2 \sum_{j=1}^d \prod_{k=\tau+1}^T  \lambda_j (1 - \eta_k \lambda_j)^2 \EE\left[ n_{\tau,j}^2 \right]
	\\
	=& \sum_{j=1}^d \lambda_j \left(w_{0,j} - w_{*,j}\right)^2\prod_{k=0}^T (1 - \eta_k \lambda_j)^2 
	+
	\sum_{\tau = 0}^T \eta_\tau^2 \sum_{j=1}^d  \prod_{k=\tau+1}^T \lambda_j (1 - \eta_k \lambda_j)^2 \cdot \lambda_j \sigma^2
	\\
	=&
	\sum_{j=1}^d \lambda_j \left(w_{0,j} - w_{*,j}\right)^2\prod_{k=0}^T (1 - \eta_k \lambda_j)^2 
	+
	\sigma^2 \sum_{j=1}^d \lambda_j^2 \sum_{\tau = 0}^T \eta_\tau^2 \prod_{k=\tau+1}^T (1 - \eta_k \lambda_j)^2.
\end{align*}

Here the second equality is entailed by the fact that $H$ and $P_t$ are diagonal, and the third equality comes from $\EE_\xi\left[n_t n_t^\top\right] = \sigma^2 H$. Thus, by denoting $b_j \triangleq \lambda_j \left(w_{0,j} - w_{*,j}\right)^2\prod_{k=0}^T (1 - \eta_k \lambda_j)^2$ and $v_j \triangleq \sum_{\tau = 0}^T \eta_\tau^2 \prod_{k=\tau+1}^T (1 - \eta_k \lambda_j)^2$, we have,
\begin{equation}
\label{eq:steplower_dim}
\begin{aligned}
  \EE\left[f(w_{T+1} - f(w_*)\right]
  &=
  \frac{1}{2}\EE\left[(w_{T+1} - w_*)^\top H (w_{T+1} - w_*)\right]
  \\
  &=
  \frac{1}{2} \left [
    \left(\sum_{j=1}^d b_j\right)
    +
    \left(\sigma^2 \sum_{j=1}^d \lambda_j^2 v_j\right) 
  \right].
\end{aligned}
\end{equation}

To proceed the analysis, we divide all eigenvalues $\{ \lambda_j \}$ into two groups:
\begin{align}
  \label{eq:step_lower_bound_gropu_division}
  \mathcal{A} = \left\{ j \middle| \lambda_j > \frac{\log T}{8 \eta_1 T} \right\},
  \quad
  \mathcal{B} = \left\{ j \middle| \lambda_j \le \frac{\log T}{8 \eta_1 T} \right\},
\end{align}
where group $\mathcal{A}$ are those large eigenvalues that the variance term $v_j$ will finally dominate, and group $\mathcal{B}$ are those small eigenvalues that the bias term $b_j$ will finally dominate. Rigorously speaking,

\textbf{a) For $\forall j \in \mathcal{A}$:}

Step decay's bottleneck in variance term actually occurs at the first interval $\ell$ that satisfies
\begin{align}
  \label{eq:steplower_first_interval_1}
  2^\ell \ge \lambda_j \eta_1 \cdot \frac{8T}{\log T}
\end{align}

We first show that interval $\ell$ is well-defined for any dimension $j \in \mathcal{A}$. Since $j \in \mathcal{A}$, it follows from the definition of $\mathcal{A}$ in Eqn.~\eqref{eq:step_lower_bound_gropu_division},
\begin{align*}
  &\lambda_j > \frac{\log T}{8 \eta_1 T}
  \quad \Longrightarrow \quad
  \lambda_j \eta_1 \cdot \frac{8T}{\log T} > 1 = 2^0
\end{align*}

On the other hand, since we assume $T / \log T \ge 2^{16}$ in Eqn.~\eqref{eq:step_decay_lower_bound_T_requirement}, which implies $T \ge 2^{16} \Rightarrow \log T \ge 16$, it follows
\begin{align*}
  \lambda_j \eta_1 \cdot \frac{8T}{\log T}
  \le
  \lambda_j \eta_1 \cdot \frac{T}{2}
  \le
  \frac{\lambda_j}{L} \cdot \frac{T}{2}
  \le
  \frac{T}{2}
  = 2^{\log T - 1},
\end{align*}

where the second inequality comes from $\eta_1 \le 1/L$ in assumption (1), and the third inequality is entailed by $\lambda_j \le L$ given the definition of $L$ in Eqn.~\eqref{eq:nt_labmda}.

As a result, we have
\begin{align*}
  \lambda_j \eta_1 \cdot \frac{8T}{\log T} \in 
  \left( 2^0, 2^{\log T - 1} \right]
\end{align*}
thus
\begin{align*}
  2^\ell \ge \lambda_j \eta_1 \cdot \frac{8T}{\log T}
\end{align*}
will guaranteed be satisified for some interval $\ell = 1, \dots, \log T - 1$. Since interval $\ell$ is the first interval satisifies Eqn.~\eqref{eq:steplower_first_interval_1}, we also have
\begin{align}
  \label{eq:steplower_first_interval_2}
  &2^{\ell-1}
  < \lambda_j \eta_1 \cdot \frac{8T}{\log T}
  \quad \Longrightarrow \quad
  2^{\ell}
  < \lambda_j \eta_1 \cdot \frac{16T}{\log T}
\end{align}

Back to our analysis for the lower bound, by focusing on the variance produced by interval $\ell$ only, we have,

\begingroup
\allowdisplaybreaks
\begin{align*}
  v_j
  =& \sum_{\tau = 0}^T \eta_\tau^2 \prod_{k=\tau+1}^T (1 - \eta_k \lambda_j)^2
  \ge \sum_{\tau=\ell \cdot \frac{T}{\log T} + 1}^{(\ell + 1) \cdot \frac{T}{\log T}} \eta_\tau^2 \prod_{k=\tau+1}^T (1 - \eta_k \lambda_j)^2
  \\
  \ge& \sum_{\tau=\ell \cdot \frac{T}{\log T} + 1}^{(\ell + 1) \cdot \frac{T}{\log T}} \eta_\tau^2 \prod_{k= \ell \cdot \frac{T}{\log T} + 1}^T (1 - \eta_k \lambda_j)^2
  = \sum_{\tau= \ell \cdot \frac{T}{\log T} + 1}^{(\ell + 1) \cdot \frac{T}{\log T}}
  \left(\frac{\eta_1}{2^\ell}\right)^2 \prod_{k=\ell \cdot \frac{T}{\log T} + 1}^T (1 - \eta_k \lambda_j)^2
  \\
  =&
  \frac{T}{\log T} \cdot \left(\frac{\eta_1}{ 2^\ell}\right)^2 \prod_{k=\ell \cdot \frac{T}{\log T} + 1}^T (1 - \eta_k \lambda_j)^2
  \\
  \overset{\eqref{eq:steplower_first_interval_2}}{>}&
  \frac{T}{\log T} \cdot \left(\frac{\eta_1}{\lambda_j \eta_1 \cdot \frac{16T}{\log T}}\right)^2 \prod_{k=\ell \cdot \frac{T}{\log T} + 1}^T (1 - \eta_k \lambda_j)^2
  \\
  =&
  \frac{1}{256} \cdot \frac{\log T}{T} \cdot \frac{1}{\lambda_j^2} \cdot \prod_{k=\ell \cdot \frac{T}{\log T} + 1}^T (1 - \eta_k \lambda_j)^2
  \\
  \ge&
  \frac{1}{256} \cdot \frac{\log T}{T} \cdot \frac{1}{\lambda_j^2} \cdot \left(1 - \sum_{k=\ell \cdot \frac{T}{\log T} + 1}^T 2 \eta_k \lambda_j\right)
  =
  \frac{1}{256} \cdot \frac{\log T}{T} \cdot \frac{1}{\lambda_j^2} \cdot \left(1 - 2 \lambda_j \sum_{k=\ell \cdot \frac{T}{\log T} + 1}^T \eta_k \right)
  \\
  =&
  \frac{1}{256} \cdot \frac{\log T}{T} \cdot \frac{1}{\lambda_j^2} \cdot \left(1 - 2 \lambda_j \sum_{i=\ell}^{\log T - 1} \sum_{k=i \cdot \frac{T}{\log T} + 1}^{(i + 1) \cdot \frac{T}{\log T}} \eta_k\right)
  \\
  =&
  \frac{1}{256} \cdot \frac{\log T}{T} \cdot \frac{1}{\lambda_j^2} \cdot \left(1  - 2 \lambda_j \sum_{i=\ell}^{\log T - 1} \sum_{k=i \cdot \frac{T}{\log T} + 1}^{(i + 1) \cdot \frac{T}{\log T}} \frac{\eta_1}{2^i} \right)
  \\
  =&
  \frac{1}{256} \cdot \frac{\log T}{T} \cdot \frac{1}{\lambda_j^2} \cdot \left(1 - 2 \lambda_j \sum_{i=\ell}^{\log T - 1} \frac{T}{\log T} \cdot \frac{\eta_1}{2^i} \right)
  \\
  \ge&
  \frac{1}{256} \cdot \frac{\log T}{T} \cdot \frac{1}{\lambda_j^2} \cdot \left(1 - 2 \lambda_j \cdot \frac{T}{\log T} \cdot \frac{\eta_1}{2^{\ell-1}} \right)
  \\
  \overset{\eqref{eq:steplower_first_interval_1}}{\ge}&
  \frac{1}{256} \cdot \frac{\log T}{T} \cdot \frac{1}{\lambda_j^2} \cdot \left(1 - 4 \lambda_j \cdot \frac{T}{\log T} \cdot \frac{\eta_1}{\lambda_j \eta_1 \cdot \frac{8T}{\log T}} \right)
  \\
  =&
  \frac{1}{256} \cdot \frac{\log T}{T} \cdot \frac{1}{\lambda_j^2} \cdot \frac{1}{2}
  \\
  =&
  \frac{1}{512} \cdot \frac{\log T}{T} \cdot \frac{1}{\lambda_j^2}
\end{align*}
\endgroup

Here the first inequality is obtained by focusing variance generated in interval $\ell$ only. The second inequality utilizes $\tau \ge \ell \cdot T / \log T$. The fourth inequality is entailed by $(1 - a_1)(1 - a_2) = 1 - a_1 - a_2 + a_1 a_2 \ge 1 - a_1 - a_2$ for $\forall a_1, a_2 \in [0, 1]$, where by mathematical induction, we can extend this inequality for more terms $\prod_{i=1}^n (1 - a_i) \ge 1 - \sum_{i=1}^n a_i$ as long as $\sum_{i=1}^n a_i \le 1$. The fifth inequality comes from $\sum_{i=\ell}^{\log T - 1} 1 / 2^i \le \sum_{i=\ell}^{\infty} 1 / 2^i = 1 / 2^{\ell - 1}$.

\textbf{b) For $\forall j \in \mathcal{B}$:}

Step decay's bottleneck will occur in the bias term. Since $j \in \mathcal{B}$, it follows from the definition of $\mathcal{B}$ in Eqn.\eqref{eq:step_lower_bound_gropu_division},
\begin{align*}
  \lambda_j \le \frac{\log T}{8 \eta_1 T}
  \quad \Longrightarrow \quad
  \eta_1 \lambda_j \le \frac{\log T}{8 T},
\end{align*}
we have
\begingroup
\allowdisplaybreaks
\begin{align*}
  b_j
  =&
  \lambda_j \left(w_{0,j} - w_{*,j}\right)^2\prod_{k=0}^T (1 - \eta_k \lambda_j)^2
  \\
  \ge&
  \lambda_j \left(w_{0,j} - w_{*,j}\right)^2 \cdot \left(1 - \sum_{k=0}^T 2 \eta_k \lambda_j \right)
  =
  \lambda_j \left(w_{0,j} - w_{*,j}\right)^2 \cdot \left(1 - \sum_{k=1}^T 2 \eta_k \lambda_j \right)
  \\
  =&
  \lambda_j \left(w_{0,j} - w_{*,j}\right)^2 \cdot \left(1 - \sum_{i=0}^{\log T -1 } \sum_{k=i \cdot \frac{T}{\log T} + 1}^{(i+1) \cdot \frac{T}{\log T}} 2 \eta_k \lambda_j \right)
  \\
  =&
  \lambda_j \left(w_{0,j} - w_{*,j}\right)^2 \cdot \left(1 - \sum_{i=0}^{\log T -1 } \sum_{k=i \cdot \frac{T}{\log T} + 1}^{(i+1) \cdot \frac{T}{\log T}} \frac{\eta_1 \lambda_j}{2^{i-1}}  \right)
  \\
  =&
  \lambda_j \left(w_{0,j} - w_{*,j}\right)^2 \cdot \left(1 - \eta_1 \lambda_j \sum_{i=0}^{\log T -1 } \frac{T}{\log T} \cdot \frac{1}{2^{i-1}} \right)
  \\
  \ge&
  \lambda_j \left(w_{0,j} - w_{*,j}\right)^2 \cdot \left(1 - 4 \eta_1 \lambda_j \cdot \frac{T}{\log T} \right)
  \\
  \ge&
  \lambda_j \left(w_{0,j} - w_{*,j}\right)^2 \cdot \left(1 - 4 \cdot \frac{\log T}{8 T} \cdot \frac{T}{\log T} \right)
  \\
  =&
   \lambda_j \left(w_{0,j} - w_{*,j}\right)^2 \cdot \frac{1}{2},
\end{align*}
\endgroup
where the first inequality is caused by $(1 - a_1)(1 - a_2) = 1 - a_1 - a_2 + a_1 a_2 \ge 1 - a_1 - a_2$ for $\forall a_1, a_2 \in [0, 1]$ and applying mathematical induction for $\{ a_n \}$ to obtain $\prod_{i=1}^n (1 - a_i) \ge 1 - \sum_{i=1}^n a_i$ as long as $\sum_{i=1}^n a_i \le 1$. The second equality is because $\eta_0 = 0$. The second inequality comes from $\sum_{i=0}^{\log T - 1} 1 / 2^{i-1} \le \sum_{i=0}^{\infty} 1 / 2^{i-1} = 4$. The last inequality follows $\eta_1 \lambda_j \le \log T / (8T)$.

From assumption (3), we know $\lambda_j \left(w_{0,j} - w_{*,j}\right)^2 > 0$. Furthermore, as we require
\begin{align*}
  \frac{T}{\log T}
  \ge \frac{1}{256} \cdot \frac{\sigma^2}{\min_j \lambda_j \left(w_{0,j} - w_{*,j}\right)^2}
\end{align*}  in Eqn.~\eqref{eq:step_decay_lower_bound_T_requirement},
\begin{align*}
  b_j
  \ge&
  \lambda_j \left(w_{0,j} - w_{*,j}\right)^2 \cdot \frac{1}{2}
  \ge
  \min_j \lambda_j \left(w_{0,j} - w_{*,j}\right)^2 \cdot \frac{1}{2}
  \ge
  \frac{1}{512} \cdot \frac{\sigma^2}{T} \cdot \log T.
\end{align*}

In sum, we have obtained
\begin{align*}
  \forall j \in \mathcal{A},
  \quad
  &v_j \ge \frac{1}{512} \cdot \frac{\log T}{T} \cdot \frac{1}{\lambda_j^2}
  \\
  \forall j \in \mathcal{B},
  \quad
  &b_j \ge \frac{1}{512} \cdot \frac{\sigma^2}{T} \cdot \log T
\end{align*}

By combining with Eqn.~\eqref{eq:steplower_dim}, we have
\begin{align*}
  \EE\left[f(w_{T+1} - f(w_*)\right]
  =&
  \frac{1}{2} \left[  
    \left(\sum_{j=1}^d b_j\right)
    +
    \left(\sigma^2 \sum_{j=1}^d \lambda_j^2 v_j\right)
  \right]
  \\
  \ge&
  \frac{1}{2} \left[  
    \left(\sum_{j \in \mathcal{B}} b_j\right)
    +
    \left(\sigma^2 \sum_{j \in \mathcal{A}} \lambda_j^2 v_j\right)
  \right]
  \\
  \ge&
  |\mathcal{B}| \cdot \left(\frac{1}{1024} \cdot \frac{\sigma^2}{T} \cdot \log T\right)
  +
  \sum_{j \in \mathcal{A}} \sigma^2 \cdot \lambda_j^2 \cdot \frac{1}{1024} \cdot \frac{\log T}{T} \cdot \frac{1}{\lambda_j^2}
  \\
  =&
  |\mathcal{B}| \cdot \left(\frac{1}{1024} \cdot \frac{\sigma^2}{T} \cdot \log T\right)
  +
  |\mathcal{A}| \cdot \left(\frac{1}{1024} \cdot \frac{\sigma^2}{T} \cdot \log T\right)
  \\
  =&
  \left(|\mathcal{A}| + |\mathcal{B}|\right) \cdot \left(\frac{1}{1024} \cdot \frac{\sigma^2}{T} \cdot \log T\right)
  \\
  =& d \cdot \frac{1}{1024} \cdot \frac{\sigma^2}{T} \cdot \log T
  \\
  =& \Omega\left( \frac{d\sigma^2}{T} \cdot \log T \right),
\end{align*}
where the first inequality is because both the bias and variance terms are non-negative, given $b_j = \lambda_j \left(w_{0,j} - w_{*,j}\right)^2\prod_{k=0}^T (1 - \eta_k \lambda_j)^2 \ge 0$ and $v_j = \sum_{\tau = 0}^T \eta_\tau^2 \prod_{k=\tau+1}^T (1 - \eta_k \lambda_j)^2 \ge 0$.

\end{proof}

\begin{remark}
The requirement $T/\log T \ge 1/256 \cdot \sigma^2 / \left( \min_j \lambda_j \left(w_{0,j} - w_{*,j}\right)^2 \right)$ and assumption $\lambda_j \left(w_{0,j} - w_{*,j}\right)^2 \neq 0$ for $\forall j = 1, 2, \dots, d$ can be replaced with $T/\log T > 1 / (8\eta_1 \mu)$, since in that case $j \in \mathcal{A}$ holds for $\forall j=1,2,\dots,d$ and $\mathcal{B} = \emptyset$. In particular, if $\eta_1 = 1/L$, this requirement on $T$ becomes $T / \log T \ge \kappa/8$.
\end{remark}

\subsection{The Reason of Using Assumption~\eqref{eq:var_ass} }

\label{appendix:why_use_var_ass}

In all of our analysis, we employ assumption \eqref{eq:var_ass}
\begin{align*}
	\EE_{\xi} \left[n_t n_t^\top \right] \preceq \sigma^2 H \quad  \mbox{ where } n_t = Hw_t - b - \left(H(\xi)w_t - b(\xi)\right)
\end{align*}



which is the same as the one in Appendix C, Theorem 13 of~\citet{ge2019step}. This key theorem is the major difference between our work and~\citet{ge2019step}, which directly entails its main theorem by instantiating  $\sigma$ with specific values in its assumptions.

On the other hand, it is possible to use the assumptions in ~\citet{ge2019step, bach2013non, jain2016parallelizing} instead of our assumption~\eqref{eq:var_ass} for least square regression:
\begin{align}
    &\min_w f(w) \quad \mbox{ where } f(w) \triangleq \frac{1}{2} \EE_{(x, y)\sim \mathcal{D}} \left[(y - w^\top x)^2\right]
    \\
    &y = w_*^\top x + \epsilon \mbox{ with $\epsilon$ satisfying } \EE_{(x, y)\sim \mathcal{D}}\left[ \epsilon^2 x x^\top\right] \preceq \sigma^2 H \mbox{ for } \forall (x, y) \sim \mathcal{D}
    \\
    &\EE\left[||x||^2 xx^\top\right] \preceq R^2 H
\end{align}

By combining our Lemma~\ref{lem:inv_p} and assumption~\eqref{eq:var_ass} with Lemma 5, Lemma 8 and Lemma 9 in~\citet{ge2019step}, one can obtain similar results in this paper with their assumptions. For simplicity, we just use assumption~\eqref{eq:var_ass} here.

\section{Relationship with (Stochastic) Newton's Method}
\label{appendix:newton_relation}

Our motivation in Proposition~\ref{prop:sgd_opt_coord} shares a similar idea with (stochastic) Newton's method on quadratic objectives
\begin{align*}
  w_{t+1}
  =& w_t - \eta_t H^{-1} \nabla f(w_t,\xi),
\end{align*}
where the parameters are also updated coordinately in the ``rotated space'', i.e. given $H = U \Lambda U^{\top}$ and $w' = U^\top w$. In particular, when the Hessian $H$ is diagonal and $\eta_t = 1/(t+1)$, the update formula is exactly the same as the one for Proposition~\ref{prop:sgd_opt_coord}.

Despite of this similarity, our method differ from Newton method's and its practical variants in several aspects. First of all, our method focuses on learning rate schedulers and is a first-order method. This property is especially salient  when we consider \lrss's derivatives in Section~\ref{sec:eigencurve_derivatives}: only hyperparameter search is needed, just like other common learning rate schedulers. In addition, most second-order methods, e.g. ~\citet{schraudolph2002fast, NIPS2015_404dcc91, pmlr-v48-grosse16, byrd2016stochastic, pmlr-v70-botev17a, huang2020span, Yang_2021_CVPR}, approximates the Hessian matrix or the Hessian inverse and exploits the curvature information, while \lrs only utilizes the rough estimation of the Hessian spectrum. On top of that, this estimation is only an one-time effect and can be even further removed for similar models. These key differences highlight \lrss's advantages over most second-order methods in practice.

\end{document}